\newtheorem{theorem}{Theorem}
\newtheorem{corollary}{Corollary}
\def    \be            {\begin{equation}}
\def    \ee            {\end{equation}}
\def    \bea           {\begin{eqnarray}}
\def    \eea           {\end{eqnarray}}
\DeclareMathOperator*{\argmax}{arg\,max}
\DeclareMathOperator*{\argmin}{arg\,min}
\newcommand{%
\immediate\write18{texcount -1 -sum -merge .tex > -words}%
 \input{-words}words%
}[1]{%
\immediate\write18{texcount -1 -sum -merge #1.tex > #1-words}%
 \input{#1-words}words%
}
\newcommand{\subsubsubsection}[1]{\paragraph{#1}\mbox{}\\}
\newcommand*{\addFileDependency}[1]{
  \typeout{(#1)}
  \@addtofilelist{#1}
  \IfFileExists{#1}{}{\typeout{No file #1.}}
}
\title{Complex behavior from intrinsic motivation to occupy future action-state path space
} 
\author[1]{Jorge Ramírez-Ruiz}
\author[1]{Dmytro Grytskyy}
\author[1]{Chiara Mastrogiuseppe}
\author[1]{Yamen Habib}
\author[1,2]{Rubén Moreno-Bote}
\affil[1]{Center for Brain and Cognition, and Department of Information and Communication Technologies, Universitat Pompeu Fabra, Barcelona, Spain 08005}
\affil[2]{Serra Húnter Fellow Programme, Universitat Pompeu Fabra, Barcelona, Spain}
\date{\today}
\begin{document}
   	
\maketitle

\begin{abstract}
Most theories of behavior posit that agents tend to maximize some form of reward or utility. 
However, animals very often move with curiosity and seem to be motivated in a reward-free manner. 
Here we abandon the idea of reward maximization, and propose that the goal of behavior is maximizing occupancy of future paths of actions and states. According to this maximum occupancy principle, rewards are the means to occupy path space, not the goal per se; goal-directedness simply emerges as rational ways of searching for resources so that movement, understood amply, never ends. 
We find that action-state path entropy is the only measure consistent with additivity and other intuitive properties of expected future action-state path occupancy. 
We provide analytical expressions that relate the optimal policy and state-value function, and prove convergence of our value iteration algorithm. 
Using discrete and continuous state tasks, including a high--dimensional controller, we show that complex behaviors such as `dancing', hide-and-seek and a basic form of altruistic behavior naturally result from the intrinsic motivation to occupy path space. 
All in all, we present a theory of behavior that generates both variability and goal-directedness in the absence of reward maximization.
\end{abstract}

\section{Introduction}

Natural agents are endowed with a tendency to move, explore and interact with their environment \cite{ryan_intrinsic_2000,oudeyer_intrinsic_2007}.
For instance, human newborns unintentionally move their body parts \cite{adolph_motor_2007}, and 7 to 12-months infants spontaneously babble vocally \cite{macneilage_origin_2000} and with their hands
\cite{petitto_babbling_1991}.
Exploration and curiosity are major drives for learning and discovery through information-seeking
\cite{dietrich_cognitive_2004,kidd_psychology_2015,gottlieb_information-seeking_2013}.
These behaviors seem to elude a simple explanation in terms of extrinsic reward maximization.
However, these intrinsic motivations push agents to visit new states by performing novel courses of action, which helps learning and the discovery of even larger rewards in the long run
\cite{gittins_multi-armed_2011,averbeck_theory_2015}. Therefore, it has been argued that exploration and curiosity could arise as a consequence of seeking extrinsic reward maximization by endowing agents with the necessary inductive biases to learn in complex and ever-changing natural environments \cite{doll_ubiquity_2012,wang_latent_2021}. 

While most theories of rational behavior do posit that agents are reward maximizers \cite{von_neumann_theory_2007,sutton_introduction_1998,kahneman_prospect_2013,silver_reward_2021}, very few of us would agree that the sole goal of living agents is maximizing money gains or food intake.
Indeed, expressing excessive emphasis on these types of goals is usually seen as a sign of psychological disorders 
\cite{rash_review_2016,agh_systematic_2016}. 
Further, setting a reward function by design as the goal of artificial agents is, more often than not, arbitrary
\cite{sutton_introduction_1998,mcnamara_common_1986,klyubin_empowerment_2005,lehman_abandoning_2011}, leading to the recurrent problem faced by theories of reward maximization of defining what rewards are
\cite{singh_where_2009,zhang_endotaxis_2021,schmidhuber_possibility_1991,hadfield-menell_inverse_2017,eysenbach_diversity_2018}. 
In some cases, like in artificial games, rewards can be unambiguously defined, such as number of collected points or wins
\cite{schrittwieser_mastering_2020}. 
However, in most situations defining rewards is task-dependent, non-trivial and problematic.
For instance, a vacuum cleaner robot could be designed to either maximize the weight or volume of dust collected, energy efficiency, or a weighted combination of them \cite{asafa_development_2018}.
In more complex cases, companies can aim at maximizing profit, but without a suitable innovation policy profit maximization can be self-defeating \cite{kline_overview_2010}.

Here, we abandon the idea that the goal is maximizing extrinsic rewards and that movement over space is a means to achieve this goal. Instead, we adopt the opposite view, inspired by the nature of our intrinsic drives: we propose that the objective {\em is} to maximally occupy action-state path space, understood in a broad sense, in the long term. We call this principle the maximum occupancy principle (MOP), which posits that the goal of agents is to generate all sort of behaviors and occupy, on average, as much space (action-state paths) as possible in the future.
According to MOP, extrinsic rewards serve to obtain the energy necessary to move in order to occupy action-state space, they are not the goals per se. The usual exploration--exploitation tradeoff \cite{wilson_balancing_2021} therefore disappears: agents that seek to occupy space ``solve'' this issue naturally because they care about rewards only as means to an end. 
Furthermore, in this sense, surviving is only preferred because it is needed to keep visiting action-state space.
Our theory
provides a rational account of exploratory and curiosity-driven behavior where the problem of defining a reward function vanishes, and captures the variability of behavior
\cite{moreno-bote_bayesian_2011,recanatesi_metastable_2022,corver_distinct_2021,dagenais_elephants_2021,mochol_prefrontal_2021,cazettes_reservoir_2021} by taking it as a principle.

In this work, we model a MOP agent interacting with the environment as a Markov decision process (MDP) where the intrinsic, immediate reward is the occupancy of the next action-state visited, which is largest when performing an uncommon action and visiting a rare state --there are no extrinsic rewards (i.e., no task is defined) that drive the agent.
We show that (weighted) action-state path entropy is the only measure of occupancy consistent with additivity per time step, positivity and smoothness. 
Due to the additivity property, the value of being in a state, defined as the expected future time-discounted action-state path entropy, can be written in the form of a Bellman equation, which has a unique solution that can be found with an iterative map. Following this entropy objective leads to agents that seek variability, while being sensitive to the constraints imposed by the agent-environment interaction on the future path availability. We demonstrate in various simulated experiments with discrete and continuous state and action spaces that MOP generates complex behaviors that, to the human eye, look genuinely goal-directed and playful, such as hide-and-seek in a prey-predator problem, dancing of a cartpole, a basic form of altruism in an agent-and-pet example, and rich behaviors in a high-dimensional quadruped. 

MOP builds over an extensive literature on entropy-regularized reinforcement learning (RL) \cite{todorov_efficient_2009,ziebart_modeling_2010,haarnoja_reinforcement_2017,haarnoja_soft_2018,schulman_equivalence_2017,neu_unified_2017,hausman_learning_2018,tishby_information_2011,nachum_bridging_2017,galashov_information_2019,grytskyy_general_2023} or pure entropic objectives \cite{hazan_provably_2019, liu_behavior_2021,mutti_task-agnostic_2021,seo_state_2021,zhang_exploration_2021,amin_survey_2021}. This body of work emphasizes the regularization benefits of entropy for learning, but extrinsic rewards still serve as the major drive of behavior, and arbitrary mixtures of action-state entropy are rarely considered \cite{grytskyy_general_2023}.
Our work also relates to reward-free theories of behavior. These minimize predictions errors \cite{burda_exploration_2018,achiam_surprise-based_2017,fountas_deep_2020,burda_large-scale_2019,pathak_curiosity-driven_2017,hafner_action_2022}, seek novelty \cite{bellemare_unifying_2016,tang__2017,aubret_information-theoretic_2023}, or maximize data compression \cite{schmidhuber_driven_2009}, and therefore the major behavioral driver depends on the agent's experience with the world. On the other hand, MOP agents find the action-states that lead to high future occupancy ``interesting'', regardless of experience.
There are two other approaches that sit closer to this description, one maximizing mutual information between actions and future states (empowerment, MPOW) \cite{klyubin_empowerment_2005,jung_empowerment_2011,still_information-theoretic_2012,mohamed_variational_2015}, and the other minimizing the distance between the actual and a desired state distribution (free energy principle, FEP) \cite{friston_free_2006,buckley_free_2017}. We show that both MPOW and FEP tend to collapse to deterministic policies with little behavioral variability. 
In contrast, MOP results in lively and seemingly goal-directed behavior by taking behavioral variability and the constraints of embodied agents as principles.  

\section{Maximum occupancy principle}

\begin{figure}[t]
\center
\includegraphics[width=0.95\textwidth]{./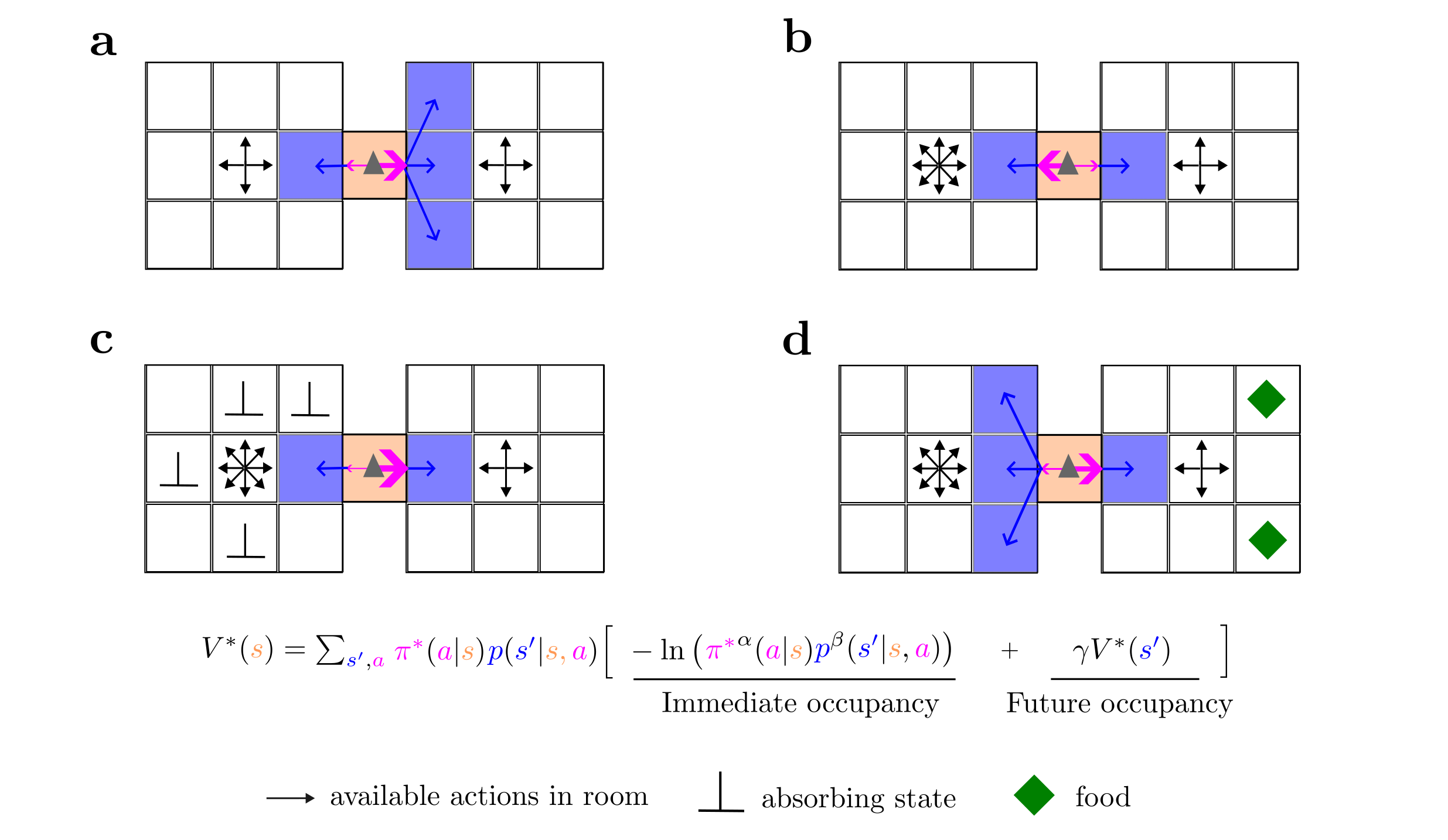}
\caption{MOP agents maximize action-state path occupancy. 
(a) A MOP agent (grey triangle) in the middle of two rooms has the choice between going left or right. When the number of actions (black arrows) in each room is the same, the agent prefers going to the room with more state transitions (blue arrows indicate random transitions after choosing moving right or moving left actions, and pink arrow width indicates the probabilities of those actions).
(b) When the states transitions are the same in the two rooms, the MOP agent prefers the room with more available actions.
(c) If there are many absorbing states in the room where many actions are available, the MOP agent avoids it.
(d) Even if there are action and state-transition incentives (in the left room), a MOP agent might prefer a region of state space where it can reliably get food (right room), ensuring occupancy of future action-state paths. See Supplemental Fig. \ref{fig:schematic_formal} for a more formal example.
}
\label{fig:schematic}
\end{figure}

\subsection{Entropy measure of path space occupancy}

We model an agent as a finite action-state MDP in discrete time. The policy $\pi$ describes the probability $\pi(a|s)$ of performing action $a$, from some set ${\cal{A}}(s)$, given that the agent is at state $s$ at some time step, and $p(s'|s,a)$ is the transition probability from $s$ to a successor state $s'$ in the next time step given that action $a$ is performed.
Starting at $t=0$ in state $s_0$, an agent performing a sequence of actions and experiencing state transitions $\tau \equiv (s_0,a_0,s_1,...,a_t,s_{t+1},...)$ gets a return defined as
\be
R(\tau) = \sum_{t=0}^{\infty} \gamma^t R(s_t,a_t) =
- \sum_{t=0}^{\infty} \gamma^t \ln  \left( \pi^{\alpha}(a_t|s_t) p^{\beta}(s_{t+1}|s_t,a_t) \right)
\;,
\label{eq_return}
\ee
\noindent
with action and state weights $\alpha>0$ and $\beta \geq 0$, respectively, and discount factor $0 < \gamma < 1$.
A larger return is obtained when, from $s_t$, a low-probability action $a_t$ is performed and followed by a low-probability transition to a state $s_{t+1}$. 
Therefore, maximizing the return in Eq. (\ref{eq_return}) favors `visiting' action-states $(a_t,s_{t+1})$ with a low transition probability.
From $s_{t+1}$, another low-probability action-state transition is preferred and so on, such that low-probability trajectories $\tau$ are more rewarding than high-probability ones.
Thus, the agent is pushed to visit action-states that are rare or `unoccupied', implementing the intuitive notion of MOP.
Due to the freedom to choose action $a_t$ given state $s_t$ and the uncertainty of the resulting next state $s_{t+1}$, apparent in Eq. (\ref{eq_return}), the term `action-states' used here is more natural than `state-actions'. 
We stress that this return is purely intrinsic, namely, there is no extrinsic reward that the agent seeks to maximize. We define intrinsic rewards as any reward signal that depends on the policy or the state transition probability, and therefore it can change with the course of learning as the policy is improved, or the environment is learnt. An extrinsic reward is the complementary set of reward signals: any function $R(s,a)$ that is both policy-independent and transition probability-independent, and therefore it does not change with the course of improving the policy or learning the state transition probability of the environment. 
 
The agent is assumed to optimize the policy $\pi$ to maximize the state-value $V_{\pi}(s)$, defined as the expected return
\be
V_{\pi}(s) 
\equiv 
\mathbb{E}_{a_t \sim \pi,s_{t+1} \sim p} [ R({\tau} ) | s_0 = s  ] 
=
\mathbb{E}_{a_t \sim \pi,s_{t+1} \sim p} \left[  
\sum_{t=0}^{\infty} \gamma^t  
\left( \alpha \mathcal{H}(A|s_t) + \beta \mathcal{H}(S'|s_t,a_t) \right)
\Big| s_0 = s  \right]
\label{eq_expected_return}
\ee
\noindent
given the initial condition $s_0=s$ and following policy $\pi$, that is,
the expectation is over the $a_t \sim \pi(\cdot|s_t)$ and $s_{t+1} \sim p(\cdot|s_t,a_t)$, $t \geq 0$. In the last identity, we have rewritten the expectations of the terms in Eq. (\ref{eq_return}) as a discounted and weighted sum of action and successor state conditional entropies $\mathcal{H}(A|s) = - \sum_a \pi(a|s) \ln \pi(a|s)$ and $\mathcal{H}(S'|s,a) = - \sum_{s'}  p(s'|s,a) \ln p(s'|s,a)$, respectively, averaged over previous states and actions.

We define a MOP agent as the one that optimizes the policy to maximize the state-value in Eq. (\ref{eq_expected_return}). The entropy representation in Eq. (\ref{eq_expected_return}) of MOP has several implications. First, agents prefer regions of state space that lead to a large number of successor states (Fig. \ref{fig:schematic}a) or larger number of actions (Fig. \ref{fig:schematic}b).
Second, death (absorbing) states where only one action-state (i.e., ``stay'') is available forever are naturally avoided by a MOP agent, as they promise zero future action and state entropy (Fig. \ref{fig:schematic}c). Therefore, our framework implicitly incorporates a survival instinct.
Finally, regions of state space where there are ``rewarding'' states that increase the capacity of the agent to visit further action-states (such as filling an energy reservoir) are more frequently visited than others (Fig. \ref{fig:schematic}d).

We found that maximizing the discounted action-state path entropy in Eq. (\ref{eq_expected_return}) is the only reasonable way of formalizing MOP, as it is the only measure of action-state path occupancy in Markov chains consistent with the following intuitive conditions (Supplemental Sec. \ref{sec:entropy-measures}): 
if a path $\tau$ has probability $p$, visiting it results in an occupancy gain $C(p)$ that (i) decreases with $p$ and (ii) is first-order differentiable. Condition (i) implies that visiting a low probability path increases occupancy more than visiting a high probability path, and our agents should tend to occupy `unoccupied' path space; condition (ii) requires that the measure should be smooth. We also ask that (iii) the occupancy of paths, defined as the expectation of occupancy gains over paths given a policy,
is the sum of the expected occupancies of their subpaths (additivity condition). This last condition implies that agents can accumulate occupancy over time by keeping visiting low-probability action-states, but the accumulation should be consistent with the Markov property of the decision process. These conditions are similar but not exactly the same as Shannon's information measure \cite{shannon_mathematical_1948} (Supplemental Sec. \ref{sec:entropy-measures}).

\subsection{Optimal policy and state-value function}

The state-value $V_{\pi}(s)$ in Eq. (\ref{eq_expected_return}) can be recursively written using the values of successor states through the standard Bellman equation
\bea
\nonumber
V_{\pi}(s) 
&=& 
 \alpha \mathcal{H}(A|s) + \beta \sum_a \pi(a|s) \mathcal{H}(S'|s,a)
+
\gamma \sum_{a,s'} \pi(a|s) p(s'|s,a) V_{\pi}(s')
\\
& =& \sum_{a,s'} \pi(a|s) p(s'|s,a) 
\left( -\alpha \ln \pi(a|s) - \beta \ln p(s'|s,a) + \gamma V_{\pi}(s')
\right)
,
\label{eq_expected_value}
\eea
\noindent
where the sum is over the available actions $a$ from state $s$, ${\cal{A}}(s)$, and over the successor states $s'$ given the performed action at state $s$. The optimal policy $\pi^*$ that maximizes the state-value is defined as $\pi^*= \argmax_{\pi}  V_{\pi}$ and the optimal state-value is 
\be 
V^*(s) = \max_{\pi} V_{\pi}(s),
\label{eq_bellman_opt_eq}
\ee
where the maximization is with respect to the $\{\pi(\cdot|\cdot)\} $ for all actions and states.
To obtain the optimal policy, we first determine the critical points of the expected return $V_{\pi}(s)$ in Eq. (\ref{eq_expected_value}) using Lagrange multipliers (Supplemental Sec. \ref{Sec:critical-policies}). 
The optimal state-value $V^*(s)$ is found to obey the non-linear self-consistency set of equations
\be
V^*(s) = \alpha \ln Z(s) = \alpha \ln	\left[
\sum_{a} \exp 
\left(
\alpha^{-1} \beta \mathcal{H}(S'|s,a)
+ \alpha^{-1} \gamma \sum_{s'} p(s'|s,a)  V^*(s')
\right)
\right]
,
\label{eq_v_opt_m}
\ee
\noindent
where $Z(s)$ is the partition function, defined by substitution, and the critical policy satisfies
\be
\pi^*(a|s) = \frac{1}{Z(s)} \exp 
\left(
\alpha^{-1} \beta \mathcal{H}(S'|s,a)
+ \alpha^{-1} \gamma \sum_{s'} p(s'|s,a)  V^*(s')
\right)
.
\label{eq_pi_opt_m}
\ee
\noindent
We find that the solution to the non-linear system of Eqs. (\ref{eq_v_opt_m}) is unique and, moreover, 
the unique solution is the absolute maximum of the state-values over all policies (Supplemental Sec. \ref{Sec:unicity}).

To determine the actual value function from such non-linear set of equations, we derive an iterative map, a form of value iteration that exactly incorporates the optimal policy at every step. Defining $z_i=\exp( \alpha^{-1} \gamma V(s_i) )$, $p_{ijk}=p(s_j|s_i,a_k)$ and $\mathcal{H}_{ik} = \alpha^{-1} \beta \mathcal{H}(S'|s_i,a_k)$, Eq. (\ref{eq_v_opt_m}) can be turned into the iterative map 
\be
z_i^{(n+1)} = 
\left(
\sum_k w_{ik} e^{\mathcal{H}_{ik}} 
\prod_j \left(z_j^{(n)}\right)^{p_{ijk}}	
\right)^{\gamma} 
\label{eq_z_map_m}
\ee
\noindent
for $n \geq 0$ and with initial conditions $z_i^{(0)} > 0$. 
Here, the matrix with coefficients $w_{ik} \in \{0,1\}$ indicate whether action $a_k$ is available at state $s_i$ ($w_{ik} = 1$) or not ($w_{ik}=0$), and $j$ extends over all states, with the understanding that if a state $s_j$ is not a possible successor from state $s_i$ after performing action $a_k$ then $p_{ijk}=0$. 
We find that the infinite series $z_i^{(n)}$ defined in Eq. (\ref{eq_z_map_m}) converges to a finite limit $z_i^{(n)} \rightarrow z_i^{\infty}$ regardless of the initial condition in the positive first orthant, and that $V^*(s_i) = \alpha \gamma^{-1} \ln z_i^{\infty} $ is the optimal state-value function, which solves Eq. (\ref{eq_v_opt_m}) (Supplemental Sec. \ref{Sec:unicity}). 
Iterative maps similar to Eq. (\ref{eq_z_map_m}) have been studied before \cite{todorov_efficient_2009,todorov_linearly-solvable_2006}, subsequently shown to have uniqueness \cite{rubin_trading_2012} and convergence guarantees \cite{nachum_bridging_2017,leibfried_unified_2019} in the absence of state entropy terms. A summary of results and particular examples can be found in Supplemental Sec. \ref{sec:examples}.

We note that in the definition of return in Eq. (\ref{eq_expected_return}) we could replace the absolute action entropy terms $\mathcal{H}(A|s)$ by relative entropies of the form 
$-D_{\text{KL}}(\pi(a|s)||\pi_0(a|s)) =\sum_a \pi(a|s) \ln (\pi_0(a|s) / \pi(a|s))$, 
as in KL-regularization \cite{todorov_efficient_2009,todorov_linearly-solvable_2006,schulman_equivalence_2017,galashov_information_2019}, but in the absence of any extrinsic rewards. 
In this case, one obtains an equation identical to (\ref{eq_z_map_m}) where the coefficients $w_{ik}$ are simply replaced by $\pi_0(a_k|s_i)$, one to one. This apparently minor variation undercovers a major qualitative difference between absolute and relative action entropy objectives: as $\sum_k w_{ik} \geq 1$, absolute entropy-seeking favors visiting states with a large action accessibility, that is, where the sum $\sum_k w_{ik}$ and thus the argument of Eq. (\ref{eq_z_map_m}) tends to be largest. In contrast, as $\sum_k \pi_0(a_k|s_i) = 1$, maximizing relative entropies provides no preference for states $s$ with large number of accessible actions $|\mathcal{A} (s)|$. This happens even if the default policy is uniform in the actions, as then the immediate intrinsic return becomes $-D_{\text{KL}}(\pi(a|s)||\pi_0(a|s)) = \mathcal{H}(A|s) - \ln |\mathcal{A} (s)|$, instead of $\mathcal{H}(A|s)$. The negative logarithm penalizes visiting states with large number of actions, which is the opposite goal to occupying action-state path space (see details in Supplemental Sec. \ref{sec:supplemental_KL}). 

\section{Results}

\subsection{MOP agents quickly fill physical space} 

In very simple environments with high symmetry and little constraints, like open space, maximizing path occupancy amounts to performing a random walk that chooses at every step any available action with equal probability. However, in realistic environments where space is not homogeneous, where there are energetic limitations for moving, or where there are absorbing states, a random walk is no longer optimal.
To illustrate how interesting behaviors arise from MOP in these cases, we first tested how a MOP agent moving in a 4-room and 4-food-sources environment (Fig. \ref{fig:fourrooms}a)
compares in occupying physical space to a random walker (RW) and to a reward seeking agent (R agent).
The definition of the three agents are identical in most ways. They have nine possible movement actions, including not moving; they all have an internal state corresponding to the available energy, which reduces one unit at every time step and gets increased by a fixed amount (food gain) whenever a food source is visited; and they can move as long as their energy is non-zero. 
The total state space is the Cartesian product between physical space and internal energy. The agents differ however in their objective function.
The MOP agent has a reward-free objective and implements MOP by maximizing path action entropy, Eq. (\ref{eq_expected_return}). In contrast, the R agent maximizes future discounted reward (in this case, food), 
and displays stochastic behavior through  an $\epsilon$-greedy action selection, with $\epsilon$ matched to the survival of the MOP agent (Supplemental Sec. \ref{sec:experiments} and Fig. \ref{fig:survival}a). Finally, the random walker is simply an agent that in each state takes a uniformly random action from the available actions at that state.

\begin{figure}[t]
\center
\includegraphics[width=\textwidth]{./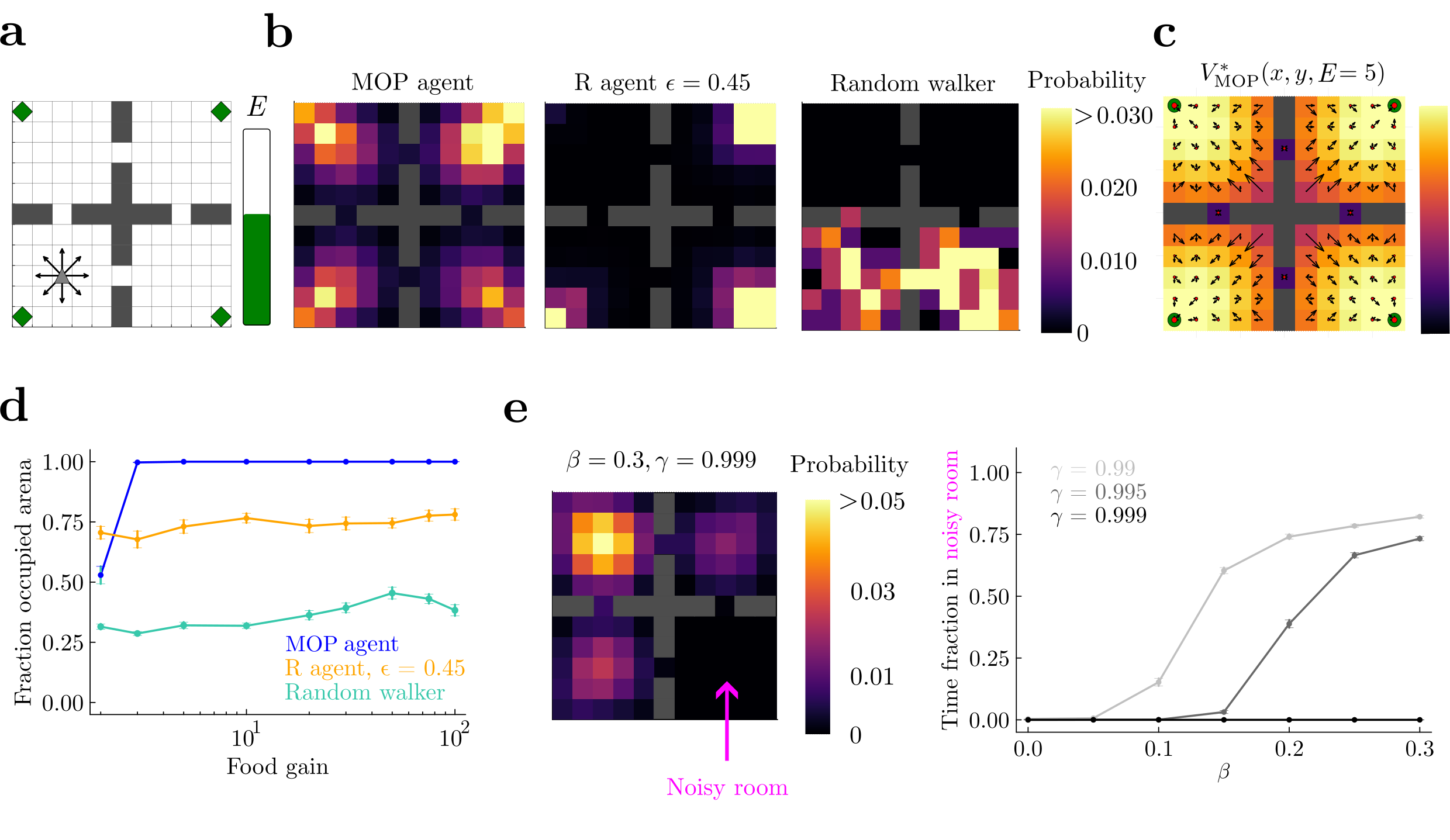}
\caption{Maximizing future path occupancy leads to high occupancy of physical space. (a) Grid-world arena. The agents have nine available actions (arrows, and staying still) when alive (internal energy $E$ larger than zero) and away from walls. There are four rooms, each with a small food source in a corner (green diamonds).  
(b) Probability of visited spatial states for a MOP agent, an $\epsilon$-greedy reward (R) agent that survives as long as the MOP agent, and a random walker. Food gain $=10$ units, maximum reservoir energy $=100$, episodes of $5\times 10^4$ time steps, and $(\alpha,\beta)=(1,0)$ for the MOP agent. All agents are initialized in the middle of the lower left room.
(c) Optimal value function $V^*(s)$ over locations when energy is $E = 5$. Black arrows represent the optimal policy given by Eq. \ref{eq_pi_opt_m}; their length is proportional to the probability of each action. The size of red dots is proportional to the probability of the \texttt{do nothing} action.
(d) Fraction of locations of the arena visited at least once per episode as a function of food gain. Error bars correspond to s.e.m over $50$ episodes.
(e) Noisy room problem. The bottom right room of the arena was noisy, such that agents in this room jump randomly to neighboring locations regardless of their actions. Food gain equals maximum reservoir energy $=100$. Histogram of visited locations for an episode as long as in (b) for a MOP agent with $\beta=0.3$ (left) and time fraction spent in the noisy room (right) show that MOP agents with $\beta> 0$ can either be attracted to the room or repelled depending on $\gamma$.}
\label{fig:fourrooms}
\end{figure}

We find that the MOP agent generates behaviors that can be dubbed goal-directed and curiosity-driven (\href{https://youtu.be/cOZEFtiU0Ig}{Video 1}). First, by storing enough energy in its reservoir, the agent reaches far, entering the four rooms in the long term (Fig. \ref{fig:fourrooms}b, left panel), and visiting every location of the arena except when food gain is small (Fig. \ref{fig:fourrooms}d, blue line). In contrast, the R agent lingers over one of the food sources for most of the time (Fig. \ref{fig:fourrooms}b, middle panel; \href{https://youtu.be/cOZEFtiU0Ig}{Video 1}). Although its $\epsilon$-greedy action selection allows for brief exploration of other rooms, the R agent does not on average visit the whole arena (Fig. \ref{fig:fourrooms}d, orange line). Finally, the random walker dies before it has time to visit a large fraction of the physical space (Fig. \ref{fig:fourrooms}b, right panel). These differences hold for a large range of food gains (Fig. \ref{fig:fourrooms}d).
The MOP agent, while designed to generate variability, is also capable of deterministic behavior: when its energy is low, it moves toward the food sources with little to no variability, a distinct mark of goal-directedness (Fig. \ref{fig:fourrooms}c, corner spots show that only one action is considered by optimal policy).

We next considered a slightly more complex environment where actions in one of the rooms lead to uniformly stochastic transitions to any of the neighboring locations (noisy room --a spatial version of the noisy TV problem \cite{schmidhuber_curious_1991,burda_large-scale_2019}). 
A MOP agent with $\beta > 0$ (see Eq. (\ref{eq_expected_return})) has a preference for stochastic state transitions, and \textit{a priori} it could get attracted and stuck in the noisy room, where actions do not have any predictable effect. 
Indeed, we see that for larger $\beta$, which measures the strength of the state entropy contribution to the agent's objective, the attraction to the noisy room increases (Fig. \ref{fig:fourrooms}e, right panel). 
However, MOP agents also care about future states, and thus getting stuck in regions where energy cannot be predictably obtained is avoided by sufficiently long-sighted agents, as shown by the reduction of the time spent in the noisy room with increasing $\gamma$ (Fig. \ref{fig:fourrooms}e; Supplemental Sec. \ref{sec:4room}).
This shows how MOP agents can tradeoff immediate with future action-state occupancy.

\subsection{Hide and seek in a prey-predator interaction} 

\begin{figure}[t]
\center
\includegraphics[width=\textwidth]{./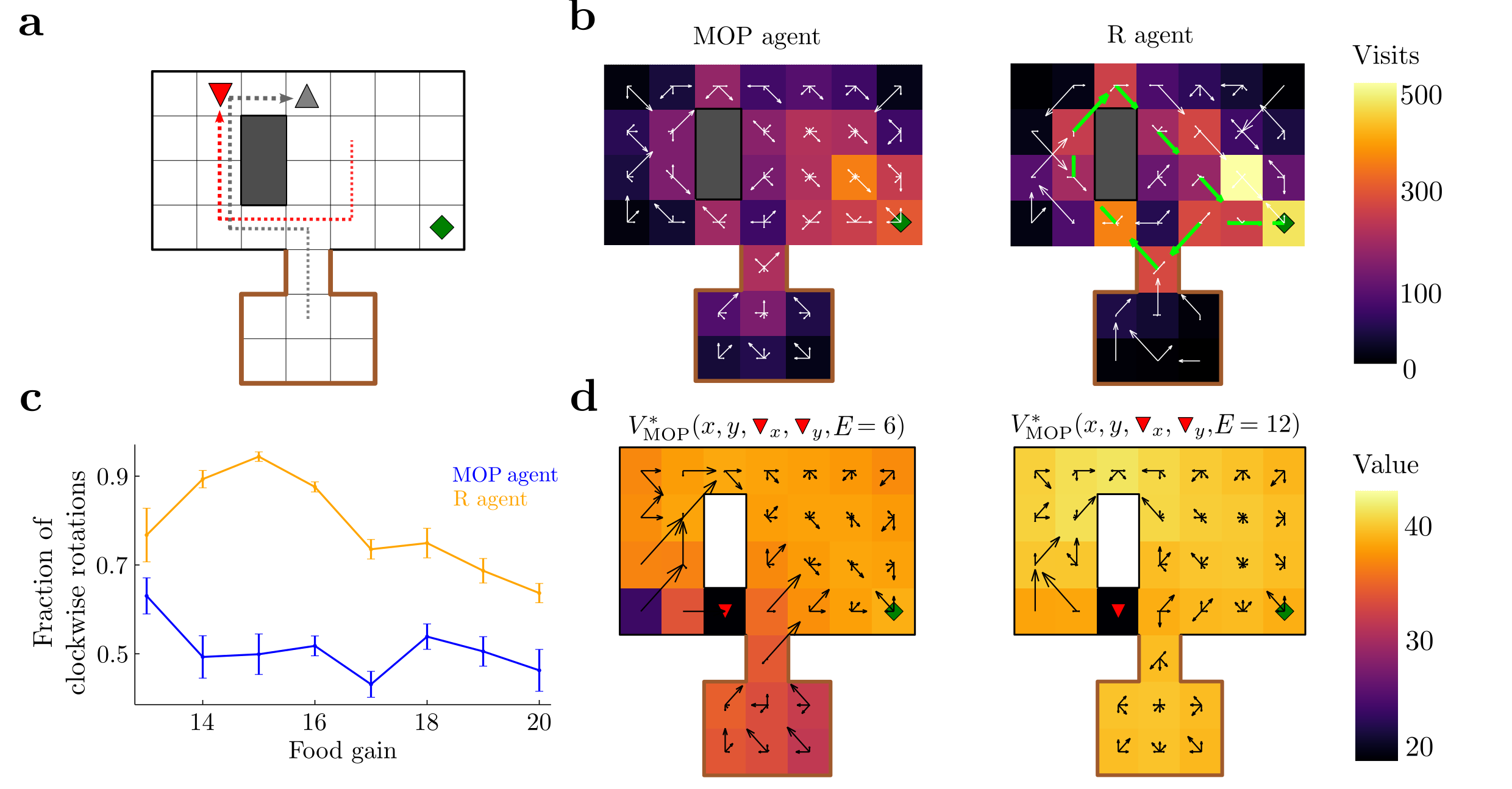}
\caption{Complex hide-and-seek and escaping strategies in a prey-predator example. 
(a) Grid-world arena. The agent has nine available actions when alive and far from walls. There is a small food source in a corner (green diamond). A predator (red, down triangle) is attracted to the agent (gray, up triangle), such that when they are at the same location, the agent dies. The predator cannot enter the locations surrounded by the brown border. Arrows show a clockwise trajectory. (b) Histogram of visited spatial states across episodes for the MOP and R agents. The vector field at each location indicates probability of transition at each location. Green arrows on R agent show major motion directions associated with its dominant clockwise rotation.
(c) Fraction of clockwise rotations (as in panel (a)) to total rotations as a function of food gain, averaged over epochs of 500 timesteps. Error bars are s.e.m. 
(d) Optimal value functions for different energy levels, and same predator position; black arrows indicate optimal policy, as in Fig. \ref{fig:fourrooms}c.}
 \label{fig:cat_mouse}
 \end{figure}

More interesting behaviors arise from MOP in increasingly complex environments. To show this, we next considered a prey and a predator in a grid world with a safe area (a ``home'') and a single food source (Fig. \ref{fig:cat_mouse}a). The prey (a ``mouse'', gray up triangle) is the agent whose behavior is optimized by maximizing future action path entropy, while the predator (a ``cat'', red down triangle) acts passively chasing the prey.
The state of the agent consists of its location and energy level, but it also includes the predator's location being accurately perceived. 
The prey can move as in the previous 4-room grid world and it also has a finite energy reservoir. For simplicity, we only considered a food gain equal to the size of the energy reservoir, such that the agent fully replenishes its reservoir each time it visits the food source.
The predator has the same available actions as the agent and is attracted to it stochastically, i.e. actions that move the predator towards the agent are more probable than those that move it away from it (Supplemental Sec. \ref{sec:details_mouse}). 

MOP generates complex behaviors, not limited to visiting the food source to increase the energy buffer and hide at home. In particular, the agent very often first teases the cat and then performs a clockwise rotation around the obstacle, which forces the cat to chase it around, leaving the food source free for harvest (Fig. \ref{fig:cat_mouse}a, arrows show an example; \href{https://youtu.be/1aaqHi8vtT4}{Video 2}, MOP agent).
Importantly, this behavior is not restricted to clockwise rotations, as the agent performs an almost equal number of counterclockwise rotations to free the food area (Fig. \ref{fig:cat_mouse}c, MOP agent, blue line).
The variability of these rotations in the MOP agent are manifest in the lack of virtually any preferred directionality of movement in the arena at any single position. Indeed, arrows pointing toward several directions indicate that on average the mouse moves following different paths to get to the food source (Fig. \ref{fig:cat_mouse}b, MOP agent). Finally, the optimal value function and optimal policy show that the MOP agent can display deterministic behaviors as a function of internal state as well as distance to the cat (Fig. \ref{fig:cat_mouse}d): for instance, it prefers running away from the cat when energy is large (right), and it risks getting caught to avoid starvation if energy is small (left), both behaviors starkly opposite to stochastic actions.

The behavior of the MOP agent was compared with an R agent that receives a reward of 1 each time it is alive and 0 otherwise.
To promote variable behavior in this agent as well, we implemented an $\epsilon$-greedy action selection (Supplemental Sec. \ref{sec:details_mouse}), where $\epsilon$ was chosen to match the average lifetime of the MOP agent (Supplemental Fig. \ref{fig:survival}b).
The behavior of the R agent was strikingly less variable than that of the MOP agent, spending more time close to the food source (Fig. \ref{fig:cat_mouse}b, R agent).
Most importantly, while the MOP agent performs an almost equal number of clock and counterclockwise rotations, the R agent strongly prefers the clockwise rotations, reaching $90\%$ of all observed rotations (\href{https://youtu.be/qp3dDjefuVI}{Video 3}, R-agent; Fig. \ref{fig:cat_mouse}c, orange line). 
This shows that the R agent mostly exploits only one strategy to survive and displays a smaller behavioral repertoire than the MOP agent.

\subsection{Dancing in an entropy-seeking cartpole}
\begin{figure}[t]
\center
\includegraphics[width=\textwidth]{./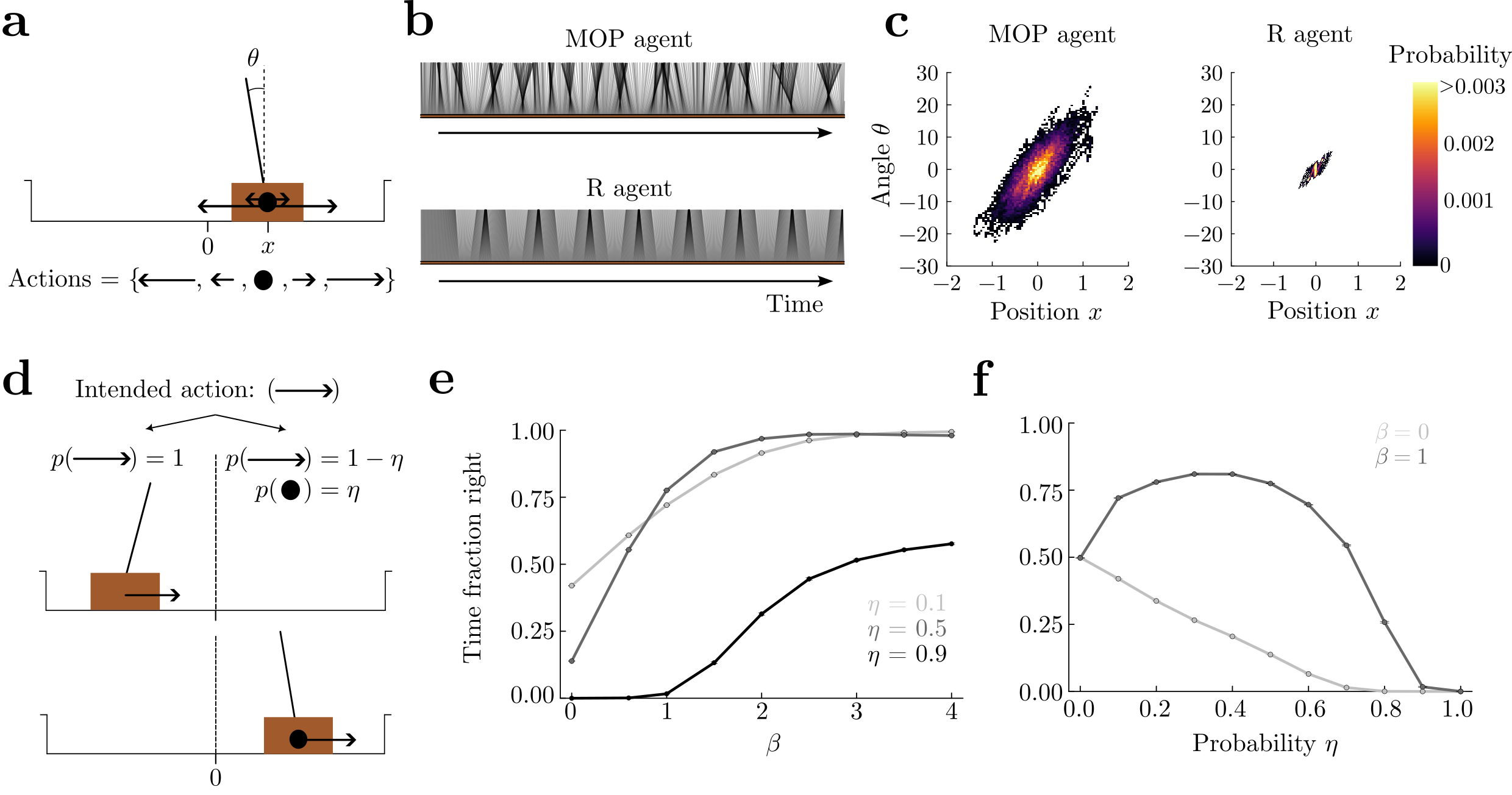}
\caption{Dancing of a MOP cartpole. (a) The cart (brown rectangle) has a pole attached. The cartpole reaches an absorbing state if the magnitude of the angle $\theta$ exceeds $36 \deg$ or its position reaches the borders. There are 5 available actions when alive: a big and a small force to either side (arrows on cartpole), and doing nothing (full circle). (b) Time-shifted snapshots of the pole in the reference frame of the cart as a function of time for the MOP (top) and R (bottom) agents. (c) Position and angle occupation for a $2 \times 10^5$ time step episode. (d) Here, the right half of the arena is stochastic, while the left remains deterministic. In the stochastic half, the intended state transition due to an applied action (force) succeeds with probability $1-\eta$ (and thus zero force is applied with probability $\eta$).
(e) Fraction of time spent on the right half of the arena increases as a function of $\beta$, regardless of the failure probability $\eta$.
(f) The fraction has a non-monotonic behavior as a function of $\eta$ when state entropy is important for the agent ($\beta=1$), highlighting a stochastic resonance behavior.
When the agents do not seek state entropy ($\beta=0$) the fraction of time spent by the agent on the right decreases with the failure probability, and thus they avoid the stochastic right side. 
$\gamma = 0.99$ for panels (e,f).
}  

 \label{fig:cartpole}
 \end{figure}
In the previous examples, complex behaviors emerge as a consequence of the presence of obstacles, predators and limited food sources, but the actual dynamics of the agents are very coarse-grained. Here, we considered a system with physically realistic dynamics, the balancing cartpole 
\cite{barto_neuronlike_1983,florian_correct_2007}, composed of a moving cart with an attached pole free to rotate (Fig. \ref{fig:cartpole}a). The cartpole is assumed to reach an absorbing state when either it hits a border, or when the pole angle exceeds $36$ degrees. Thus, we consider a broad range of angles that makes the agents reach a larger state space than in standard settings \cite{brockman_openai_2016}. We discretized the state space and used a linear interpolation to solve for the optimal value function in Eq. (\ref{eq_bellman_opt_eq}), and to implement the optimal policy in Eq. (\ref{eq_pi_opt_m}), (Supplemental Sec. \ref{sec:details_cartpole}). 
The MOP agent widely occupies the horizontal position, and more strikingly it produces a wide variety of pole angles,
constantly swinging sideways as if it were dancing (\href{https://youtu.be/XwJB1-Vu394}{Video 4}, MOP agent; Fig. \ref{fig:cartpole}b,c). 

We compared the behavior of a MOP agent with that of an R agent that receives a reward of 1 for being alive and 0 otherwise. The R agent gets this reward regardless of the pole angle and cart position within the allowed broad ranges, so that behaviors of the MOP and R agents can be better compared without explicitly favoring in any of them any specific behavior, such as the upright pole position.
As expected, the R agent maintains the pole close to the balanced position throughout most of a long episode (Fig. \ref{fig:cartpole}b, bottom), because it is the furthest to the absorbing states and thus the safest. 
Therefore, the R agent produces very little behavioral variability (Fig. \ref{fig:cartpole}c, right panel) and no movement that could be dubbed `dancing' (\href{https://youtu.be/XwJB1-Vu394}{Video 4}, R agent). Although both MOP and R agents use a similar strategy which keeps the pole pointing towards the center for substantial amounts of time (Fig. \ref{fig:cartpole}c, positive angles correlate with positive positions in both panels), the behavior of the R agent is qualitatively different, and is best described as a bang-bang sort of control for which the angle is kept very close to zero while the cart is allowed to travel and oscillate around the origin, which is more apparent in the actual paths of the agent (see trajectories in phase space in \href{https://youtu.be/B8QjBdVIM_o}{Video 5}).
We also find that the R agent does not display much variability in state space even after using an $\epsilon$-greedy action selection (Supplemental Fig. \ref{fig:cartpoleeps}, \href{https://youtu.be/BBiJhxPfkjw}{Video 6}), with $\epsilon$ chosen to match average lifetimes between agents (Supplemental Fig. \ref{fig:survival}c). This result showcases that the MOP agent exhibits the most appropriate sort of variability for a given average lifetime.

We finally introduced a slight variation to the environment, where the right half of the arena has stochastic state transitions. Here, when agents choose an action (force) to be executed, a state transition in the desired direction occurs with probability $1-\eta$, and a transition corresponding to zero force occurs with probability $\eta$ (Fig. \ref{fig:cartpole}d).
Therefore, a MOP agent that seeks state entropy ($\beta>0$) will show a preference for the right side, where there is in principle higher state entropy resulting from the stochastic transitions over more successor states than on the left side.
Indeed, we find that MOP agents spend more time on the right side as $\beta$ increases, regardless of the probability $\eta$ (Fig. \ref{fig:cartpole}e). 
For fixed $\gamma$, spending more time on the right side can bring the life expectancy to decrease significantly depending on $\beta$ and $\eta$ (Supplemental Fig. \ref{fig:survival} d-e). 
Interestingly, for $\beta > 0$ there is an optimal value of the noise $\eta$ that maximizes the fraction of time spent on the right side (Fig. \ref{fig:cartpole}f), which is a form of stochastic resonance. Therefore, for different $\beta$, qualitatively different behaviors emerge as a function of the noise level $\eta$.

\subsection{MOP agents can also seek entropy of others}

\begin{figure}[t]
\center
\includegraphics[width=
\textwidth]{./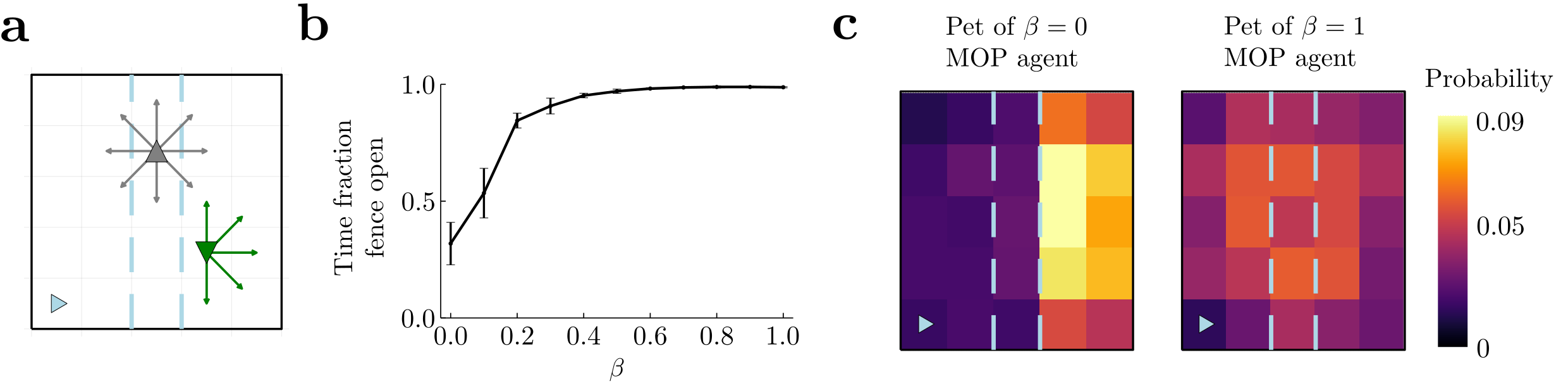} 
\caption{Modelling altruism through an optimal tradeoff between own action entropy and other's state entropy.
(a) An agent (gray up triangle) has access to nine movement actions (gray arrows and doing nothing), and open or close a fence (dashed blue lines). This fence does not affect its movements. A pet (green, down triangle) has access to the same actions, and chooses one randomly at each timestep, but is constrained by the fence when closed. Pet location is part of the state of the agent. 
(b) As $\beta$ in Eq. (\ref{eq_expected_return}) is increased, the agent tends to leave the fence open for a larger fraction of time. This helps its pet reach other parts of the arena. Error bars correspond to s.e.m. (c) Occupation heatmaps for 2000 timestep-episodes for $\beta = 0$ (left) and $\beta=1$ (right). In all cases $\alpha=1$.}
 \label{fig:friendly_cat}
 \end{figure}

Next, we considered an example where an agent seeks to occupy path space, which includes another agent's location as well as its own. The agent can freely move (Fig. \ref{fig:friendly_cat}a; grey triangle) and open or close a fence by pressing a lever in a corner (blue triangle).
The pet of the agent (green triangle) can freely move if the fence is open, but when the fence is closed the pet is confined to move in the region where it is currently located. The pet moves randomly at each step, but its available actions are restricted by its available space (Supplemental Sec. \ref{sec:details_pet}).

To maximize action-state path entropy, 
the agent ought to trade off the state entropy resulting from letting the pet free with the action entropy resulting from using the open-close action when visiting the lever location.
The optimal tradeoff depends on the relative strength of action and state entropies. 
In fact, when state entropy weighs as much as action entropy ($\alpha=\beta=1$), the fraction of time that the agent leaves the fence open is close to $1$ (rightmost point in Fig. \ref{fig:friendly_cat}b) so that the pet is free to move (Fig. \ref{fig:friendly_cat}c, right panel; $\beta = 1$ MOP agent). However, when the state entropy has zero weight ($\alpha=1, \beta=0$), the fraction of time that the fence remains open is close to $0.5$ (leftmost point in Fig. \ref{fig:friendly_cat}b) and the pet remains confined to the right side for most of the time
(Fig. \ref{fig:friendly_cat}c, left panel; $\beta = 0$ MOP agent), the region where it was initially placed.
As a function of $\beta$ the fraction of time the fence is open increases. 
Therefore, the agent gives more freedom to its pet, as measured by the pet's state entropy, by curtailing its own action freedom, as measured by action entropy, thus becoming more "altruistic".

\subsection{MOP compared to other reward-free approaches}

One important question is how MOP compares to other reward-free, motivation-driven theories of behavior. Here we focus on two popular approaches: empowerment and the free energy principle. 
In empowerment (MPOW) \cite{klyubin_empowerment_2005,jung_empowerment_2011,still_information-theoretic_2012,mohamed_variational_2015} agents maximize the mutual information between $n$-step actions and the successor states resulting from them \cite{klyubin_empowerment_2005,blahut_computation_1972}, 
a measure of their capability to perform diverse courses of actions with predictable consequences. 
MPOW formulates behavior as greedy maximization of empowerment \cite{klyubin_empowerment_2005,jung_empowerment_2011}, such that agents move to accessible states with the largest empowerment (maximal mutual information), and stay there with high probability. 

We applied MPOW to the gridworld and cartpole environments (Fig. \ref{fig:MPOW_EFE}). In the gridworld, MPOW agents (5-step MPOW, see Supplemental Sec. \ref{sec:empowerment_supplemental}) prefer states from where they can reach many distinct states, such as the middle of a room. However, due to energetic constraints, they also gravitate towards the food source when energy is low, and they alternate between these two locations \textit{ad nauseam} (Fig. \ref{fig:MPOW_EFE}a, middle; \href{https://youtu.be/WDcGfsjKlcI}{Video 7}). In the cartpole, MPOW agents (3-step MPOW \cite{jung_empowerment_2011}, see Supplemental Sec. \ref{sec:empowerment_supplemental})
favour the upright position because, being an unstable fixed point, it is the state with highest empowerment, as previously reported \cite{klyubin_keep_2008,jung_empowerment_2011}. Given the unstable equilibrium, the MPOW agent gets close to it but needs to continuously adjust its actions when greedily maximizing empowerment (Fig. \ref{fig:MPOW_EFE}b, middle; \href{https://youtu.be/8SlPWjoobwo}{Video 8}). The paths traversed by MPOW agents in state space are highly predictable, and they are similar to the ones of the R agent (see Fig. \ref{fig:cartpole}c). The only source of stochasticity comes from the algorithm, which approximately calculates empowerment, and thus a more precise estimation of empowerment leads to even less variability.

\begin{figure}[t]
\center
\includegraphics[width=
\textwidth]{./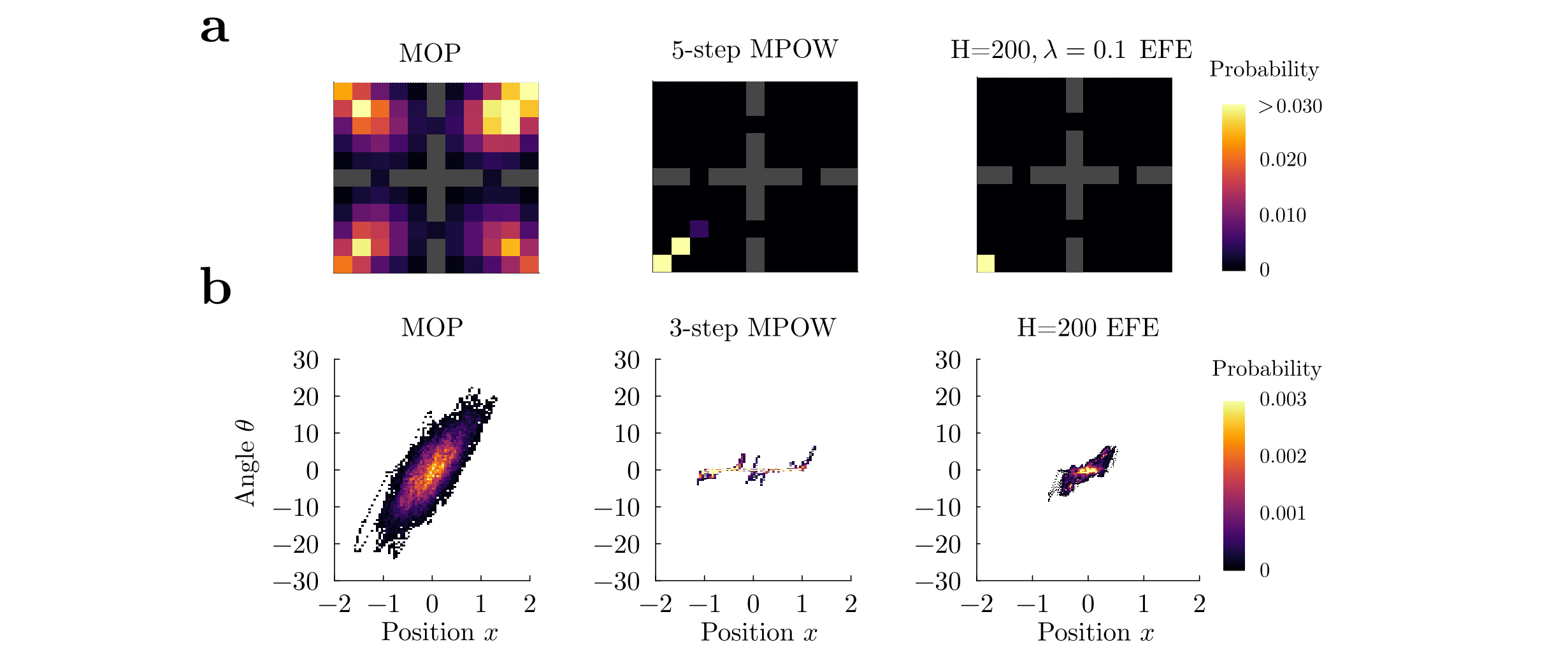} 
\caption{Empowerment (MPOW) and Free Energy Principle (FEP) lack robust occupation of action-states. (a) In the grid-world environment, MPOW and expected free energy (EFE) only visit a restricted portion of the arena. Initial position was the center of a room $(x,y) = (3,3)$.
(b) In the cartpole environment, both MPOW and EFE shy away from large angles, producing a limited repertoire of predictable behaviors.}
 \label{fig:MPOW_EFE}
 \end{figure}

In the free energy principle (FEP), agents seek to minimize the minus log probability, called surprise, of a subset of desired states via the minimization of an upper bound, called free energy. This minimization reduces behavioral richness by making a set of desired (homeostatic) states highly likely \cite{friston_free_2006,buckley_free_2017}, rendering this approach almost opposite to MOP. 
In a recent MDP formalization, FEP agents aim to minimize the (future) expected free energy (EFE) \cite{da_costa_reward_2023}, which equals the future cumulative KL divergence between the probability of states and the desired (target) probability of those states (see Supplemental Sec. \ref{sec:active_inference} for details). Even though this objective contains the standard exploration entropy term on state transitions \cite{buckley_free_2017,tschantz_reinforcement_2020}, we prove that the optimal policy is deterministic (see Supplemental Sec. \ref{sec:active_inference}). 

As a consequence, we find that in both the gridworld and cartpole environments, the behavior of the EFE agent (receding horizon $H=200$) is much less variable than the MOP agent in general (Fig. \ref{fig:MPOW_EFE}a, right panel for the gridworld, \href{https://youtu.be/WDcGfsjKlcI}{Video 7}; and b, right panel, for the cartpole, \href{https://youtu.be/8SlPWjoobwo}{Video 8}). The only source of action variability in the EFE agent is due to the degeneracy of the expected free energy, and thus behavior collapses to a deterministic policy as soon as the target distribution is not perfectly uniform (see Supplemental Sec. \ref{sec:supplemental_activeinference_details} for details). We finally prove that under discounted infinite horizon, and assuming a deterministic environment, the EFE agent is equivalent to a classical reward maximizer
agent with reward $R=1$ for all non-absorbing states and $R=0$ for the absorbing states (Supplemental Sec. \ref{sec:active_inference}). 
In conclusion, MOP generates much more variable behaviors than MPOW and FEP.


\subsection{MOP in continuous and large action-state spaces}

\begin{figure}[t!]
\center
\includegraphics[width=
\textwidth]{./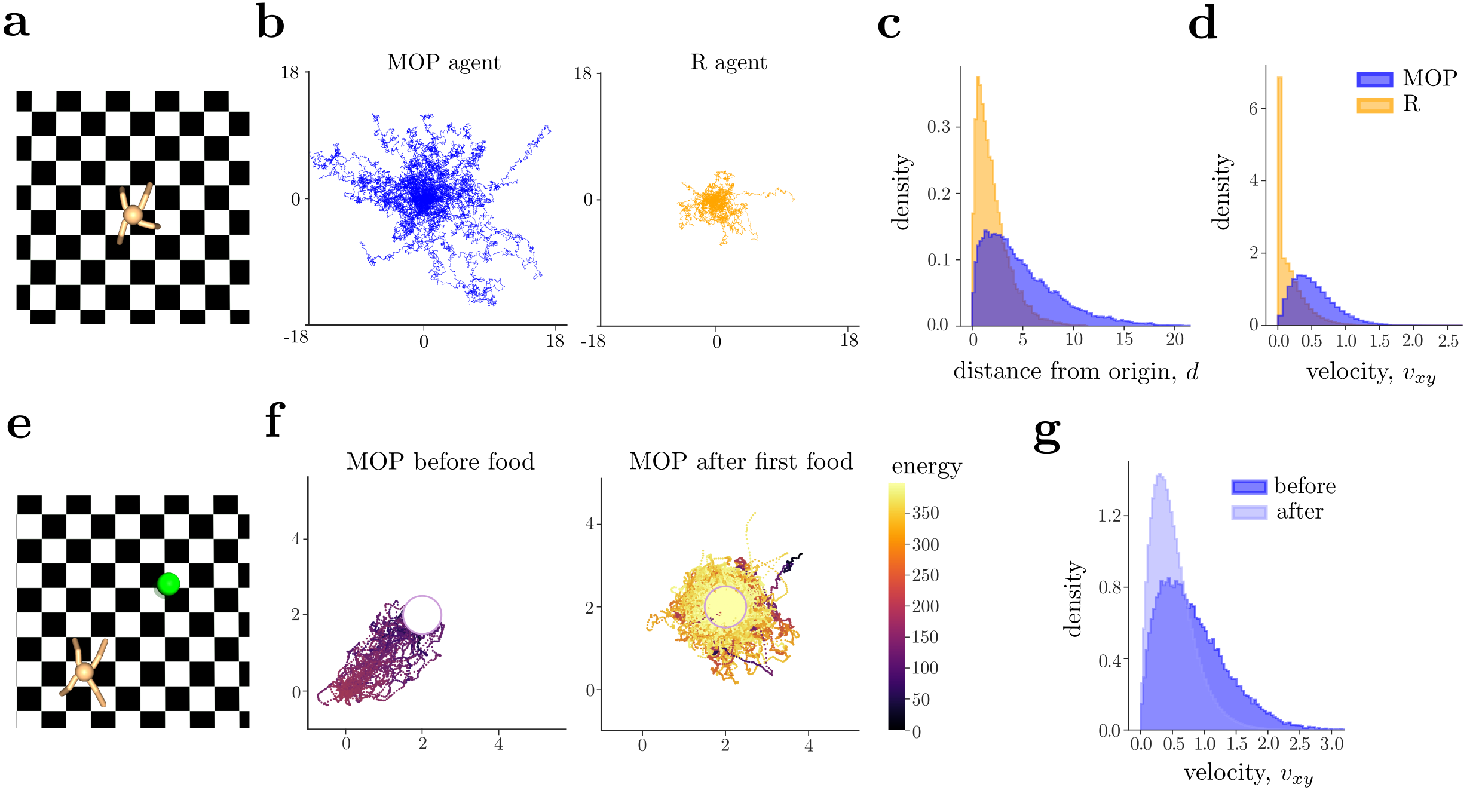} 
\caption{MOP in high-dimensional states generates variable and goal-directed behaviors. (a) The quadruped environment, adapted from Gymnasium, serves as the testing environment. The $x,y$ dimensions are unbounded. (b) Trajectories of the center of mass of the torso of the MOP (left panel) and R (right) agents. MOP occupies more space for approximately the same survival time (see Supplemental Fig. \ref{fig:supplemental_ant}a). (c-d) Distribution of the planar distance $d$ from the origin (c) and planar speed $v_{xy}$ (d) for MOP (blue) and R (yellow) agents.
(e) In a new environment, a food source (green ball) is available so that the MOP agent can replenish its internal energy to avoid starvation.
(f) Trajectories of the MOP agent before (left) and after (right) getting to the food source. 
Colormap defined by the energy level of the agent. 
(g) Distribution of the planar speed showcasing changes before (dark blue) and after (light blue) the MOP agent reaches the food source for the first time. Distributions computed only on the tests where the quadruped finds the food source.}
 \label{fig:ant}
 \end{figure}

The examples so far can be solved exactly with numerical methods, without relying on function approximation of the value function or the policy, which could obscure the richness of the resulting behaviors. 
However, one important question is whether our approach scales up to large continuous action-state spaces where no exact solutions are available. 
To show that MOP generates rich behaviors even in high-dimensional agents, we used a quadruped from Gymnasium \cite{towers_gymnasium_2023} without imposing any explicit fine-tuned reward function (Fig. \ref{fig:ant}a). 
The only externally imposed conditions are the absorbing states, which are reached when either the agent falls (given by the torso touching the ground), or the torso reaches a maximum height \cite{towers_gymnasium_2023}.

We first trained the MOP agent by approximating the state-value function, Eq. (\ref{eq_v_opt_m}), using the soft-actor critic (SAC) architecture \cite{haarnoja_soft_2018} with zero rewards, which corresponds to the case $\alpha=1$ and $\beta=0$. 
The MOP agent learns to stabilize itself and walk around, sometimes jumping, spinning and moving up and down the legs, without any instructions to do so (\href{https://youtu.be/DDOoyAqYFEk}{Video 9)}. The MOP agent exhibits variable and long excursions over state space (Fig. \ref{fig:ant}b,c blue) and displays a broad distribution of speeds (Fig. \ref{fig:ant}d, blue).
We compared the MOP agent with an R agent that obtains a reward of $R=1$ whenever it is alive and $R=0$ when it reaches an absorbing state. As before, we add variability to the R agent with an $\epsilon$-greedy action selection, adjusting $\epsilon$ so that the average lifetime of the R agent matched that of the MOP agent (Supplemental Fig. \ref{fig:supplemental_ant}a).
In contrast to the MOP agent, the R agents exhibit much shorter excursions (Fig. \ref{fig:ant}b,c yellow) and a velocity distribution that peaks around zero, indicating prolonged periods spent with no translational movement (Fig. \ref{fig:ant}d, yellow). 
When visually compared, the behavior for MOP and R agents shows stark differences (\href{https://youtu.be/DDOoyAqYFEk}{Video 9)}.

While the MOP agent elicits variable behaviors, it is also capable of generating deterministic, goal-directed behaviors when needed. To show this, we added a food source in the arena and extended the state of the agent with its internal energy. Now the agent can also die of starvation when the internal energy hits zero (Fig. \ref{fig:ant}e).
As expected, when the initial location of the MOP quadruped is far from the food source, it directly moves to the food source to avoid dying from starvation (Fig. \ref{fig:ant}f).  
After the food source is reached for the first time, the MOP quadruped generates random excursions away from the food source. 
During these two phases, the agent displays very different speed distributions (Fig. \ref{fig:ant}g), showing also quantitative differences in the way it moves (see a comparison with the R agent in Supplemental Fig. \ref{fig:supplemental_ant}, and \href{https://youtu.be/Zwx1029e5eg}{Video 10}). 

Finally, we modified the environment by adding state transition noise of various magnitudes in one half of the arena ($x>0$), while the other half remained deterministic. We find that the agent's behavior is modulated by $\beta$, which controls the preference of state transition entropy (see details in Supplemental Sec. \ref{sec:supplemental_ant}). As expected, for fixed $\alpha$ and positive noise magnitude, MOP agents show increasing preference toward the noisy side as $\beta$ increases (Supplemental Fig. \ref{fig:ant_beta}). However, as noise magnitude increases, and for fixed $\beta$, MOP agents tend to avoid the noisy side to prevent them from falling. This shows that MOP agents can exhibit approach and avoidance behaviors depending on the environment's stochasticity and their $\beta$ hyperparameter.

\section{Discussion}
Often, the success of agents in nature is not measured by the amount of reward obtained, but by their ability to expand in state space and perform complex behaviors. Here we have proposed that a major goal of intelligence is to `occupy path space'.
Extrinsic rewards are thus the means to move and occupy action-state path space, not the goal of behavior. In an MDP setting, we have shown that the intuitive notion of path occupancy is captured by future action-state path entropy, and we have proposed that behavior is driven by the maximization of this intrinsic goal --the maximum occupancy principle (MOP). 
We have solved the associated Bellman equation and provided a convergent iterative map to determine the optimal policy.

In several discrete and continuous state examples we have shown that MOP, along with the agent's constraints and dynamics, leads to complex behaviors that are not observed in other simple reward maximizing agents. Quick filling of physical space by a moving agent, hide-and-seek behavior and variable escaping routes in a predator-prey example, dancing in a realistic cartpole dynamical system, altruistic behavior in an agent-and-pet duo and successful, vigorous movement in a high-dimensional quadruped are all behaviors that strike as being playful, curiosity-driven and energetic. 
To the human eye, these behaviors look genuinely goal-directed, like approaching to the food source when the energy level is low or escaping from the cat when it gets close to the mouse (see Figs. \ref{fig:fourrooms}c and \ref{fig:cat_mouse}d). 
Although MOP agents do not have any extrinsically designed goal, like eating or escaping, they generate these deterministic, goal-directed behaviors whenever necessary so that they can keep moving in the future and maximize future path action-state entropy (see Supplemental Sec. \ref{sec:maxentRLgoal}). These results show that the presence of internal states (e.g. energy) and absorbing states (e.g, having zero energy or being eaten) are critical for generating interesting behaviors, as getting close to different types of absorbing states triggers qualitatively different behaviors. This capability of adapting variability depending on internal states has been overlooked in the literature and is essential to obtaining the goal-directed behaviors we have shown here.
In parallel, when basic energy and safety conditions are met, behaviors are lively and somewhat risky, like when the cartpole gets close to the borders of the arena, and therefore our approach can lead to novel ways of thinking about risk--seeking behaviors \cite{fei_risk-sensitive_2020}.

A related set of algorithms, known as empowerment, have also proposed using reward-free objectives as the goal of behavior \cite{klyubin_empowerment_2005,jung_empowerment_2011,mohamed_variational_2015}. In this approach, the mutual information between a sequence of actions and the final state is maximized.
This makes empowerment agents prefer states where actions lead to large and predictable changes, such as unstable fixed points \cite{jung_empowerment_2011}.
We have shown that one drawback is that empowered agents tend to remain close to those states without producing diverse behavioral repertoires (see Fig. \ref{fig:MPOW_EFE}b and \href{https://youtu.be/8SlPWjoobwo}{Video 8}), as it also happens in causal entropy approaches \cite{wissner-gross_causal_2013}.  
Another difference is that empowerment is not additive over paths because the mutual information of a path of actions with the path of states is not the sum of the per-step mutual information, and thus it cannot be formalized as a cumulative per-step objective (Supplemental Sec. \ref{sec:non-additivity-MI}) \cite{jung_empowerment_2011,leibfried_unified_2019,mohamed_variational_2015,volpi_goal-directed_nodate}, in contrast to action-state path entropy. 
We note, however, that an approximation to empowerment having the desired additive property could be obtained from our framework by putting $\beta <0$ in Eq. (\ref{eq_expected_return}), such that more predictable state transitions are preferred. 
Similarly to empowerment, we have also shown that agents following the free energy principle \cite{friston_free_2006,da_costa_reward_2023} collapse behavior to deterministic policies in known environments (see Fig. \ref{fig:MPOW_EFE}b and \href{https://youtu.be/8SlPWjoobwo}{Video 8}).
Other reward-free RL settings and pure exploration objectives have been proposed in the past \cite{hazan_provably_2019,lee_efficient_2019,jin_reward-free_2020,zhang_exploration_2021,mutti_intrinsically-motivated_2020,mutti_task-agnostic_2021,pathak_curiosity-driven_2017,eysenbach_maximum_2021}, but this body of work typically investigates how to efficiently sample MDPs to construct near-optimal policies when reward functions are introduced in the exploitation phase. More importantly, this work differs from ours in that the goal-directedness that MOP displays entails behavioral variability at its core, even in known environments (see examples above). Finally, other overlapping reward-free approaches focus on the unsupervised discovery of skills, by encouraging diversity \cite{gregor_variational_2016,eysenbach_diversity_2018,sharma_dynamics-aware_2020,park_controllability-aware_2023}. While the motivation is similar, they focus on skill-conditioned policies, whereas our work demonstrates that complex sequences of behaviors are possible working from the primitive actions of agents, although a possible future avenue for MOP is to apply it to temporally extended actions  \cite{sutton_between_1999}. In addition, these works define tasks based on extrinsic rewards, whereas we have shown that internal state signals are sufficient to let agents define sub-tasks autonomously.

Our approach is conceptually different as well to hybrid approaches that combine extrinsic rewards with action entropy or KL regularization terms \cite{ziebart_modeling_2010,todorov_efficient_2009,schulman_equivalence_2017,hausman_learning_2018,grau-moya_planning_2016} for two main reasons. First, entropy seeking behavior does not pursue any form of extrinsic reward maximization. But most importantly, using KL-regularization using a default policy $\pi_0(a|s)$ in our framework would be self-defeating. 
This is because the absolute action entropy terms $\mathcal{H}(A|s)$ in the expected return in Eq. (\ref{eq_expected_return}) favor visiting states where a large set of immediate and future action-states are accessible. 
In contrast, using relative action entropy (KL) precludes this effect by normalizing the number of accessible actions, as we have shown above. 
Additionally, minimizing the KL divergence with a uniform default policy and without extrinsic rewards leads to an optimal policy that is uniform regardless of the presence of absorbing states,
equivalent to a random walker, which shows that a pure KL objective does not lead to interesting behaviors (Supplemental Sec. \ref{sec:supplemental_KL}, Supplemental Fig. \ref{fig:comparisonKL}).
The idea of having a variable number of actions that depend on the state is consistent with the concept of affordance \cite{gibson_ecological_2014}. While we do not address the question of how agents get the information about the available actions, an option would be to use the notion of affordances as actions \cite{khetarpal_what_2020}.
Secondly, while previous work has studied the performance benefits of either action \cite{haarnoja_soft_2018}, state \cite{hazan_provably_2019,neu_unified_2017} or equally weighted action-state \cite{peters_relative_2010,tishby_information_2011} steady-state entropies, our work proposes mixing them arbitrarily through path entropy, leading to a more general theory without any loss in mathematical tractability \cite{grytskyy_general_2023}. 
This arbitrary weight mixing lets us model more diverse phenomena. For example, for the right combination of state entropy weight $\beta$, and lookahead horizon, controlled by $\gamma$, MOP agents could get stuck in a noisy TV, consistent with the observation that humans have a preference for noisy TVs under particular conditions \cite{modirshanechi_curse_2023}. However, it can also capture the avoidance of noisy TVs for sufficiently-long-sighted agents (see Fig. \ref{fig:fourrooms}e).

We have also shown that MOP is scalable to high-dimensional problems and when the state-transition matrix is unknown, using the soft-actor critic architecture \cite{haarnoja_soft_2019} to approximate the optimal policy prescribed by MOP. 
Nevertheless, several steps remain to have a more complete MOP theory with learning. 
Previous related attempts have introduced Z-learning \cite{todorov_linearly-solvable_2006,todorov_efficient_2009} and G-learning \cite{fox_taming_2015} using off-policy methods, so our results could be extended to learning following similar lines. Other possibilities are using transition estimators using counts or pseudo-counts \cite{bellemare_unifying_2016}, or hashing \cite{tang__2017}, for the learning of the transition matrices.
One potential advantage of our framework is that, as entropy-seeking behavior obviates extrinsic rewards, those rewards do not need to be learned and optimized, and thus the learning problem reduces to transition matrices learning. 
In addition, modeling and injecting prior information could be particularly simple in our setting in view that intrinsic entropy rewards can be easily bounded before the learning process if action space is known. Therefore, initializing the state-value function to the lower or upper bounds of the action-state path entropy could naturally model pessimism or optimism during learning, respectively.  

All in all, we have introduced MOP as a novel theory of behavior, which promises new ways of understanding goal-directedness without reward maximization, and that can be applied to artificial agents to discover by themselves ways of surviving and occupying action-state space.

\section*{Acknowledgments}
This work is supported by the Howard Hughes Medical Institute (HHMI, ref 55008742), ICREA Academia 2022 and MINECO (Spain; BFU2017-85936-P) to R.M.-B, and MINECO/ESF (Spain; PRE2018-084757) to J.R.-R. We thank Jose Apesteguia, Luca Bonatti, Ignasi Cos, and Benjamin Hayden for very useful comments.

\section*{Code and data availability}

The code to generate the results and various figures is available as Python and Julia code along with guided notebooks to reproduce the figures at this public  \href{https://github.com/jorgeerrz/occupancy_max_paper}{GitHub repository}: 
\linebreak \verb|https://github.com/jorgeerrz/occupancy_max_paper|. All data are in this public repository, except for the specific data for the ant experiment, which are available upon request.

\typeout{}
\bibliographystyle{unsrtnat}
\bibliography{references_z}

\pagebreak

\appendix

\section{Appendix}

\subsection{Entropy measures the occupancy of action-state paths}
\label{sec:entropy-measures}

In this section, we show that entropy is the only measure of action-state path occupancy that obeys some basic intuitive notions of occupancy. We first list the intuitive conditions in mathematical form, present the main theorem and then discuss some implications through some corollaries. 

We consider a time-homogeneous Markov decision process with finite state set $\mathcal{S}$ and finite action set $\mathcal{A} (s)$ for every state $s \in \mathcal{S}$.
Henceforth, the action-state $x_j=(a_j,s_j)$ is any joint pair of one available action $a_j$ and one possible successor state $s_j$ that results from making that action under policy $\pi \equiv \{ \pi(a|s) \} $ from the action-state $x_i=(a_i,s_i)$. 
By assumption, the availability of action $a_j$ depends on the previous state $s_i$ alone, not on $a_i$. 
Thus, the transition probability from $x_i$ to $x_j$ in one time step is $p_{ij} = \pi(a_j|s_i) p(s_j|s_i,a_j)$, where $p(s_j|s_i,a_i)$ is the conditional probability of transitioning from state $s_i$ to $s_j$ given that action $a_j$ is performed.
Although there is no dependence of the previous action $a_i$ on this transition probability, it is notationally convenient to define transitions between action-states.
We conceive of rational agents as maximizing future action-state path occupancy. Any measure of occupancy should obey the intuitive Conditions {\em 1-4} listed below.

\vspace{0.2cm}
\noindent
{\bf Intuitive Conditions for a measure of action-state occupancy:}
\begin{enumerate}
	{\em
	\item Occupancy gain of action-state $x_j$ from $x_i$ is a function of the transition probability $p_{ij}$, $C(p_{ij})$
	\item Performing a low probability transition leads to a higher occupancy gain than performing a high probability transition, that is, $C(p_{ij})$ decreases with $p_{ij}$
	\item The first order derivative $C'(p_{ij})$ is continuous for $p_{ij} \in (0,1)$
	\item (Definition: the action-state occupancy of a one-step path from action-state $x_i$ is the expectation over occupancy gains of the immediate successor action-states, $C^{(1)}_i \equiv \sum_j p_{ij} C(p_{ij})$)
	
	The action-state occupancy of a two-steps path is additive, 
	
	$C_i^{(2)} \equiv \sum_{jk} p_{ij} p_{jk} C(p_{ij} p_{jk}) = C_i^{(1)} + \sum_{j} p_{ij} C_j^{(1)}$ 
	
	for any choice of the $p_{ij}$ and initial $x_i$
	}
\end{enumerate}

Condition {\em 1} simply states that occupancy gain from an initial action-state is defined over the transition probabilities to successor action-states in a sample space. 
Condition {\em 2} implies that performing a low probability transition leads to a higher occupancy of the successor states than performing a high probability transition. This is because performing a rare transition allows the agent to occupy a space that was left initially unoccupied. 
Condition {\em 3} imposes smoothness of the measure. 

In Condition {\em 4} we have defined the occupancy of the successor action-states (one-step paths) in the Markov chain as the expected occupancy gain. 
Condition {\em 4} is the central property, and it imposes that the occupancy of action-states paths with two steps can be broken down into a sum of the occupancies of action-states at each time step. Note that the action-state path occupancy  can be written as

\be
\nonumber
C_i^{(2)} \equiv \sum_{jk} p_{ij} p_{jk} C(p_{ij} p_{jk}) = \sum_{j} p_{ij} C(p_{ij}) + \sum_{jk} p_{ij} p_{jk} C(p_{jk}) = \sum_{jk} p_{ij} p_{jk} \left( C(p_{ij}) + C(p_{jk}) \right)
,
\ee

\noindent
which imposes a strong condition on the function $C(p)$.
Note also that the sum $\sum_{jk} p_{ij} p_{jk} C(p_{ij} p_{jk})$ extends the notion of action-state to a path of two consecutive action-states, each path having probability $p_{ij} p_{jk}$ due to the (time-homogeneous) Markov property. The last equality is an identity.   
While here we consider paths of length equal to $2$, further below we show that there is no difference in imposing additivity to paths of any fixed or random length (Corollary \ref{corollary2}).

\begin{theorem}[]
	$C(p)=-k \ln p$ with $k>0$ is the only function that satisfies Conditions {\em 1-4}
	\label{th1}
\end{theorem}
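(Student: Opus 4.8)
The plan is to reduce the additivity requirement (Condition 4) to the classical logarithmic Cauchy functional equation and then use the differentiability hypothesis (Condition 3) to rule out pathological solutions. Writing $D(x,y) \equiv C(xy)-C(x)-C(y)$, the identity displayed just above the theorem recasts Condition 4 as the averaged constraint $\sum_{jk} p_{ij}p_{jk}\,D(p_{ij},p_{jk})=0$, holding for every admissible first-step distribution $\{p_{ij}\}_j$ and every family of conditional distributions $\{p_{jk}\}_k$. First I would extract the normalisation $C(1)=0$: taking a deterministic first transition ($p_{ij_0}=1$) followed by an arbitrary distribution $\{q_k\}$ collapses the constraint to $\sum_k q_k\,D(1,q_k)=-C(1)=0$.

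Next I would isolate a clean single-family identity. Choosing a two-valued first step with probabilities $r$ and $1-r$, a free conditional $\{q_k\}$ on the first branch and a deterministic conditional on the second branch, the second branch contributes $D(1-r,1)=-C(1)=0$, so the constraint reduces to $r\sum_k q_k\,D(r,q_k)=0$, i.e.
\[
\sum_k q_k\,D(r,q_k)=0 \qquad \text{for all } r\in(0,1) \text{ and all distributions } \{q_k\}.
\]
Specialising $\{q_k\}$ to the two-point law $(s,1-s)$ and cancelling the common factor of $C(r)$ yields the two-variable identity
\[
s\,C(rs)+(1-s)\,C\!\left(r(1-s)\right) = C(r)+s\,C(s)+(1-s)\,C(1-s), \qquad r,s\in(0,1).
\]

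I would then differentiate this identity once with respect to $r$, which is legitimate because $C\in C^1$ by Condition 3; the $r$-independent terms on the right drop out, giving $s^2\,C'(rs)+(1-s)^2\,C'\!\left(r(1-s)\right)=C'(r)$. Setting $\eta(t)\equiv t^2 C'(t)$, which is continuous by Condition 3, and substituting $u=rs,\ v=r(1-s)$ (so that $u+v=r$) transforms this into the additive Cauchy equation
\[
\eta(u)+\eta(v)=\eta(u+v) \qquad \text{for all } u,v>0 \text{ with } u+v<1.
\]
Continuity of $\eta$ forces $\eta(t)=ct$, whence $C'(t)=c/t$ and $C(t)=c\ln t + C_0$. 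The normalisation $C(1)=0$ gives $C_0=0$, and Condition 2 (that $C$ decreases on $(0,1)$) forces $c<0$; writing $k=-c>0$ produces $C(p)=-k\ln p$. The converse---that $C(p)=-k\ln p$ satisfies Conditions 1--4---is an immediate computation using $\ln(pq)=\ln p+\ln q$.

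I expect the main obstacle to be the two reduction steps rather than the closing Cauchy argument: one must verify that the deterministic-branch transition structures are genuinely admissible choices under ``for any choice of the $p_{ij}$'' in Condition 4, so that the averaged constraint can legitimately be stripped down to the pointwise family $\sum_k q_k\,D(r,q_k)=0$, and one must invoke the continuous-solution theory of Cauchy's equation on the triangle $\{u+v<1\}$ rather than on all of $(0,\infty)$. The role of Condition 3 is precisely to guarantee continuity of $\eta$, thereby excluding the nonmeasurable solutions that Cauchy's equation otherwise admits.
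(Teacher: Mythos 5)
Your proof is correct, but it reaches the conclusion by a genuinely different route than the paper's. The paper specializes Condition 4 to a chain whose second step is uniform over $n$ states, obtaining $C(t/n)=C(t)+C(1/n)$ for integers $n$; after differentiating, it shows $tC'(t)$ is constant by hand, approximating an arbitrary ratio $s/t$ by integer ratios $n/m$ and invoking continuity of $xC'(x)$. You instead extract the pointwise constraint $\sum_k q_k D(r,q_k)=0$ from a two-branch chain (your deterministic-branch constructions are admissible --- they are of exactly the same type as the chains the paper itself uses, with an absorbing, deterministic arm), specialize to a two-point conditional to get the weighted identity $sC(rs)+(1-s)C(r(1-s))=C(r)+sC(s)+(1-s)C(1-s)$, and after one differentiation in $r$ land on the additive Cauchy equation for $\eta(t)=t^2C'(t)$ on the triangle $\{u,v>0,\ u+v<1\}$, which continuity resolves. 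Your route needs only chains with a bounded number of states and makes the role of Condition 3 transparent (it is exactly what excludes the pathological Cauchy solutions), whereas the paper's argument is more self-contained, avoiding restricted-domain Cauchy theory at the cost of an explicit rational-density computation. One small improvement available in your setup: rather than using $C(1)=0$ to kill the integration constant (which, as in the paper, tacitly uses continuity of $C$ at $p=1$, not guaranteed by Condition 3 on the open interval), you can substitute $C(t)=c\ln t+C_0$ back into your two-variable identity, which yields $C_0=2C_0$ and hence $C_0=0$ with all arguments staying inside $(0,1)$.
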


\begin{corollary}[]
	The entropy $C_i^{(1)} = - k \sum_j p_{ij} \ln p_{ij}$ is the only measure of action-state occupancy of successor action-states $x_j$ from $x_i$ with transition probabilities $p_{ij}$ consistent with Conditions {\em 1-4}.
\end{corollary}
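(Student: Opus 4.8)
The plan is to derive the Corollary as an immediate consequence of Theorem~\ref{th1}, since Condition~\emph{4} has already pinned down the one-step measure as an expectation over the per-transition occupancy gain. First I would recall the definition introduced in Condition~\emph{4}: the action-state occupancy of a one-step path from $x_i$ is
\[
C_i^{(1)} = \sum_j p_{ij}\, C(p_{ij}),
\]
that is, the expectation of the occupancy gains $C(p_{ij})$ over the immediate successor action-states $x_j$ under the transition probabilities $p_{ij}$. Thus any admissible one-step occupancy measure is completely determined by the scalar gain function $C$, and two candidate measures coincide if and only if their underlying gain functions agree on $(0,1)$.

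Next I would invoke Theorem~\ref{th1}, which I am entitled to assume: the only function satisfying Conditions~\emph{1--4} is $C(p) = -k\ln p$ with $k>0$. Substituting this unique $C$ into the definition above gives
\[
C_i^{(1)} = \sum_j p_{ij}\bigl(-k\ln p_{ij}\bigr) = -k\sum_j p_{ij}\ln p_{ij},
\]
which is exactly the claimed scaled Shannon entropy of the successor distribution $\{p_{ij}\}$. This settles existence, namely that this entropy is indeed a measure consistent with Conditions~\emph{1--4}.

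For uniqueness I would argue that it transfers automatically. Suppose some other measure satisfied Conditions~\emph{1--4}. By Condition~\emph{1} it is built from a per-transition gain function $\tilde C$, and by the definition in Condition~\emph{4} its associated one-step measure is $\sum_j p_{ij}\tilde C(p_{ij})$. Since $\tilde C$ must itself satisfy Conditions~\emph{1--4}, Theorem~\ref{th1} forces $\tilde C(p) = -k\ln p$, so the measure necessarily coincides with $-k\sum_j p_{ij}\ln p_{ij}$. Hence the entropy is the unique admissible one-step occupancy measure, up to the positive scale $k$, whose sign is fixed by Condition~\emph{2} (monotonic decrease of $C$), guaranteeing a non-negative occupancy for any genuine successor distribution.

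There is essentially no hard step remaining at this level: all of the analytic content---extracting the logarithm from the averaged additivity constraint of Condition~\emph{4} together with the regularity supplied by Conditions~\emph{2--3}---has already been absorbed into Theorem~\ref{th1}. The only point that deserves a line of care, and the closest thing to an obstacle, is the observation that Condition~\emph{4} defines the aggregate measure \emph{rigidly} as the expectation of $C$, so that uniqueness of the pointwise gain function propagates without loss to uniqueness of the entropy functional $C_i^{(1)}$ itself.
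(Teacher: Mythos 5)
Your proposal is correct and follows essentially the same route as the paper: the corollary is an immediate consequence of Theorem~1, since Condition~\emph{4} defines $C_i^{(1)}$ rigidly as $\sum_j p_{ij}\,C(p_{ij})$ and the theorem pins down $C(p)=-k\ln p$ as the unique admissible gain function. The paper in fact offers no separate argument for the corollary beyond the proof of the theorem, so your explicit remark that uniqueness of the pointwise gain propagates to uniqueness of the expectation functional is a faithful (and slightly more careful) rendering of the intended deduction.
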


\begin{proof}
Put $p_{1,1}=1$ and $p_{1,j}=0$ for $j \neq 1$. Then, Condition {\em 4} reads $C(1) = C(1) + C(1)$ when the initial action-state is $x_1$, which implies $C(1)=0$.

Now, take a Markov chain with $p_{0,0}=1$, $p_{1,0}=1-t>0$, $p_{1,2}=t>0$, $p_{2,0}=p_{2,1}=0$, $p_{2,j}=1/n$ for $j=3,...,n+2$ and $n > 0$, and $p_{k,0}=1$ for $k=3,...,n+2$. In this chain, the state $0$ is absorbing and all others are transient (here action-states are simply referred to as states). Starting from state $1$, transition to the transient state $2$ happens with probability $t$ and to the absorbing state $0$ with probability $1-t$. From state $2$ a transition to states $j=3,...,n+2$ happens with equal probability. From any of those states, a deterministic transition to $0$ ensues.
(These last transitions can only happen in the third time step, and although it will be relevant later on, it is no used in the current proof, which only uses additivity on paths of length two.) 
Then, Condition {\em 4} with initial state $1$ reads  
$t C(t/n) + (1-t) C(1-t) = t C(t) + (1-t) C(1-t) + t C(1/n) + (1-t) C(1)$,
and hence 
$C(t/n) = C(t) + C(1/n)$
for any $0 < t < 1$ and integer $n>0$. 
By Condition {\em 3} and taking derivative with respect to $t$ in both sides, we obtain 
$C'(t/n) = n C'(t)$,
and multiplying in both sides by $t$ we obtain
$ \frac{t}{n} C'(\frac{t}{n}) = t C'(t)$.
By replacing $t$ with $nt$, we get 
$ t C'(t) = nt C'(nt)$, provided that $nt<1$.

We will now show that $t C'(t)$ is constant. In the last equation replace $t$ by $t/m$ by integer $m>0$ to get the last equivalence in $t C'(t) = \frac{t}{m} C'(\frac{t}{m}) = \frac{n}{m}t C'(\frac{n}{m}t)$ (the first equivalence is obvious). 
These equivalences are valid for positive $t<1$ and $\frac{n}{m}t<1$.
Let $0< s < 1$ and $n = \left \lfloor{ms/t}\right \rfloor$ be the largest integer smaller than $ms/t$.
Therefore, as $m$ increases $\frac{n}{m}t<1$ and approaches $s$ as close as desired. By Condition {\em 3} the function $xC'(x)$ is continuous, and therefore $\lim_{m \rightarrow \infty} \frac{n}{m}t C'(\frac{n}{m}t) = sC'(s)$. The basic idea is that we can first compress $t$ as much as needed by the integer factor $m$ and then expand it by the integer factor $n$ so that $nt/m$ is as close as desired to $s$. This shows that $sC'(s) = tC'(t)$ for $s,t \in (0,1)$, and therefore $tC'(t)$ is constant.

Assume that $tC'(t)=-k$. Then, by integrating we obtain
$C(t) = -k \ln t + a$,
but $a=0$ due to $C(1)=0$, and $k>0$ due to Condition {\em 2}.
Together with the above, we can now proof the theorem by noticing that the solution satisfies Condition {\em 4} for any choice of the $p_{ij}$. 
\end{proof}

\noindent
{\em Remark:} We have found that entropy is the measure of occupancy. The famous derivation of entropy as a measure of information \cite{shannon_mathematical_1948} uses similar elements, but some differences are worthy to be mentioned. 
First, our proof uses the notion of additivity of occupancy on MDPs of length two (our Condition {\em 4}), while Shannon's notion of additivity uses sequences of random variable of arbitrary length (his Condition {\em 3}), and therefore his condition is in a sense stronger than ours.
Second, our proof enforces continuous derivative of the measure, while Shannon enforces continuity of the measure, rendering our Condition {\em 3} stronger. 
Finally, we enforce a specific form of the measure as an average over occupancy gains (our Condition {\em 4} again), because it intuitively captures the notion of occupancy, while Shannon does not enforce this structure in his information measure.

\begin{corollary}[]
	\label{corollary2}
	\normalfont
	Condition {\em 4} can be replaced by the stronger condition that requires additivity of paths of any finite length $n$ with no change in the above proof. We first introduce some notation: the probability of path $i_0,i_1,...,i_n$ is $p_{i_{0},i_{1}} p_{i_{1},i_{2}} ... p_{i_{n-1},i_{n}}$, where $i_t$ refers to the state visited at step $t$ and $i_0$ is the initial state.
	Then the new Condition {\em 4} reads in terms of the action-state occupancy of paths of length $n$ as
	
	\bea
	\nonumber
	C_{i_0}^{(n)} &=& \sum_{i_1,i_2,...,i_n} p_{i_{0},i_{1}} p_{i_{1},i_{2}} ... p_{i_{n-1},i_{n}}  C\left(p_{i_{0},i_{1}} p_{i_{1},i_{2}}... p_{i_{n-1},i_{n}}\right)
	\\
	\nonumber
	&&  = \sum_{i_{1}} p_{i_{0},i_{1}} C(p_{i_{0},i_{1}}) + \sum_{i_1,i_2} p_{i_0,i_1} p_{i_{1},i_{2}} C(p_{i_1,i_2}) + ... 
	+ \sum_{i_1,i_2,...,i_n} p_{i_{0},i_{1}} p_{i_{1},i_{2}}... p_{i_{n-1},i_{n}}  C\left(p_{i_{n-1},i_{n}}\right)
	\\
	\nonumber
	&& = \sum_{i_1,i_2,...,i_n} p_{i_{0},i_{1}} p_{i_{1},i_{2}} ... p_{i_{n-1},i_{n}}  \left( C(p_{i_0,i_1}) + C(p_{i_1,i_2})... +  C(p_{i_{n-1},i_{n}}) \right) \;,
	\eea
	
	\noindent
	for any time-homogeneous Markov chain. By choosing the particular chains used in Theorem \ref{th1}, we arrive again to the same unique solution $C(p) = -k \ln p$ after using $C(1)=0$ repeated times, which obviously solves the above equation for any chain and length path. 
	Indeed, note that for the second chain in Theorem \ref{th1}, from initial state $1$ the absorbing state is reached in three time steps with probability one, and thus the above sum contains all $C(1)$ starting from the third terms, which contribute zero to the sum. 
\end{corollary}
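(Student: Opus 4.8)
The plan is to show that the two forms of the additivity requirement single out exactly the same function, so that nothing in the proof of Theorem~\ref{th1} has to be redone. I would split the argument into a reduction step (the general length-$n$ condition forces the same functional equation as Condition~\emph{4}) and a verification step (the logarithm satisfies the length-$n$ condition on every chain). All the leverage comes from the telescoping identity already displayed in the statement: the occupancy of a length-$n$ path rearranges into a sum of single-step occupancy gains $C(p_{i_{t-1},i_t})$ collected along the path, so the only content of the condition is a relation between $C$ evaluated at a full path probability and $C$ evaluated at each of its one-step factors.

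For the reduction I would feed the very chains constructed in Theorem~\ref{th1} into the length-$n$ identity and check that no genuinely new constraint appears. On the deterministic loop with $p_{1,1}=1$ every path probability equals $1$, so the identity reads $C(1)=n\,C(1)$ and again forces $C(1)=0$. On the branching chain the decisive observation is the one flagged in the statement: from state $1$ the absorbing state $0$ is reached within three steps with probability one and is fixed thereafter. Hence for any $n\ge 3$ every surviving path from state $1$ consists of an informative prefix of at most two steps followed by deterministic transitions into $0$, each of which contributes a factor $C(1)=0$ to the rearranged sum. Discarding those vanishing terms collapses the length-$n$ identity to exactly the two-step relation $C(t/m)=C(t)+C(1/m)$, where $m$ denotes the branching degree of the intermediate state; this is precisely the equation driving Theorem~\ref{th1}.

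From here the argument of Theorem~\ref{th1} applies verbatim: differentiate in $t$, pass to $g(x)=x\,C'(x)$, use the scaling invariance together with the continuity supplied by Condition~\emph{3} to conclude that $g$ is constant, integrate, and fix the constants through $C(1)=0$ and the monotonicity of Condition~\emph{2}, yielding the unique $C(p)=-k\ln p$ with $k>0$. For the converse I would substitute this $C$ into the displayed length-$n$ formula directly: since $\ln\!\big(p_{i_0,i_1}\cdots p_{i_{n-1},i_n}\big)=\sum_t \ln p_{i_{t-1},i_t}$, its first and last lines coincide as an algebraic identity for every time-homogeneous chain and every $n$, so $-k\ln p$ indeed satisfies the length-$n$ condition. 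Equivalence with Condition~\emph{4} then follows, since length-$n$ additivity specialized to $n=2$ is Condition~\emph{4} itself, while Condition~\emph{4} forces the logarithm, which in turn obeys length-$n$ additivity for all $n$.

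The step I expect to demand the most care is the reduction bookkeeping: one must confirm that, viewed at length $n$, the special chains still isolate a single functional equation and that every term beyond the informative prefix really reduces to a $C(1)$ term, so that the clash between the path length $n$ and the branching degree (which I relabel $m$) does not conceal an extra constraint. Once that accounting is verified the remainder is immediate, and the corollary follows with no modification to the earlier derivation.
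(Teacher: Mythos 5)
Your proposal is correct and takes essentially the same route as the paper: it specializes the length-$n$ additivity condition to the two chains of Theorem~\ref{th1}, observes that all transitions beyond the second step are deterministic so the surplus terms reduce to $C(1)=0$, recovers the same two-step functional equation, and then verifies that $C(p)=-k\ln p$ satisfies the general condition via $\ln$ of a product being the sum of $\ln$s. The only additions beyond the paper's sketch are minor bookkeeping (the $C(1)=nC(1)$ step and the relabeling of the branching degree), which are consistent with the intended argument.
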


	The above entropy measure of action-state path occupancy can be extended to the case where there is a discount factor $0< \gamma < 1$.
	To do so, we assume now that the paths can have a random length $n \geq 1$ that follows a geometric distribution, $p_n = \gamma^{n-1}(1-\gamma)$.  
	In this case, the occupancy of the paths is
	
	\bea
	\nonumber
	   C_{\text{global}} &=& (1-\gamma)\sum_{i_{1}} p_{i_0,i_1} C(p_{i_0,i_1}) + \gamma(1-\gamma) \sum_{i_1,i_2} p_{i_0,i_1} p_{i_1,i_2} C(p_{i_0,i_1}p_{i_1,i_2})
	   \\
	   && + \gamma^2(1-\gamma) \sum_{i_1,i_2, i_3} p_{i_0,i_1} p_{i_1,i_2} p_{i_2,i_3} C(p_{i_0,i_1}p_{i_1,i_2}p_{i_2,i_3}) + ...
	   \label{eq_C_global}
	\eea
	
	\noindent
	where the $n$-th term in the sum is the expected occupancy gain of paths of length $n$ weighted by the probability of a having a path with exactly such a length. 
	
	Equivalently, a path in course can grow one step further with probability $\gamma$ or be extinguished with probability $1-\gamma$. Therefore, 
	the occupancy in Eq. (\ref{eq_C_global}) should also be equal to the sum of the expected occupancy gains of the local states along the paths, defined as 
	
	\be
	C_{\text{local}} = \sum_{i_{1}} p_{i_0,i_1} C(p_{i_0,i_1}) + \gamma \sum_{i_1,i_2} p_{i_0,i_1} p_{i_1,i_2} C(p_{i_1,i_2})
	+ \gamma^2 \sum_{i_1,i_2, i_3} p_{i_0,i_1} p_{i_1,i_2} p_{i_2,i_3} C(p_{i_2,i_3}) + ...
	 \label{eq_C_local}
	\ee
	
	\noindent
	where the first term is the expected occupancy gain given by the initial condition, the second term is the expected occupancy gain in the next step weighted by the probability of having a path length of at least two steps, and so on.  
	
	Eqs. (\ref{eq_C_global}-\ref{eq_C_local}), after using the Markov chain in Corollary \ref{corollary2}, reduce to

	\bea
		\nonumber
		C_{\text{global}} &=& (1-\gamma)\sum_{i_{1}} p_{i_0,i_1} C(p_{i_0,i_1}) + \gamma(1-\gamma) \sum_{i_1,i_2} p_{i_0,i_1} p_{i_1,i_2} C(p_{i_0,i_1}p_{i_1,i_2})
		\\
		\nonumber
		&& + \gamma^2(1-\gamma) \sum_{i_1,i_2} p_{i_0,i_1} p_{i_1,i_2} C(p_{i_0,i_1}p_{i_1,i_2}) + ...
		\\
		\nonumber
		&=& (1-\gamma)\sum_{i_{1}} p_{i_0,i_1} C(p_{i_0,i_1}) + \gamma \sum_{i_1,i_2} p_{i_0,i_1} p_{i_1,i_2} C(p_{i_0,i_1}p_{i_1,i_2})
	\eea
	
	\noindent
	and
	
	\be
	\nonumber
	C_{\text{local}} = \sum_{i_{1}} p_{i_0,i_1} C(p_{i_0,i_1}) + \gamma \sum_{i_1,i_2} p_{i_0,i_1} p_{i_1,i_2} C(p_{i_1,i_2}) ,
	\ee
	
	\noindent
	where we have used $p_{i_2,i_3}=1$ because all transitions in the third step are deterministic.	
	
	Equality of these two quantities leads to Condition {\em 4},
	specifically, $\sum_{i_1,i_2} p_{i_0,i_1} p_{i_1,i_2} C(p_{i_0,i_1}p_{i_1,i_2}) = \sum_{i_{1}} p_{i_0,i_1} C(p_{i_0,i_1}) + \sum_{i_1,i_2} p_{i_0,i_1} p_{i_1,i_2} C(p_{i_1,i_2})$. 
	Therefore, the only consistent measure of occupancy with temporal discount is the entropy. Obviously, the equality of global and local time-discounted occupancies measured by entropy holds for any time-homogeneous or inhomogeneous Markov chain.	

\subsection{Critical policies and critical state-value functions}
\label{Sec:critical-policies}

Here, the expected return following policy $\pi$ in Eq. (\ref{eq_C_local}), known as the state-value function, 
is written recursively using the Bellman equation.  
Then, we find a non-linear system of equations for the critical policy and critical state-value function by taking partial derivatives with respect to the policy probabilities (Theorem \ref{th_opt_pi_v}).  


Using Eq. (\ref{eq_C_local}) and Theorem \ref{th1} with $k=1$, we define the expected return from state $s$ under policy $\pi$ as

\be
V_{\pi}(s) = - \sum_{i_{1}} p_{s,i_1} \ln p_{s,i_1} - \gamma \sum_{i_1,i_2} p_{s,i_1} p_{i_1,i_2} \ln p_{i_1,i_2}
- \gamma^2 \sum_{i_1,i_2, i_3} p_{s,i_1} p_{i_1,i_2} p_{i_2,i_3} \ln p_{i_2,i_3} + ...
\label{eq_V}
\ee

\noindent
where $p_{s,i_1}$ is the transition probability from state $s$ to action-state $x_{i_1}=(a_{i_1},s_{i_1})$. 
Note that in Eq. (\ref{eq_C_local}) we have replaced the initial action-state $i_{0}$ by the initial state $s$ alone, as the previous action that led to it does no affect the transition probabilities in the Markov decision process setting.
The expected returns satisfy the standard recurrence relationship \cite{sutton_introduction_1998}

\bea
V_{\pi}(s) &=& \sum_{a,s'} p_{s,(a,s')} 
\left(
- \ln p_{s,(a,s')} 
+ \gamma V_{\pi}(s')
\right)
\nonumber
\\
&=& \sum_{a,s'} \pi(a|s) p(s'|s,a) 
\left(
- \ln \pi(a|s) p(s'|s,a) 
+ \gamma V_{\pi}(s')
\right)
.
\label{eq_V_rec}
\eea

\noindent
Here, we have unpacked the sum over the action-state $i_{1}$ into a sum over $(a,s')$, where $a$ is the action made in state $s$ and $s'$ is its successor. The second equation shows, in a more standard notation, the explicit dependence of the expected return on the policy. It also highlights that the intrinsic immediate reward takes the form $R_{\text{intrinsic}}(s,a,s')=- \ln \pi(a|s) p(s'|s,a) $, which is unbounded. 

From Eq. (\ref{eq_V}) it is easy to see that the expected return exists (is finite) for any policy $\pi$ if the Markov decision process has a finite number of actions and states. 
Due to the properties of entropy, Eq. (\ref{eq_V}) is a sum of non-negative numbers bounded by $H_{max}=\ln (|A|_{max}|S|)$ ($|A|_{max}$ is the maximum number of available actions from any state) weighted by the geometric series, which guarantees convergence of the infinite sum for $-1<\gamma<1$.
An obvious, but relevant, implication of the above is that the expected return is non-negative and bounded, $0 \leq V_{\pi}(s) \leq H_{max} / (1-\gamma)$, for any state and policy.

While in Eq. (\ref{eq_V_rec}) the immediate intrinsic reward is the sum of the action and state occupancies, 
$R_{\text{intrinsic}}(s,a,s')=- \ln \pi(a|s) p(s'|s,a) = -\ln \pi(a|s) - \ln p(s'|s,a)$, we can generalize this reward to consider any weighted mixture of entropies as
$R_{\text{intrinsic}}(s,a,s')= - \alpha \ln \pi(a|s) - \beta \ln p(s'|s,a)$ for any two numbers $\alpha>0$ and $\beta \geq 0$. In particular, for $(\alpha,\beta)=(1,1)$ we recover the action-state occupancy of Eq. (\ref{eq_V_rec}), and for $(\alpha,\beta)=(1,0)$ and $(\alpha,\beta)=(0,1)$ we only consider action or state occupancy, respectively. The case $(\alpha,\beta)=(0,1)$ is understood as the limit case where $\alpha$ becomes infinitely small.
We note that the case $(\alpha,\beta)=(1,0)$ has often been used along with an external reward with the aim of regularizing the external reward objective
\cite{ziebart_modeling_2010,todorov_efficient_2009,schulman_equivalence_2017,haarnoja_soft_2018,hausman_learning_2018}. 
We also note that the case $(\alpha,\beta)=(1,-1)$, with negative $\beta$, constitutes an approximation to empowerment 
\cite{klyubin_empowerment_2005,jung_empowerment_2011}: the agent tries to maximize action entropy while minimizing state entropy conditioned to the previous action-state, which favors paths where there is more control on the resulting states. However, we do not consider this case in this paper.

Under the more general intrinsic reward, the expected return obeys 

\be
V_{\pi}(s)
= \sum_{a,s'} \pi(a|s) p(s'|s,a) 
\left(
- \ln \pi^{\alpha}(a|s) p^{\beta}(s'|s,a) 
+ \gamma V_{\pi}(s')
\right).
\label{eq_V_rec_gen}
\ee

\noindent
Our goal is to maximize the expected return over the policy probabilities $\pi = \{\pi(a|s) : a \in A(s), s \in S\}$ to obtain the optimal policy. 
Note that for $\alpha>0$ and $\beta \geq 0 $ the expected return is non-negative, $V_{\pi}(s) \geq 0$.

\begin{theorem}[]
	The critical values $V^c(s)$ of the expected returns $V_{\pi}(s)$ in equation (\ref{eq_V_rec_gen})
	with respect to the policy probabilities $\pi = \{\pi(a|s): a \in A(s), s \in S\}$ obey
	
	\be
	V^c(s) = \alpha \ln Z(s) = \alpha \ln	\left[
	\sum_{a \in \mathcal{A}(s)} \exp 
	\left(
	\alpha^{-1} \beta \mathcal{H}(S'|s,a)
	+ \alpha^{-1} \gamma \sum_{s'} p(s'|s,a)  V^c(s')
	\right)
	\right]
	\label{eq_v_opt}
	\ee
	
	\noindent
	where $\mathcal{H}(S'|s,a)= -\sum_{s'} p(s'|s,a) \ln p(s'|s,a)$ is the entropy of the successors of $s$ after performing action $a$, and $Z(s)$ is the partition function.
	
	The critical points (critical policies) are
	
	\be
	\pi^c(a|s) = \frac{1}{Z(s)} \exp 
	\left(
	\alpha^{-1} \beta \mathcal{H}(S'|s,a)
	+ \alpha^{-1} \gamma \sum_{s'} p(s'|s,a)  V^c(s')
	\right)
	,
	\label{eq_pi_opt}
	\ee
	
	\noindent
	one per critical value, where the partition function $Z(s)$ is the normalization constant.

	Defining $z_i=\exp( \alpha^{-1} \gamma V^c(s_i) )$, $p_{ijk}=p(s_j|s_i,a_k)$ and $\mathcal{H}_{ik} = \alpha^{-1} \beta \mathcal{H}(S'|s_i,a_k)$, Eq. (\ref{eq_v_opt}) can be compactly rewritten as
	
	\be
	z_i^{\gamma^{-1}} = \sum_k w_{ik} e^{\mathcal{H}_{ik}} 
	\prod_j z_j^{p_{ijk}}	 
	\label{eq_z_opt}
	\ee
	
	\noindent
	where the matrix with coefficients $w_{ik} \in \{0,1\}$ indicates whether action $a_k$ is available at state $s_i$ ($w_{ik} = 1$) or not ($w_{ik}=0$), and $j$ extends over all states, with the understanding that if a state $s_j$ is not a possible successor from state $s_i$ and action $a_k$ then $p_{ijk}=0$. 
		
	\label{th_opt_pi_v}
\end{theorem}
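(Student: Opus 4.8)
The plan is to treat the maximization of $V_\pi(s_0)$ over the policy probabilities $\{\pi(a|s)\}$ as a constrained optimization problem, with one normalization constraint $\sum_a \pi(a|s)=1$ per state, and to locate its critical points using Lagrange multipliers. The one genuine difficulty is that $V_\pi$ is defined self-consistently through the Bellman recursion (\ref{eq_V_rec_gen}), so perturbing a single probability $\pi(b|u)$ affects not only the immediate entropy term at $u$ but also every downstream successor value $V_\pi(s')$. The bulk of the work is to organize this recursive dependence so that the stationarity conditions decouple state by state.

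First I would differentiate the recursion (\ref{eq_V_rec_gen}) with respect to a single probability $\pi(b|u)$. Writing $V_\pi(s) = f_\pi(s) + \gamma\sum_{a,s'}\pi(a|s)p(s'|s,a)V_\pi(s')$ with $f_\pi(s)=\alpha\mathcal{H}(A|s)+\beta\sum_a\pi(a|s)\mathcal{H}(S'|s,a)$, the chain rule yields a linear recursion for the derivatives $D_s \equiv \partial V_\pi(s)/\partial\pi(b|u)$, namely $D_s = \delta_{s,u}\,g_u(b) + \gamma\sum_{a,s'}\pi(a|s)p(s'|s,a)D_{s'}$, whose only source term is the local quantity $g_u(b) = -\alpha\ln\pi(b|u)-\alpha+\beta\mathcal{H}(S'|u,b)+\gamma\sum_{s'}p(s'|u,b)V_\pi(s')$. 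In matrix form $D=(I-\gamma P_\pi)^{-1}$ applied to the source supported at $u$, where $P_\pi$ is the state-transition matrix under $\pi$. Since $(I-\gamma P_\pi)^{-1}=\sum_{t\ge0}\gamma^t P_\pi^t$ has nonnegative entries, this gives the clean factorization $\partial V_\pi(s_0)/\partial\pi(b|u) = d_\pi(u)\,g_u(b)$, with $d_\pi(u)\ge0$ the discounted visitation weight of $u$ from $s_0$.

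This factorization is the crux. Setting the gradient equal to the multiplier $\lambda_u$ for every action $b$ available at $u$, and taking $s_0=u$ (so that $d_\pi(u)\ge1>0$ by self-reachability), forces $g_u(b)$ to be independent of $b$ at each state $u$. Solving $g_u(b)=\text{const}$ for $\pi(b|u)$ gives $\pi(b|u)\propto\exp(\alpha^{-1}\beta\mathcal{H}(S'|u,b)+\alpha^{-1}\gamma\sum_{s'}p(s'|u,b)V_\pi(s'))$; the additive constant $-\alpha$ and the multiplier merge into the normalization $Z(u)=\sum_b\exp(\alpha^{-1}\beta\mathcal{H}(S'|u,b)+\alpha^{-1}\gamma\sum_{s'}p(s'|u,b)V_\pi(s'))$, producing exactly the Gibbs policy (\ref{eq_pi_opt}). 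Because this solution is strictly positive, no inequality constraint $\pi(b|u)\ge0$ is active, so the interior critical point is consistent. Substituting $-\alpha\ln\pi^c(b|u)=\alpha\ln Z(u)-G(u,b)$ (with $G(u,b)$ the exponent) back into the Bellman equation makes the $G(u,b)$ terms cancel, leaving $V^c(u)=\alpha\ln Z(u)\sum_b\pi^c(b|u)=\alpha\ln Z(u)$, which is (\ref{eq_v_opt}).

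Finally the compact form (\ref{eq_z_opt}) is a direct change of variables: setting $z_i=\exp(\alpha^{-1}\gamma V^c(s_i))$ turns $V^c(s_i)=\alpha\ln Z(s_i)$ into $z_i^{\gamma^{-1}}=Z(s_i)$, while $\alpha^{-1}\gamma\sum_j p_{ijk}V^c(s_j)=\sum_j p_{ijk}\ln z_j$ converts each exponentiated successor-value sum into the product $\prod_j z_j^{p_{ijk}}$, and the availability mask $w_{ik}$ restricts the action sum. I expect the main obstacle to be the gradient step: correctly handling the self-referential dependence of the successor values on $\pi(b|u)$ and establishing the resolvent factorization with strictly positive visitation weights, since it is exactly this positivity that legitimizes dividing out $d_\pi(u)$ and collapsing the global stationarity conditions into the per-state Gibbs form. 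The remaining substitution and change of variables are routine algebra.
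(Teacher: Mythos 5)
Your proposal is correct and follows essentially the same route as the paper: differentiate the Bellman recursion (\ref{eq_V_rec_gen}) with respect to the policy probabilities under the simplex constraints, isolate the action-dependent source term, conclude it must be constant across actions, solve for the Gibbs policy (\ref{eq_pi_opt}), substitute back to obtain (\ref{eq_v_opt}), and change variables to get (\ref{eq_z_opt}). The only difference is presentational: where the paper handles the self-referential dependence by introducing the nested multipliers $\lambda(s',s)$ and observing that the residual term $\gamma\sum_{a,s'}\pi^c(a|s)p(s'|s,a)\lambda(s',s)$ is action-independent, you solve the derivative recursion explicitly via the resolvent $(I-\gamma P_\pi)^{-1}$ and invoke the strict positivity of the self-visitation weight $d_\pi(u)\geq 1$ to divide it out — a somewhat more explicit justification of the same decoupling.
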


Note that the we simultaneously optimize $|S|$ expected returns, one per state $s$, each with respect to the set of probabilities $\pi=\{\pi(a|s) : a \in A(s), s \in S\}$.

\begin{proof}
    We first note that the expected return in Eq. (\ref{eq_expected_return}) is continuous and has continuous derivatives with respect to the policy except at the boundaries (i.e., $\pi(a|s)=0$ for some action-state $(a,s)$). 
	Choosing a state $s$, we first take partial derivatives with respect to $\pi(a|s)$ for each $a \in \mathcal{A}(s)$ in both sides of (\ref{eq_V_rec_gen}), and then evaluate them at a critical point $\pi^c$ to obtain the condition
	
	\bea
	\nonumber
	\lambda(s,s) &
	= & \sum_{s'} p(s'|s,a) 
	\left(
	- \ln (\pi^c(a|s))^{\alpha} p^{\beta}(s'|s,a) 
	+ \gamma V^c(s')
	\right)
	- \alpha
	+ \gamma \sum_{b,s'} \pi^c(b|s) p(s'|s,b) 
	\lambda(s',s)
	\\
	\nonumber
	&=& 
	- \alpha \ln \pi^c(a|s) 
	- \beta \sum_{s'} p(s'|s,a) \ln p(s'|s,a) 
	- \alpha
	\\
	&& 
	+ \gamma \sum_{s'} p(s'|s,a)  V^c(s')
	+ \gamma \sum_{b,s'} \pi^c(b|s) p(s'|s,b) 
	 \lambda(s',s) ,
	\label{eq_V_rec_gen_der}
	\eea
	
	\noindent
	where we have defined the partial derivative at the critical point $\frac{\partial V_{\pi}(s')}{\partial \pi(a|s)}|_{\pi^c} \equiv \lambda(s',s)$ and used the fact that this partial derivative should be action-independent. 
	To understand this, note that the critical policy should lie in the simplex $\sum_a \pi(a|s)=1$, $\pi(a|s) \geq 0$, and therefore the gradient of $V_{\pi}(s')$ with respect to the $\pi(a|s)$ at the critical policy should be along the normal to the constraint surface, i.e., the diagonal direction (hence, action-independent), or be zero.  
	Indeed, the action-independence of the $\lambda(s',s)$ also results from interpreting them as Lagrange multipliers: $\lambda(s',s)$ is the Lagrange multiplier corresponding to the state-value function at $s'$, $V_{\pi}(s')$, associated to the constraint $\sum_a \pi(a|s)=1$, $\pi(a|s) \geq 0$, defining the simplex where the probabilities $\{\pi(a|s) : a \in A(s)\}$ lie. 
	
	Noticing that the last term of Eq. (\ref{eq_V_rec_gen_der}) does not depend on $a$, we can solve for the critical policy $\pi^c(a|s)$ to obtain equation (\ref{eq_pi_opt}).
	Eq. (\ref{eq_pi_opt}) implicitly relates the critical policy with the critical value of the expected returns from each state $s$.
	Inserting the critical policy (\ref{eq_pi_opt}) into Eq.  (\ref{eq_V_rec_gen}), we get (\ref{eq_v_opt}),
	which is an implicit non-linear system of equations exclusively depending on the critical values. 
	
	It is easy to verify that the partial derivatives of $V_{\pi}(s)$ in Eq. (\ref{eq_V_rec_gen}) with respect to $\pi(a'|s')$ for $s \neq s'$ are
	
	\be
	\lambda(s,s') =  
	\gamma \sum_{s''} p(s''|s)  \lambda(s'',s')
	,
	\nonumber
	\ee
	
	\noindent 
	and thus they provide no additional constraint on the critical policy.
 \footnote{This set of equations along with Eq.  (\ref{eq_V_rec_gen_der}) generates a linear system of $\mathcal{S}^2$ equations for the $\mathcal{S}^2$ unknowns $\lambda(s,s')$. In the next section we show that the critical values $V^c(s)$ and critical policy $\pi^c(a|s)$ exists and are unique, and thus the system of equations for $\lambda(s,s')$ is of the type $\Lambda = \gamma P^\intercal \Lambda + F$, with unique matrices $\Lambda_{s s'}= \lambda(s,s')$, $P_{s' s}=p(s'|s) \equiv \sum_a \pi^c(a|s) p(s'|s,a)$ and $F_{s' s}$ is a diagonal matrix with $F_{s s} = V^c(s) - \alpha$. Because $P$ is a stochastic matrix, it does not have eigenvalues larger than one. Therefore the matrix $\mathbb{I}  - \gamma P^\intercal$ with $\gamma<1$ does not have zero eigenvalues, and thus it is invertible. The solution to the system is then unique and given thenby $\Lambda = (\mathbb{I}  - \gamma P^\intercal)^{-1} F$. }

\end{proof}
	 
We finally show that the optimal expected returns, as defined from the Bellman optimality equation

\be
V^*(s)
= \max_{\pi(\cdot|s)}
\sum_{a,s'} \pi(a|s) p(s'|s,a) 
\left(
- \ln \pi^{\alpha}(a|s) p^{\beta}(s'|s,a) 
+ \gamma V^*(s')
\right),
\label{eq_V_bellman}
\ee

\noindent
obey the same Eq. (\ref{eq_v_opt}) as the critical values of Eq. (\ref{eq_V_rec_gen}) do. 
To see this, note that after taking partial derivatives with respect to $\pi(a|s)$ for each $a \in \mathcal{A}(s)$ on the right-hand side of Eq. (\ref{eq_V_bellman}) we get

\be
0 = 
- \alpha \ln \pi(a|s) 
- \beta \sum_{s'} p(s'|s,a) \ln p(s'|s,a) 
+ \gamma \sum_{s'} p(s'|s,a)  V^*(s')
- \alpha 
+ \lambda(s) ,
\label{eq_bellman_der}
\ee

\noindent
where $\lambda(s)$ is the Lagrange multiplier associated to the constraint $\sum_a \pi(a|s) = 1$.
This equation, except for the irrelevant action-independent Lagrange multipliers, is identical to Eq. (\ref{eq_V_rec_gen_der}).
Eq. (\ref{eq_v_opt}) follows from inserting the resulting optimal policy into the Bellman optimality equation.


\subsection{Unicity of the optimal value and policy, and convergence of the algorithm}
\label{Sec:unicity}
	
We now prove that the critical value $V^c(s)$ is unique, in other words, equation (\ref{eq_v_opt}) admits a single solution (Theorem \ref{th_opt_pi_z}). We later prove that the solution is the optimal expected return (Theorem \ref{th_opt_pi_z_opt}). 

\begin{theorem}[]
	With the definitions in Theorem \ref{th_opt_pi_v}, the system of equations	
	
	\be
	z_i^{\gamma^{-1}} = \sum_k w_{ik} e^{\mathcal{H}_{ik}} 
	\prod_j z_j^{p_{ijk}}	 
	\label{eq_z_opt2}
	\ee
	
	\noindent
	with $0< \gamma < 1$, $\alpha >0$ and $\beta \geq 0$ has a unique solution in the positive first orthant $z_i > 0$, provided that for all $i$ there exists at least one $k$ such that $w_{ik}=1$.	
	The solution satisfies $z_i \geq 1$.
	
	Moreover, given any initial condition $z_i^{(0)}>0$ for all $i$, the infinite series $z_i^{(n)}$ defined through the iterative map 
	
	\be
	z_i^{(n+1)} = 
	\left(
	\sum_k w_{ik} e^{\mathcal{H}_{ik}} 
	\prod_j \left(z_j^{(n)}\right)^{p_{ijk}}	
	\right)^{\gamma} 
	\label{eq_z_map}
	\ee
	
	\noindent
	for $n \geq 0$ converges to a finite limit $z_i^{\infty} \geq 1$, and this limit is the unique solution of equation (\ref{eq_z_opt2})

	\label{th_opt_pi_z}
\end{theorem}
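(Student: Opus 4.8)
The plan is to recast the nonlinear system as a fixed-point problem for a monotone, concave-in-log map on the positive orthant, and then exploit these structural properties to extract existence, uniqueness, and global convergence in one package. First I would take logarithms. Writing $u_i = \ln z_i$, the map $z_i^{(n+1)} = \left(\sum_k w_{ik} e^{\mathcal{H}_{ik}} \prod_j (z_j^{(n)})^{p_{ijk}}\right)^{\gamma}$ becomes
\be
u_i^{(n+1)} = T_i(u) \equiv \gamma \ln\!\left(\sum_k w_{ik}\, e^{\mathcal{H}_{ik}}\, e^{\sum_j p_{ijk} u_j}\right).
\nonumber
\ee
The inner expression is a log-sum-exp of affine functions of $u$, so $T_i$ is a \emph{convex} function of $u$ scaled by $\gamma$; more usefully, I would record two elementary properties of $T=(T_i)$ that drive everything: (a) \textbf{monotonicity}, $u \le v$ (componentwise) implies $T(u) \le T(v)$, since all $p_{ijk}\ge 0$ and the exponential and logarithm are increasing; and (b) a \textbf{$\gamma$-contraction in the sup norm} in the sense of Blackwell/Krasnoselskii: because $\sum_j p_{ijk} = 1$ for each available $(i,k)$ (the successor-state probabilities sum to one), adding a constant $c$ to every component of $u$ yields $T_i(u + c\mathbf{1}) = T_i(u) + \gamma c$. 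Together with monotonicity, this discounting property gives $\|T(u) - T(v)\|_\infty \le \gamma \|u - v\|_\infty$ for all $u,v$.

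With the contraction in hand, the Banach fixed-point theorem on the complete metric space $(\mathbb{R}^{|\mathcal{S}|}, \|\cdot\|_\infty)$ immediately yields a unique fixed point $u^\infty$ of $T$, and the iterates $u^{(n)}$ converge to it geometrically from \emph{any} starting point $u^{(0)} = (\ln z_i^{(0)})$; exponentiating recovers a unique positive solution $z_i^\infty = e^{u_i^\infty}$ of Eq.~(\ref{eq_z_opt2}) and the convergence $z_i^{(n)} \to z_i^\infty$ regardless of the initial condition in the positive orthant. The hypothesis that for each $i$ there is at least one $k$ with $w_{ik}=1$ is exactly what guarantees the sum defining $T_i$ is nonempty, so $T$ is everywhere well-defined and finite-valued.

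It remains to establish the sign bound $z_i^\infty \ge 1$, i.e.\ $u_i^\infty \ge 0$, which I would obtain by a monotonicity-plus-induction argument rather than by contraction. Since $\mathcal{H}_{ik} = \alpha^{-1}\beta\,\mathcal{H}(S'|s_i,a_k) \ge 0$ (entropies are non-negative and $\alpha>0$, $\beta\ge 0$) and at least one $w_{ik}=1$, starting the iteration from $u^{(0)} = 0$ gives $u_i^{(1)} = \gamma \ln(\sum_k w_{ik} e^{\mathcal{H}_{ik}} e^{0}) \ge \gamma \ln(1) = 0 = u_i^{(0)}$, using $\sum_k w_{ik} \ge 1$ and $e^{\mathcal{H}_{ik}} \ge 1$. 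By property (a), $T$ is monotone, so $u^{(1)} \ge u^{(0)}$ propagates to a non-decreasing sequence $u^{(0)} \le u^{(1)} \le u^{(2)} \le \cdots$, whence the limit satisfies $u^\infty \ge 0$ and therefore $z_i^\infty \ge 1$. The main obstacle, and the step deserving the most care, is verifying the Blackwell-type discounting bound (b): it hinges on the normalization $\sum_j p_{ijk} = 1$ holding for every \emph{available} action, and on handling the formal convention that unavailable successors contribute $p_{ijk}=0$ without breaking the ``adding a constant shifts $T$ by $\gamma c$'' identity. Once (b) is pinned down cleanly, uniqueness and convergence follow from standard fixed-point theory and the lower bound from the monotone-iteration argument above.
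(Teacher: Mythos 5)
Your proof is correct, but it takes a genuinely different route from the paper's. You pass to log coordinates $u_i=\ln z_i$ and verify that the resulting operator $T$ satisfies Blackwell's sufficient conditions --- monotonicity plus the discounting identity $T(u+c\mathbf{1})=T(u)+\gamma c\,\mathbf{1}$, which indeed relies only on $\sum_j p_{ijk}=1$ for the \emph{available} actions, the unavailable ones contributing nothing to the sum --- hence $T$ is a $\gamma$-contraction in the sup norm; Banach's theorem then delivers existence, uniqueness, and geometric convergence from any positive initial condition in one stroke, and the bound $z_i^\infty\ge 1$ follows from your monotone iteration started at $u^{(0)}=0$. The paper never invokes a fixed-point theorem: it first shows that the \emph{main} series started at $z_i^{(0)}=1$ is monotone non-decreasing and bounded above by $c^{1/(1-\gamma)}$ with $c=|A|_{\text{max}}e^{\mathcal{H}_{\text{max}}}$, hence convergent with limit at least $1$, and then proves that an arbitrary series $y_i^{(n)}$ with initial data between $u_{\text{min}}$ and $u_{\text{max}}$ is sandwiched as $u_{\text{min}}^{\gamma^n}z_i^{(n)}\le y_i^{(n)}\le u_{\text{max}}^{\gamma^n}z_i^{(n)}$, so that every orbit shares the limit of the main series, which forces uniqueness of the fixed point. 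The two arguments are close relatives: the paper's sandwich inequality is precisely the estimate $\|\ln y^{(n)}-\ln z^{(n)}\|_\infty\le\gamma^n\|\ln y^{(0)}-\ln z^{(0)}\|_\infty$, i.e.\ your contraction bound applied along the orbit of the map. What yours buys is a compact, standard packaging with an explicit geometric convergence rate; what the paper's buys is a more elementary argument (monotone convergence plus induction only), and it is consistent with the paper's own caveat that the map need not be contractive in the Euclidean metric on the $z$ variables --- your contraction lives in the sup norm on $\ln z$, a different metric, so there is no conflict.
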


Note that the condition that for all $i$ there exists at least one $k$ such that $w_{ik}=1$ imposes virtually no restriction, as it only asks for the presence of at least one available action in each state. For instance, in absorbing states, the action leads to the same state. 

Importantly, proving that the map (\ref{eq_z_map}) has a single limit regardless of the initial condition in the positive first orthant $z_i^{(0)}>0$ suffices to prove that equation (\ref{eq_z_opt2}) has a unique solution in that region, as then no other fix point of the map can exist.
Additionally, since the solution is unique and satisfies $z_i^{\infty} \geq 1$, the critical state-value function that solves equation (\ref{eq_v_opt}) is unique, and $V^c(s_i)=\alpha \gamma^{-1} \ln z_i^{\infty} \geq 0$, consistent with its properties. 

The map (\ref{eq_z_map}) provides a useful value-iteration algorithm used in examples shown in the Results section, and empirically is found to rapidly converge to the solution.

\begin{proof}
	We call the series $z_i^{(n)}$ with initial condition $z_i^{(0)}=1$ for all $i$ the {\em main} series.
	We first show that the main series is monotonic non-decreasing. 
	
	For $n=1$, we get
	
	\be
	z_i^{(1)} = 
	\left(
	\sum_k w_{ik} e^{\mathcal{H}_{ik}} 
	\prod_j \left(1\right)^{p_{ijk}}	
	\right)^{\gamma}
	\geq 1 = z_i^{(0)} 
	\label{eq_z_0}
	\ee
	
	\noindent
	for all $i$, using that there exists $k$ for which, $w_{ik} = 1$, $w_{ik}$ is non-negative for all $i$ and $k$, $\mathcal{H}_{ik} \geq 0$ and the power function $x^\gamma$ is increasing with its argument. 
	
	Assume that for some $n>0$, $z_i^{(n)} \geq z_i^{(n-1)}$ for all $i$. Then 
	
	\be
	z_i^{(n+1)} = 
	\left(
	\sum_k w_{ik} e^{\mathcal{H}_{ik}} 
	\prod_j \left(z_j^{(n)}\right)^{p_{ijk}}	
	\right)^{\gamma}
	\geq 
	\left(
	\sum_k w_{ik} e^{\mathcal{H}_{ik}} 
	\prod_j \left(z_j^{(n-1)}\right)^{p_{ijk}}	
	\right)^{\gamma}
	= z_i^{(n)}
	\label{eq_z_n}
	\ee
	
	\noindent
	using the same properties as before, which proves the assertion for all $n$ by induction.
	
	Now let us show that the main series is bounded. Define $\mathcal{H}_{\text{max}}=\max_{ik} \mathcal{H}_{ik}$, and obviously $\mathcal{H}_{\text{max}} \geq 0$. 
	
	For $n=1$ we have

	\be
	z_i^{(1)} = 
	\left(
	\sum_k w_{ik} e^{\mathcal{H}_{ik}} \right)^{\gamma}
	\leq 
	\left(
	|A|_{\text{max}}  e^{\mathcal{H}_{\text{max}}}
	\right)^{\gamma}
	\equiv c^{\gamma}
	\label{eq_z_0_c}
	\ee
	
	\noindent
	(remember that $|A|_{\text{max}}$ is the maximum number of available actions from any state).
	
	For $n=2$, 
	
	\bea
	\nonumber
	z_i^{(2)} &=& 
	\left(
	\sum_k w_{ik} e^{\mathcal{H}_{ik}} 
	\prod_j \left(z_j^{(1)}\right)^{p_{ijk}}	
	\right)^{\gamma}
	\leq 
	\left(
	\sum_k w_{ik} e^{\mathcal{H}_{ik}} 
	\prod_j c^{\gamma p_{ijk}}	
	\right)^{\gamma}
	\\
	\nonumber
	&=& \left(
	\sum_k w_{ik} e^{\mathcal{H}_{ik}} c^\gamma{}	
	\right)^{\gamma}
	= c^{\gamma^2}
	\left(
	\sum_k w_{ik} e^{\mathcal{H}_{ik}} 
	\right)^{\gamma}
	\leq c^{\gamma + \gamma^2} 
	\label{eq_z_n}
	\eea
	
	\noindent
	using the standard properties, $\sum_j p_{ijk}=1$ and Eq. (\ref{eq_z_0_c}).
	
	Assume that for some $n>1$ we have $z_i^{(n)} \leq c^{\gamma + \gamma^2+ ... + \gamma^{n}}$. We have just showed that this is true for $n=2$. Then
	
	\bea
	\nonumber
	z_i^{(n+1)} &=& 
	\left(
	\sum_k w_{ik} e^{\mathcal{H}_{ik}} 
	\prod_j \left(z_j^{(n)}\right)^{p_{ijk}}	
	\right)^{\gamma}
	\leq 
	\left(
	\sum_k w_{ik} e^{\mathcal{H}_{ik}} 
	c^{\gamma + ... + \gamma^{n}}	
	\right)^{\gamma}
	\\
	\nonumber
	&=&
	c^{\gamma^2+ ... + \gamma^{n+1}}
	\left(
	\sum_k w_{ik} e^{\mathcal{H}_{ik}} 
	\right)^{\gamma}
	\leq c^{\gamma + ...+ \gamma^{n+1}} 
	\label{eq_z_n}
	\eea
	
	\noindent
	and therefore it is true for all $n \geq 0$ by induction.
	
	Therefore the series $z_i^{(n)}$ is bounded by $c^{1/(1-\gamma)}$. Together with the monotonicity of the series, we have now proved that the limit $z_i^{\infty}$ of the series exists. Moreover, $z_i^{\infty} \geq z_i^{0} = 1$. 
	
	The above results can be intuitively understood: the `all ones' initial condition of the main series corresponds to an initial guess of the state-value function equal to zero everywhere. The iterative map corresponds to state-value iteration to a more optimistic value: as intrinsic reward based on entropy is always non-negative, the $z$-values monotonically increase after every iteration. 
	Finally, the $z$-values reach a limit because the state-value function is bounded.  
	
	We now show the central result that the series obtained by using the iterative map starting from any initial condition in the positive first orthant can be bounded below and above by two series that converge to the main series. Therefore, by building `sandwich' series we will confirm that any other series has the same limit as the main series. 
	
	Let the $y_i^{(0)}=u_i>0$ be the initial condition of the series $y_i^{(n)}$ obeying the iterative map (\ref{eq_z_map}), and define $u_{\text{min}}= \min_i u_i$ and $u_{\text{max}}= \max_i u_i$. Obviously, $u_{\text{min}} > 0$ and $u_{\text{max}} > 0$.
	Applying the iterative map once, we get  
	
	\bea
	\nonumber
	y_i^{(1)} &=& 
	\left(
	\sum_k w_{ik} e^{\mathcal{H}_{ik}} 
	\prod_j \left(y_j^{(0)}\right)^{p_{ijk}}	
	\right)^{\gamma}
	\leq 
	\left(
	\sum_k w_{ik} e^{\mathcal{H}_{ik}} 
	\prod_j \left(u_{\text{max}}\right)^{p_{ijk}}	
	\right)^{\gamma}
	\\
	\nonumber
	&=&
	\left(
	\sum_k w_{ik} e^{\mathcal{H}_{ik}} 
	u_{\text{max}}	
	\right)^{\gamma}
	=
	u_{\text{max}}^{\gamma}
	\left(
	\sum_k w_{ik} e^{\mathcal{H}_{ik}} 
	\right)^{\gamma}
	= 
	u_{\text{max}}^{\gamma} z_i^{(1)}
	\label{eq_y_0}
	\eea
	
	\noindent
	where in the last step we have used the values of the main series in the first iteration. We can similarly lower-bound $y_i^{(1)}$ to finally show that it is both lower- and upper-bounded by $z_i^{(1)}$ with different multiplicative constants,
	
	\be
	  u_{\text{min}}^{\gamma} z_i^{(1)} \leq  y_i^{(1)} \leq u_{\text{max}}^{\gamma} z_i^{(1)}
	  \label{eq_y_sandwich0}
	\ee
	
	Now, assume that 
	
	\be
	u_{\text{min}}^{\gamma^n} z_i^{(n)} \leq  y_i^{(n)} \leq u_{\text{max}}^{\gamma^n} z_i^{(n)}
	\label{eq_y_sandwichn}
	\ee
	
	\noindent
	is true for some $n>0$. Then, for $n+1$ we get
	
		\bea
	\nonumber
	y_i^{(n+1)} &=& 
	\left(
	\sum_k w_{ik} e^{\mathcal{H}_{ik}} 
	\prod_j \left(y_j^{(n)}\right)^{p_{ijk}}	
	\right)^{\gamma}
	\leq 
	\left(
	\sum_k w_{ik} e^{\mathcal{H}_{ik}} 
	\prod_j \left(u_{\text{max}}^{\gamma^n} z_i^{(n)}\right)^{p_{ijk}}	
	\right)^{\gamma}
	\\
	\nonumber
	&=&
	u_{\text{max}}^{\gamma^{n+1}}
	\left(
	\sum_k w_{ik} e^{\mathcal{H}_{ik}} 
	\prod_j \left( z_i^{(n)}\right)^{p_{ijk}}	
	\right)^{\gamma}
	= 
	u_{\text{max}}^{\gamma^{n+1}} z_i^{(n+1)}
	\label{eq_y_0}
	\eea
	
	\noindent
	by simply extracting the common factor in the fourth expression, remembering that $\sum_j p_{ijk}=1$, and using the definition of the main series in the last one. By repeating the same with the lower bound, we finally find that (\ref{eq_y_sandwichn}) holds also for $n+1$, and then, by induction, for every $n>0$. 
	
	The proof concludes by noticing that the limit of both  $u_{\text{max}}^{\gamma^{n}}$ and $u_{\text{min}}^{\gamma^{n}}$ is $1$, and therefore using (\ref{eq_y_sandwichn}) the limit $y_i^{\infty}$ of the series $y_i^{(n)}$ equals the limit of the main series, $y_i^{\infty} = z_i^{\infty}$.
	
	Note that the iterative map (\ref{eq_z_map}) is not necessarily contractive in the Euclidean metric, as it is possible that, 	
	depending on the values of $u_{\text{min}}$ and $u_{\text{max}}$ and the changes in the main series, the bounds in Eq. (\ref{eq_y_sandwichn}) initially diverge to finally converge in the limit. 
	
\end{proof}

\begin{theorem}[]
	The (unique) critical value $V^c(s)$ is the optimal expected return, that is, the one that attains the maximum expected return at every state for any policy, and we write $V^c(s)=V^*(s)$ 
	\label{th_opt_pi_z_opt}
\end{theorem}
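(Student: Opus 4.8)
The plan is to upgrade the purely first-order characterization behind Eq. (\ref{eq_v_opt}) --- which on its own only certifies that $V^c$ is a \emph{critical} point of the expected return --- into a genuine global optimality statement, using the uniqueness already secured in Theorem \ref{th_opt_pi_z} together with a verification (policy-improvement) argument. First I would read Eq. (\ref{eq_v_opt}) as the fixed-point equation $V^c = T V^c$ of the ``soft'' Bellman optimality operator $T$, where $(TV)(s)$ is the per-state maximum over the simplex $\{\pi(\cdot|s)\}$ of $\sum_{a}\pi(a|s)\bigl(\beta\mathcal{H}(S'|s,a)+\gamma\sum_{s'}p(s'|s,a)V(s')-\alpha\ln\pi(a|s)\bigr)$. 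The key point is that this per-state objective is a linear functional plus $\alpha$ times the Shannon entropy of $\pi(\cdot|s)$, hence \emph{strictly concave} for $\alpha>0$; its unique maximizer is the interior softmax policy of Eq. (\ref{eq_pi_opt}) and its maximal value is $\alpha\ln Z(s)$, exactly the right-hand side of Eq. (\ref{eq_v_opt}). Strict concavity, together with the fact that the entropy gradient $-\alpha\ln\pi(a|s)-\alpha$ diverges to $+\infty$ at the boundary, rules out maximizers with vanishing components, so the stationary point found in Theorem \ref{th_opt_pi_v} is the true per-state global maximum.

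Next I would exhibit a policy that attains $V^c$. Let $\pi^c$ be the softmax policy (\ref{eq_pi_opt}) built from $V^c$. Since $V^c$ solves Eq. (\ref{eq_v_opt}), it satisfies the Bellman consistency equation (\ref{eq_V_rec_gen}) for this particular $\pi^c$; because $0<\gamma<1$ the policy-evaluation operator $T_{\pi^c}$ is a $\gamma$-contraction on $\mathbb{R}^{|S|}$ with a unique fixed point $V_{\pi^c}$, so $V_{\pi^c}=V^c$. Thus $V^c$ is achievable by an admissible policy and is finite, consistent with the bound $0\le V_\pi(s)\le H_{max}/(1-\gamma)$ established earlier.

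It then remains to show $V_\pi(s)\le V^c(s)$ for every policy $\pi$ and state $s$. For arbitrary $\pi$, per-state maximality gives $V^c(s)=(TV^c)(s)\ge (T_\pi V^c)(s)$ pointwise, i.e. $V^c\ge T_\pi V^c$, where $T_\pi$ is the evaluation operator for $\pi$. This operator is monotone (its $V$-dependence enters only through $\gamma\sum_{a,s'}\pi(a|s)p(s'|s,a)V(s')$ with non-negative weights) and is a $\gamma$-contraction with fixed point $V_\pi$; iterating the inequality and using monotonicity yields the non-increasing chain $V^c\ge T_\pi V^c\ge T_\pi^2 V^c\ge\cdots$, whose limit is $V_\pi$. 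Hence $V^c\ge V_\pi$ for all $\pi$. Combined with $V_{\pi^c}=V^c$, this proves $V^c(s)=\max_\pi V_\pi(s)=V^*(s)$, with the maximum attained at $\pi^c$; boundary policies assigning zero probability to some actions cause no difficulty, since $\pi\ln\pi\to0$ keeps every $V_\pi$ finite.

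The step I expect to be the main obstacle is the first one: converting the Lagrange/first-order condition into a global statement. The subtlety is that Eq. (\ref{eq_v_opt}) and the companion derivation around Eq. (\ref{eq_bellman_der}) only express stationarity, and one must separately verify --- via concavity and the boundary blow-up of the entropy gradient --- that the stationary softmax is the per-state global maximum before the monotone-contraction verification can be invoked. Once that per-state maximality and the contraction property of $T_\pi$ are in hand, global optimality over the jointly-coupled value functions of all states follows routinely.
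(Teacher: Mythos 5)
Your proof is correct, but it takes a genuinely different route from the paper's. The paper never introduces a Bellman optimality operator or a concavity argument: instead it observes that the maximum of $V_\pi$ over the compact product of simplices is attained either at the (unique, interior) critical policy or on a boundary stratum where some $\pi(a_k|s_i)=0$; it identifies each such stratum with a new problem in which the availability matrix $w_{ik}$ is replaced by some $v_{ik}\le w_{ik}$, and then reuses the iterative map of Theorem \ref{th_opt_pi_z} to show by induction that the main series for the restricted problem is dominated termwise by that of the original problem, so every boundary critical value is dominated by $V^c$. Your argument instead establishes per-state \emph{global} maximality directly — the per-state objective is linear plus $\alpha$ times Shannon entropy, hence strictly concave with a boundary-repelling gradient, so the softmax of Eq. (\ref{eq_pi_opt}) attains the per-state max $\alpha\ln Z(s)$ — and then closes with the standard verification argument: $V^c=TV^c\ge T_\pi V^c$, monotonicity and $\gamma$-contraction of $T_\pi$ give $V^c\ge V_\pi$ for every $\pi$, while $T_{\pi^c}V^c=V^c$ gives attainment. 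What your route buys is that the boundary issue is dispatched in one stroke (no recursion over strata of restricted problems, which the paper leaves somewhat implicit, since a priori the max of a restricted problem could again sit on \emph{its} boundary) and the argument is self-contained given only that Eq. (\ref{eq_v_opt}) has a solution; what the paper's route buys is that it recycles the monotone-comparison machinery already built for the convergence proof and needs no new operators. Both are valid; yours is closer to the classical dynamic-programming verification theorem.
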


\begin{proof}	
	
	To show that $V^c(s)$ is the optimal expected return, we note that the maximum of the functions $V_{\pi}(s)$ with respect to policy $\pi$ should be at the critical policy or at the boundaries of the simplices defined by 
	$\sum_a \pi(a|s)=1$ with $0 \leq \pi(a|s) \leq 1$ for every $a$ and $s$, as
    the expected return $V_{\pi}(s)$ is continuous and has continuous derivatives with respect to the policy except at the boundaries. At the policy boundary, there exists a non-empty subset of states $s_i$ and a non-empty set of actions $a_k$ for which $\pi(a_k|s_i)=0$.
	Computing the critical value of the expected return along that policy boundary is identical to moving from the original to a new problem where we replace the graph connectivity matrix $w_{ik}$ in Eq. (\ref{eq_z_opt2}) by a new one $v_{ik}$ such that $v_{ik} \leq w_{ik}$ (remember that at the boundary there should be an action $a_k$ that were initially available from state $s_i$, $w_{ik}=1$, that at the policy boundary is forbidden, $v_{ik}=0$).
	We now define the convergent series $z_i^{(n)}$ 
	and $y_i^{(n)}$ for the original and new problems respectively by using the iterative map (\ref{eq_z_map}) with initial conditions equal to all ones.
	We prove now that $z_i^{(n)} \geq y_i^{(n)}$ for all $i$ for $n=1,2,...$, and thus their limits obey $z_i^{\infty} \geq y_i^{\infty}$. 
	
	For $n=1$, we get
	
	\be
	z_i^{(1)} = 
	\left(
	\sum_k w_{ik} e^{\mathcal{H}_{ik}} 
	\prod_j \left(1\right)^{p_{ijk}}	
	\right)^{\gamma}
	\geq 
	\left(
	\sum_k v_{ik} e^{\mathcal{H}_{ik}} 
	\prod_j \left(1\right)^{p_{ijk}}	
	\right)^{\gamma}
	= y_i^{(1)}
	\label{eq_z_map0}
	\ee
	
	\noindent
	for all $i$, using that $w_{ik} \geq v_{ik}$ and that the power function $x^\gamma$ is increasing with its argument.
	
	Assuming that $z_i^{(n)} \geq y_i^{(n)}$ for all $i$ for some $n>0$, then 
	
	\be
	z_i^{(n+1)} = 
	\left(
	\sum_k w_{ik} e^{\mathcal{H}_{ik}} 
	\prod_j \left(z_j^{(n)}\right)^{p_{ijk}}	
	\right)^{\gamma}
	\geq 
	\left(
	\sum_k v_{ik} e^{\mathcal{H}_{ik}} 
	\prod_j \left(y_j^{(n)}\right)^{p_{ijk}}	
	\right)^{\gamma}
	= y_i^{(n+1)}
	\label{eq_z_map0}
	\ee
	
	\noindent
	using the same properties as before, which proves the assertion for all $n$ by induction.
	
	Remembering that the expected return $V(s_i)$ is increasing with $z_i$, we conclude that the expected return obtained from policies restricted on the boundaries of the simplices is no better than the original critical value of the expected return.
	
\end{proof}

\subsection{Particular examples}
\label{sec:examples}

Here we summarize the main results and specialize them to specific cases. We assume $0< \gamma < 1$, $\alpha >0$ and $\beta \geq 0$ and use the notation 
$z_i=\exp( \alpha^{-1} \gamma V^*(s_i) )$, where $V^*(s)$ is the optimal expected return, $p_{ijk}=p(s_j|s_i,a_k)$ and $\mathcal{H}_{ik} = \alpha^{-1} \beta \mathcal{H}(S'|s_i,a_k)$, where $\mathcal{H}(S'|s,a)= -\sum_{s'} p(s'|s,a) \ln p(s'|s,a)$. 

\subsubsection{Action-state entropy maximizers}\label{st-act-max}

Agents that seek to maximize the discounted action-state path entropy follow the optimal policy

\be 
\pi^*(a_k|s_i) = \frac{1}{Z_i}
\left(
w_{ik} e^{\mathcal{H}_{ik}} 
\prod_j z_j^{p_{ijk}}
\right)
\label{eq_pi_opt3}
\ee

\noindent
with 

\be
Z_i = 
\sum_{k} w_{ik} e^{\mathcal{H}_{ik}} 
\prod_j z_j^{p_{ijk'}}	 
\label{eq_Z_opt3}
\ee

\noindent
The matrix with coefficients $w_{ik} \in \{0,1\}$ indicate whether action $a_k$ is available at state $s_i$ ($w_{ik} = 1$) or not ($w_{ik}=0$).

The expected return (state-value function) in terms of the $z$ variables obeys	

\be
z_i^{\gamma^{-1}} = \sum_k w_{ik} e^{\mathcal{H}_{ik}} 
\prod_j z_j^{p_{ijk}}	 
\label{eq_z_opt3}
\ee

\subsubsection{Action-only entropy maximizers}\label{act-max}

Agents that ought to maximize the time-discounted action path entropy correspond to the above case with $\beta=0$, and therefore the optimal policy reads as

\be
\pi^*(a_k|s_i) = \frac{1}{Z_i}
\left(
w_{ik} \prod_j z_j^{p_{ijk}}
\right)
\label{eq_pi_opt4}
\ee

\noindent
with 

\be
Z_i = 
\sum_{k} w_{ik} 
\prod_j z_j^{p_{ijk}}	 
\label{eq_Z_opt4}
\ee

The state-value function in terms of the $z$ variables obeys	

\be
z_i^{\gamma^{-1}} = \sum_k w_{ik} 
\prod_j z_j^{p_{ijk}}	 
\label{eq_z_opt4}
\ee

\subsubsection{Entropy maximizers in deterministic environments}

In a deterministic environment $p_{i,j(i,k),k}=1$ for successor state $j=j(i,k)$, and zero otherwise. In this case, at every state $i$ we can identify an action $k$ with its successor state $j$. Therefore, the optimal policy is

\be
\pi^*(a_k|s_i) = \frac{w_{ij} z_j}{Z_i}
\label{eq_pi_opt5}
\ee

\noindent
with 

\be
Z_i = 
\sum_{j} w_{ij} z_{j}	 
\label{eq_Z_opt5}
\ee

The state-value function in terms of the $z$ variables reads

\be
z_i^{\gamma^{-1}} = \sum_j w_{ij} z_j	 
\label{eq_z_opt5}
\ee

\begin{figure}[t!]
\center
\includegraphics[width=\textwidth]{./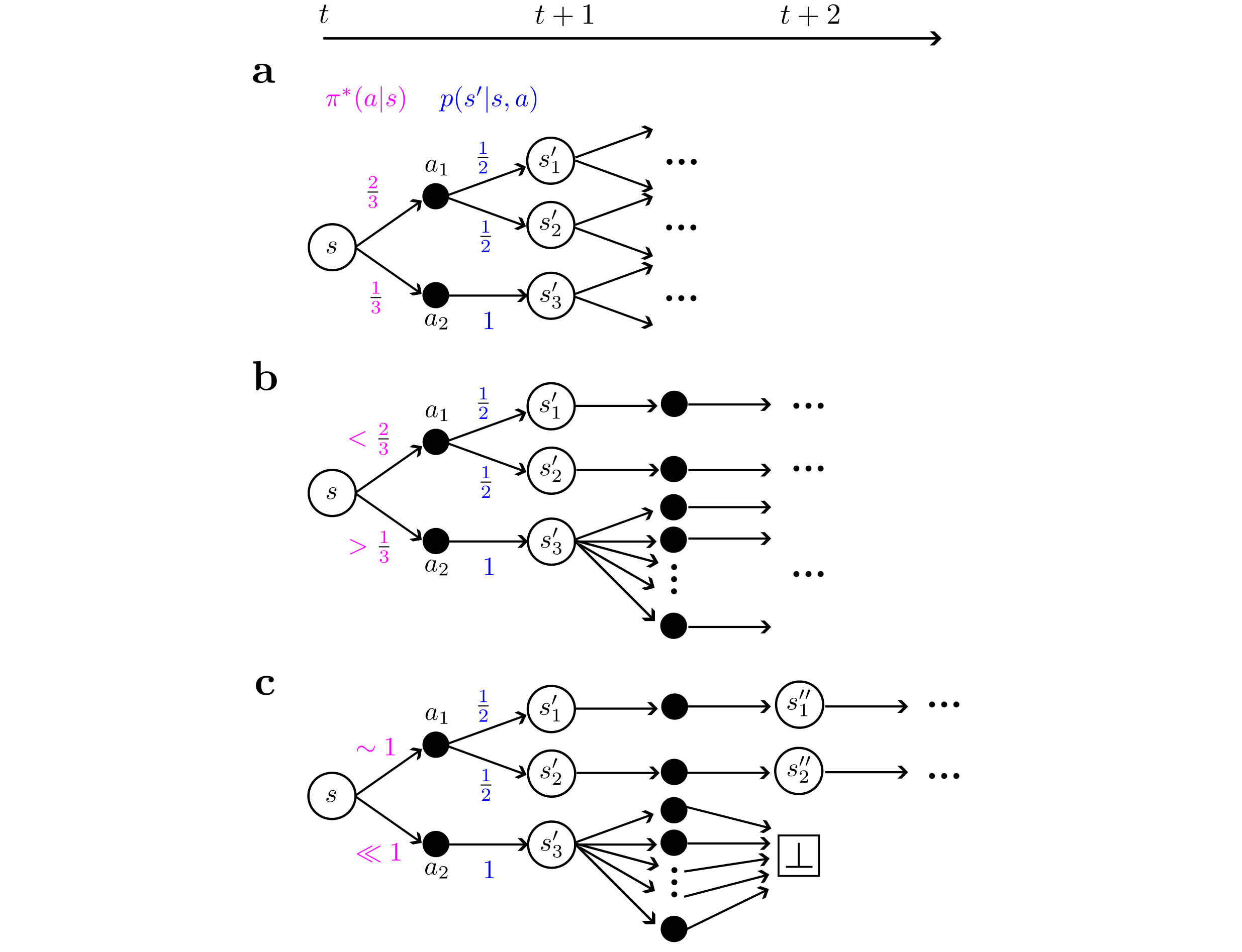} \newline
\\
 \caption{MOP agents determine stochastic policies that maximize occupancy of future action-state paths. In all panels, the three successive dots indicate that the future looks the same for all the states or actions involved from that point onwards. (a) At time $t$, the agent is faced with determining the optimal policy at state $s$. Given that taking action $a_1$ can stochastically lead to two distinct states $s_1'$ and $s_2'$, the optimal policy gives action $a_1$ twice the probability weight than to action $a_2$ (which only induces a deterministic transition to state $s_3'$). From time $t+1$, the future looks the same from all three states $s_i'$. (b) If the future does not look the same, and actually there are many more actions available at state $s_3'$ compared to $s_1'$ and $s_2'$, then more weight should be given to action $a_2$ than if the future was the same. (c) If, however, all the actions available at state $s_3'$ lead you to an absorbing state, almost zero weight should be given to action $a_2$.}
 \label{fig:schematic_formal}
 \end{figure}

\subsection{Experiments}
\label{sec:experiments}
In this subsection, we present the details for the numerical simulations performed for the different experiments in the manuscript. First, we discuss the construction of the MOP and R agents, and afterwards we present the details of each particular experiment.

\subsubsection{MOP agent}
\label{sec:h_agent}
In all the experiments presented, we introduce the MOP agent, whose name comes from the usual notation for using H to denote entropy. Therefore, the objective function that this agent maximizes in general is Eq. (\ref{eq_expected_return}). As described in section \ref{sec:examples}, the $\alpha$ and $\beta$ parameters control the weights of action and next-state entropies to the objective function, respectively. Unless indicated otherwise, we always use $\alpha = 1, \beta = 0$ for the experiments. It is important to note, as we have done before, that if the environment is deterministic, then the next-state entropy $\mathcal{H}(S'|s,a)= -\sum_{s'} p(s'|s,a) \ln p(s'|s,a)=0$, and therefore $\beta$ does not change the optimal policy, Eq. (\ref{eq_pi_opt_m}).

We have implemented the iterative map, Eq. (\ref{eq_z_map_m}), to solve for the optimal value, using $z^{(0)}_i = 1$ for all $i$ as initial condition. Theorem (\ref{th_opt_pi_z}) ensures that this iterative map finds a unique optimal value regardless of the initial condition in the first orthant. To determine a degree of convergence, we compute the supremum norm between iterations,
\[
\delta = \max_i |V_i^{(n+1)}-V_i^{(n)}|,
\]
where $V_i = \frac{\alpha}{\gamma}\log(z_i)$, and the iterative map stops when $\delta  < 10^{-3}$.

\subsubsection{R agent} 
\label{sec:r_agent}

We also introduce a reward-maximizing agent in the usual RL sense. In this case, the reward is $r=1$ for living and $r=0$ when dying. In other words, this agent maximizes life expectancy. Additionally, to emphasize the typical reward-seeking behavior and avoid degenerate cases induced by the tasks, we introduced a small reward for the Four-room grid world (see below). In all other aspects, the modelling of the R agent is identical to the MOP agent. To allow for reward-maximizing agents to display some stochasticity, we used an $\epsilon$-greedy policy, the best in the family of $\epsilon$-soft policies \cite{sutton_introduction_1998}. At any given state, a random admissible action is chosen with probability $\epsilon$, and the action that maximizes the value is chosen with probability $1-\epsilon$. Given that the world models $p(s'|s,a)$ are known and the environments are static, this $\epsilon$-greedy policy does not serve the purpose of exploration (in the sense of learning), but only to inject behavioral variability. Therefore, we construct an agent with state-independent variability, whose value function satisfies the optimality Bellman equation for this $\epsilon$-greedy policy, 

\be
V_{\epsilon}(s)=(1-\epsilon)\max_a\sum_{s'}  p(s'|s,a) \left(r+\gamma V_{\epsilon}(s')\right) +\frac{\epsilon}{|\mathcal{A}(s)|}\sum_{a,s'}  p(s'|s,a)\left(r+ \gamma V_{\epsilon}(s')\right),
\ee
where $|\mathcal{A}(s)|$ is the number of admissible actions at state $s$. To solve for the optimal value in this Bellman equation, we perform value iteration \cite{sutton_introduction_1998}.
The $\epsilon$-greedy policy for the R agent is therefore given  by
\[
\pi(a|s) = \begin{cases} 1-\epsilon + \frac{\epsilon}{|\mathcal{A}(s)|},& \text{if $a = \argmax_{a'}\sum_{s'}p(s'|s,a')\left(r +\gamma V_{\epsilon}(s')\right)$}\\ \frac{\epsilon}{|\mathcal{A}(s)|}, & \text{otherwise} \end{cases}
\]
where ties in $\argmax$ are broken randomly. Note that if $\epsilon = 0$, we obtain the usual greedy optimal policy that maximizes reward.

\begin{figure}
\center
\includegraphics[width=\textwidth]{./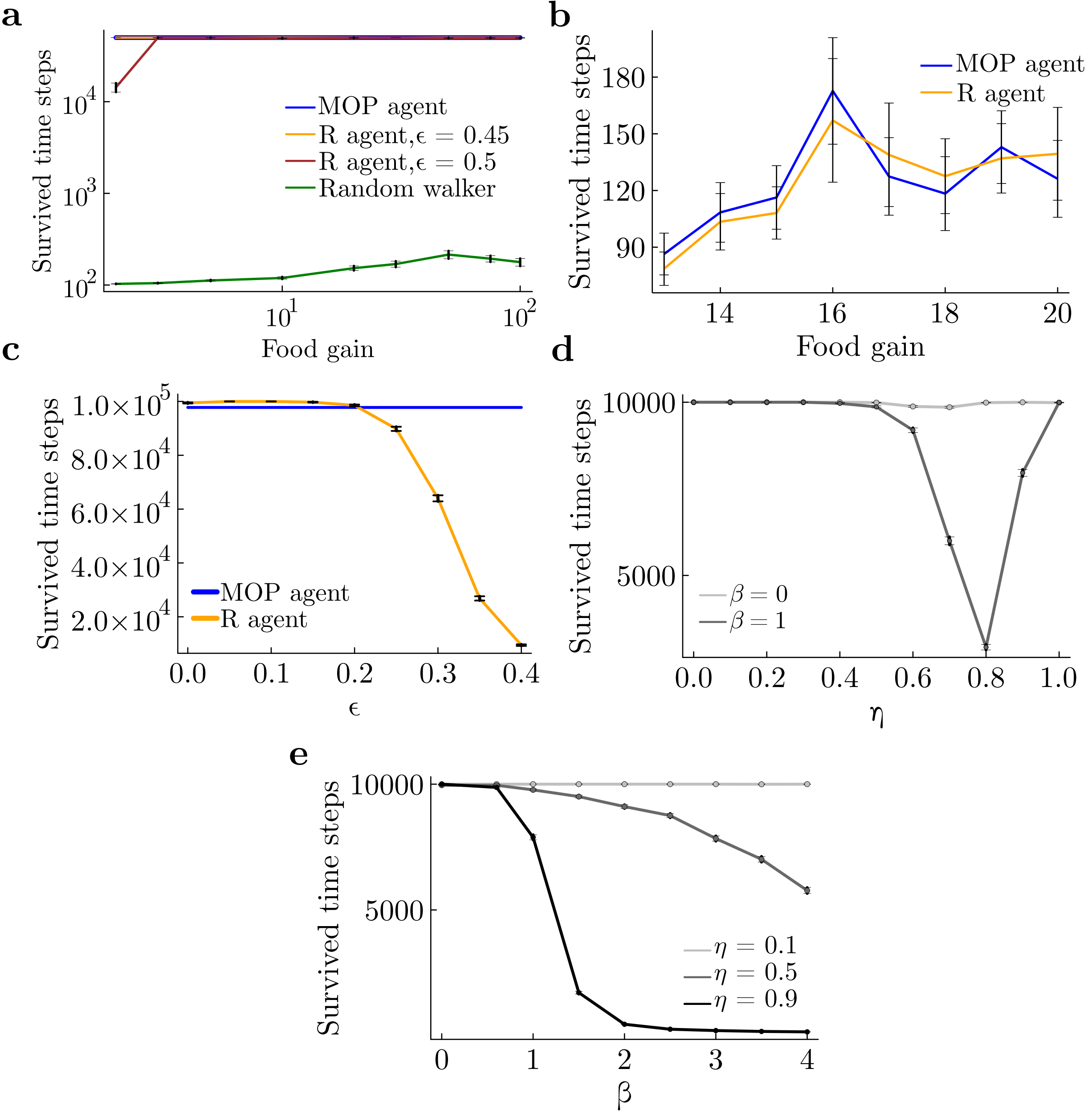} \newline
\\
 \caption{Survivability for the experiments considered in the manuscript. (a) Survivability of the various agents tested in the four-room grid world. At each 5E4 timestep episode, we recorded the survived time and averaged across episodes. (b) Survivability of the mouse for both MOP and R agents. (c) Survivability for the cartpole (Sec. \ref{sec:details_cartpole}) in the deterministic arena for the MOP agent and the $\epsilon$-greedy R agents, $\gamma = 0.98$. (d) Survivability for cartpole (Sec. \ref{sec:details_cartpole}) in the stochastic arena for the $\beta = 0$ and the $\beta = 1$ MOP agents. $\gamma = 0.99$. (e) Survivability of the cartpole (Sec. \ref{sec:details_cartpole}) MOP agents as a function of $\beta$, for various values of $\eta$. $\gamma = 0.99$}
 \label{fig:survival}
 \end{figure}

\subsubsection{Four-room grid world} 
\label{sec:4room}
\paragraph{Environment} The arena is composed of four rooms, each having size $5 \times 5$ locations where the agent can be in. From each room, the agent can go to two adjacent rooms through small openings, each located in the middle of the wall that separates the rooms. At each of these rooms, there is a food source located in the corner furthest from the openings. See Fig. \ref{fig:fourrooms} for a graphic description. Unless indicated otherwise, the discount factor is set to $\gamma = 0.99$.

\paragraph{States} The states are the Cartesian product between $(x,y)$ location and internal state $u$, which is simply a scalar value between a minimum of 0 and a maximum capacity of 100. All states such that $(x,y,u=0)$ are absorbing states, independently of the location $(x,y)$. The particular internal state $u=100$ is the maximum capacity for energy, such that even when at a food source, this internal state does not change. Therefore, the number of states in this experiment is $|\mathcal{S}| = 104 \,\text{external states} 
\times 101 \,\text{internal states} = 10504$.

\paragraph{Actions} The agent has a maximum of 9 actions: \texttt{up, down, left, right, up left, up right, down left, down right}, and \texttt{nothing}. Whenever the agent is close to a wall, the number of available actions decreases such that the agent cannot choose to go into walls. Finally, whenever the agent is in an absorbing state, only \texttt{nothing} is available.

\paragraph{Transitions} At any transition, there is a cost of 1 unit of energy for being alive. On the other hand, whenever the agent is located at a food source, there is an increase in energy that we vary parametrically that we call food gain $g$. For example, if the agent is in location $(2,1)$ at time $t$ and moves towards $(1,1)$ (where food is located), the change in energy would be $\Delta u_t = -1$, given that the change in internal energy depends only on the current state and action. If the agent decides to stay in $(1,1)$ at time $t+1$, then $\Delta u_{t+1} = -1 + g$.

\paragraph{R agent} As stated above, in this experiment we introduced an extra reward for the R agent when it reaches the food source. The magnitude is small compared to the survival reward ($1E-5$ smaller) and it mainly serves to break the degeneracy of the value function. The variability of the R agent is thus coming purely from the $\epsilon$-greedy action selection.

\paragraph{Survivability} To allow for the maximum uniform variability for the R agent, we tested various values for $\epsilon$ and observed the survivability of the agents as a function of $\epsilon$, across all the food gains tested (see Results section). The value of $\epsilon$ for which the R agent still survives as much as the MOP agent is $\epsilon = 0.45$ (see Figure \ref{fig:survival}a).

\paragraph{Noisy room} In this variation for the experiment, there is a room (the bottom right room) where transitions are uniformly random for all actions, across all possible neighboring locations. That is, for any location $s_{nr}$ in the noisy room, and any $a$ available at that location, given that it has $n(s_{nr})$ total neighbours (including the same location),
$$
p(s'|s_{nr},a) = \begin{cases} \frac{1}{n(s_{nr})} & \text{for } s' \in \text{neighbours}
\\ 0 & \text{otherwise}
\end{cases}
$$

\subsubsection{Predator-prey scenario}
\label{sec:details_mouse}
Here we provide all details of the simulated experiments. Results are shown in Fig. \ref{fig:cat_mouse}.

\paragraph{Environment}
The environment is similar to that one used for the 4-room grid world described in \ref{sec:4room}. Apart from the agent (prey), there is also another moving subject (predator) with a simple predefined policy. The grid world consists of a “home” area, a rectangle 2x3 where the agent may enter, but the predator cannot. This home area has a small opening that leads to a bigger 4x7 rectangle arena available for both the agent and the predator. The only food source is located at the bottom-right corner of the common part of the arena, so that the agent needs to leave its home to boost its energy. Additionally, there is an obstacle which separates the arena in two parts with two openings, above and under the obstacle. This obstacle allows the agent to “hide” from the predator behind it. 

\paragraph{States}
The location of the predator is part of the agent's state, such that a particular state consists of the position of the agent, the position of the predator and the amount of energy of the agent. For this case, we set the maximum amount of energy $F$ equal to the food gain. Positions are 2-dimensional, and therefore the states are 5-dimensional.
In the  used arena there are $33$ possible locations for the agent and $26$ ones for the predator, so that the total number of states 
ranges from $11154$ for $F=13$ to $17160$ for $F=20$. 

\paragraph{Actions}
The agent has the same actions as in the four-room grid world. The maximum number of available actions is therefore 9. Moving towards obstacles or walls is not allowed. 

\paragraph{Transitions} The agent
loses one unit of energy every time step and increases the amount of energy up to a given maximum capacity level $F$ only at the food source. 
If the position of both the agent and the predator are the same, then the agent is "eaten" and moves to the absorbing state of death as well as in the case of energy equal to $0$. After entering the absorbing state the agent stays there forever.

The predator also moves as the agent (horizontally, vertically, diagonally on one step or to stay still). Steps of the agent and the predator happen synchronously. The predator is “attracted” to the agent: the probability of moving to some direction is an increasing function on the cosines $\cos \alpha_k$ of the angle $\alpha_k$ between this direction of motion $k$ and the direction of the radius vector from the predator to the agent. In particular, this probability is 

\be
p^c_k=C^{-1}\exp(\kappa \cos \alpha_k)
\label{p_cat}
\ee
where $\kappa$ is the inverse temperature of the predator and $C=\sum_k \exp(\kappa \cos \alpha_k)$ is a normalization factor. These probabilities are computed only for motions available at the current location of the predator, so that e.g. for the location at the wall the motions along the wall are taken into account, but not the motion towards the wall.

\paragraph{Goal}
The goal of the MOP agent is to maximize discounted action entropy, and thus 
to find the optimal state-value function using the iterative map in Eq. (\ref{eq_z_map_m}) with $\mathcal{H}_{ik}=0$ ($\beta=0$).
While using the iterative map, we take advantage of the fact that given an action the physical transition of the agent is deterministic, but the physical transition of the predator is stochastic. Therefore, the sum over successor states $j$ in Eq. (\ref{eq_z_map_m}) is simply a sum over the predator successor states. 


\paragraph{Parameters} \label{ss:arena_fig2}
$\gamma=0.98$, $F =15$ (if another value between 13 and 20 not mentioned), $\kappa=2$. Simulation time is 5000 steps.

\paragraph{Counting rotations} \label{ss:rotations}
We define a clockwise (counterclockwise) half-rotation as the event when the
agent came from the left part of the arena to the right part over the field above (under) the wall and from the right part to the left one over the field under (above) the wall without crossing the vertical line of the wall in between. One full rotation consists of two half-rotations in the same directions performed one after another. We counted the number of full rotations in both directions
in $70$ episodes of $500$ time steps each for both MOP and R agents for different values of the food gain $F$. Error bars were computed based on these $70$ repetitions. 
The fraction of clockwise rotations to total rotations (sum of clockwise and anticlockwise rotations) for different values of $F$ is shown at Fig. \ref{fig:cat_mouse}.

\paragraph{Survivability}
The $\epsilon$-greedy R agents display some variability that depends on $\epsilon$. To select this parameter, we matched average lifetimes (measured in simulations of $5000$ steps length) between the MOP and R agents, separately for every $F$. Lifetimes are plotted in Figure \ref{fig:survival}b.

\paragraph{Videos}

We have generated  one video for the MOP agent (Video 2) and another for the R agent (Video 3), both for $F=15$, $\kappa=2$, and $\epsilon=0.06$ for the R agent so as to match their average lifetimes as described above. In the videos, green vertical bar indicates the amount of energy by the agent at current time. When the agent makes at least one full rotation around the wall, it is indicated by the written phrase ``clockwise rotation'' or ``anticlockwise rotation''. Black vertical arrow indicates direction (`up' for clockwise and `down' for anticlockwise directions) of the half-rotation in the part of arena left from the wall.  

\subsubsection{Cartpole}
\label{sec:details_cartpole}
\paragraph{Environment} A cart is placed in a one-dimensional track with boundaries at $|x| = 1.8$. It has a pole attached to it, that rotates like an inverted pendulum with its pivot point on the cart. 

\paragraph{States} The dynamical system can be described by a four-dimensional external state $(x,v,\theta,\omega)$, where $x$ is the position of the cart, $v$ is its linear velocity, $\theta$ is the angle of the pole with respect to the vertical which grows counterclockwise, and $\omega$ is its angular velocity. In this case, we model the internal state $u$ simply with the binary variable \texttt{alive, dead}, where the agent enters the absorbing state \texttt{dead} if its position exceeds the boundaries, or if its angle exceeds 36 degrees. This amplitude of angles is larger than that typically assumed (12 degrees in \cite{brockman_openai_2016}), and therefore our system is allowed to be more non-linear and unstable. 
The state space is $[-1.8,1.8]\times (-\infty,\infty)\times [-36,36]\times (-\infty,\infty)\times\{0,1\}$. To solve for the state value function in Eq. (\ref{eq_z_map_m}), we discretize the state space by setting a maximum value for the velocities. Given all the parameters (allowed $x$ and $\theta$, magnitude of the forces, masses of cart and pole, length of pole and gravity, below), we empirically set the maximum values for $|v|= 6$ and $|\omega|= 3$, which the cart actually never exceeds. Therefore, we computed the state value function in a $31 \times 31 \times 31 \times 31 \times 2$ grid (number of states = $1.8\times10^6$). 

\paragraph{Actions} Any time the agent is \texttt{alive}, it has 5 possible actions: forces of $\{-40,-10,0,10,40\}$, where zero force is understood as \texttt{nothing}. If the agent is \texttt{dead}, then only \texttt{nothing} is allowed.

\paragraph{Transitions} This dynamical system is a standard task in reinforcement learning, namely the \texttt{cartpole-v0} system of the OpenAI gym \cite{brockman_openai_2016}. The solution of this dynamical system is given in Ref. \cite{florian_correct_2007}, where we use a frictionless cartpole. The equations for angular and linear accelerations are thus
\begin{align}
    \Ddot{\theta} &= \frac{-g\sin(\theta) + \frac{\cos(\theta)}{M + m}\left(-F + m\Dot{\theta}^2l\sin(\theta)\right)}{l\left(\frac{4}{3}-\frac{m\cos^2(\theta)}{M+m}\right)}\\
    \Ddot{x} &= \frac{1}{\cos(\theta)}\left(\frac{4}{3}l\Ddot{\theta}-g\sin(\theta)\right).
\end{align}
Given a force $F$, a deterministic transition can be computed from these dynamical rules, and a real-valued state transition is observed by the agents.

\paragraph{R agent} The reward signal is 1 each time the agent is alive and 0 otherwise. To allow for some variability in the action selection of the R agent, we implement an  $\epsilon$-greedy action selection as described above. For exposition purposes, in the manuscript we set $\epsilon = 0.0$, but we also compared to an R agent with $\epsilon$ chosen such that average lifetimes between MOP and R agents are matched (see Fig. \ref{fig:survival}c and Fig. \ref{fig:cartpoleeps}).

\begin{figure}[t]
\center
\includegraphics[width=0.7\textwidth]{./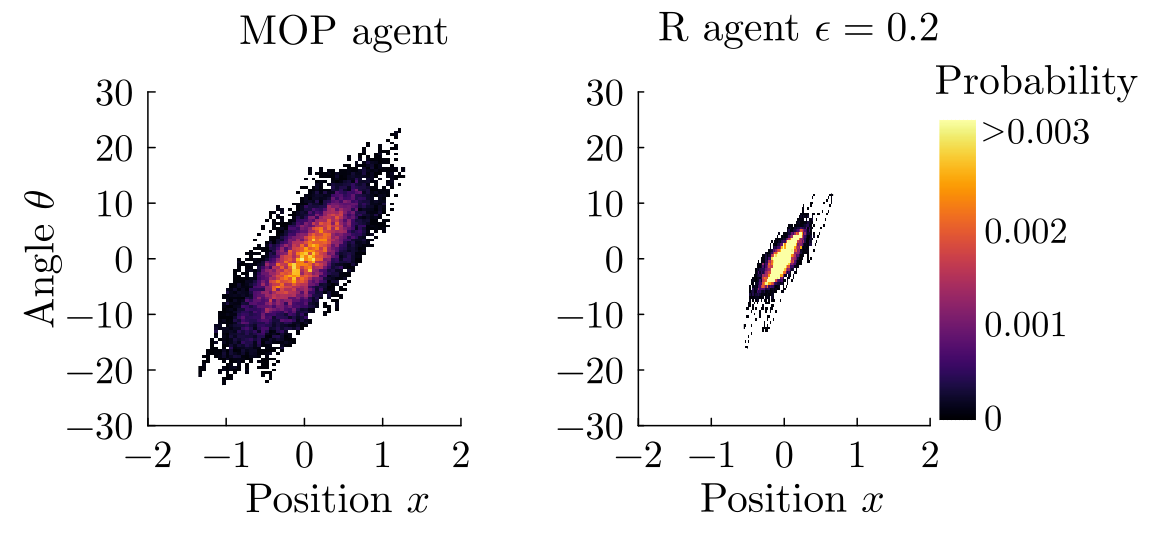} \newline
\\
 \caption{Histogram of angles and locations visited for the cartpole, as in Fig. \ref{fig:cartpole} of the main manuscript, for the MOP agent (left) and $\epsilon$-greedy R agent (right), with $\epsilon$ chosen such that MOP and R agents' lifetimes are similar (see Fig. \ref{fig:survival}c).}
 \label{fig:cartpoleeps}
 \end{figure}

\paragraph{Parameters} Mass of the cart $M = 1$, mass of the pole $m= 0.1$, length of the pole $l = 1$, acceleration due to gravity $g= 9.81$, time discretization $\Delta t = 0.02$. Unless specified differently, the discount factor was set to $\gamma = 0.98$.

\paragraph{Value interpolation}
The observed external state is a continuous four-dimensional variable, so we need to approximate the value function. In order to do so, we simply discretized the state space as described above, and use value iteration as described in Eq. (\ref{eq_v_opt_m}) in these grid points by performing a linear value interpolation for the successor states at each iteration. During a particular episode, the observed states might not be the same as the ones in the grid, so in order to compute the optimal policy at these states, we perform the same type of value interpolation as in the value iteration stage.

\paragraph{Stochastic arena} 
We introduced a slight variation to the environment, where the $x > 0$ half of the arena is noisy: agents choose an action (force), but the intended state transition of applying such an action fails with probability $\eta$ and succeeds with probability $1-\eta$. This is implemented as follows: given any state-action pair $(s,a)$ for which $x > 0$, there are two possible successor states, one corresponding to the intended action (force) chosen, and the other one corresponding to a zero force action:
\begin{equation}
    p(s' | s,a) = \begin{cases} 
			1 , & \text{if $x < 0$ and $s' \leftarrow (s,a)$}\\
            1-\eta, & \text{if $x > 0 $ and $s' \leftarrow (s,a)$} \\
            \eta, & \text{if $x > 0$ and $s' \leftarrow (s,0)$}
		 \end{cases}
\end{equation}

This stochasticity lets us differentiate between action path occupancy maximizers and action-state path occupancy maximizers by choosing any positive real value of $\beta$ in Eq. (\ref{eq_return}), because $\beta > 0$ agents will have a natural tendency to prefer $x>0$ locations.

\subsubsection{Agent-pet scenario}
\label{sec:details_pet}

An agent and a pet move in an arena with degrees of freedom that depend on the actions made by the agent, as explained next in detail.

\paragraph{Environment} A $5\times5$ arena. The middle column of arena can be blocked by a fence, a vertical obstacle that the pet cannot cross. The agent can cross it freely regardless of whether it is open or closed.
The agent can open or close the fence by performing the corresponding action when visiting the lever location, at the left bottom corner. 

\paragraph{States} 
The system's state consists of the Cartesian product of agent's location, pet’s location and binary state of the fence. So, the number of states is $1250$. 
For the sake of simplicity there is no internal states for the energy, and thus there are not absorbing states. 
The initial states of the agent and pet at the start of each episode are the middle of the second column and the right lower corner of the arena, respectively.

\paragraph{Actions} As in Sec. \ref{sec:details_mouse} the agent's actions are movements to one of the 8 neighbour locations as well as staying on the current one. Additionally, if the agent is on the ``lever'' location, an additional action is available, namely to open or close the fence, depending on its previous state.

\paragraph{Transitions}
The pet has the same available movements as the agent when the fence is open. The pet performs a random transition to any of the neighbour locations, or stays still, with the same probability.  
If the agent closes the fence, then the pet can only move on the side where it lies when closed.
For simplicity, if the fence is closed by the agent when the pet lies in the middle column, then the pet can only move to the right or left locations such that it will be at one side of the fence in the next time step.

\paragraph{Goal}
The goal of the MOP agent is to maximize discounted action-state entropy using the iterative map in Eq. (\ref{eq_z_map_m}) with
$\alpha=1$ and $\beta \in [0,1] $, parameters that measure the weight of action and state entropies, respectively.
As in the prey-predator example, we take advantage of the fact that given an action the physical transition of the agent is deterministic, while the physical transition of the pet is stochastic. Thus, the product over successor states $j$ in Eq. (\ref{eq_z_map_m}) is a product over the pet successor states.

\paragraph{Simulation details}
We ran simulations for several values of $\beta$, from $0$ to $1$ in $0.1$ steps, to interpolate between pure action entropy ($\beta=0$) and action-state entropy ($\beta=1$). We measured the fraction of time the gate was open using episodes of $2000$ steps averaged over $70$ simulations for each $\beta$, shown in Fig \ref{fig:friendly_cat}. Heat-maps in that figure correspond to the occupation probability by the pet for $\beta=0$ (left panel) and $\beta=1$ (right panel) using an episode of $5000$ steps.

\subsubsection{Quadruped-Ant}\label{sec:supplemental_ant}

\begin{figure}[t]
\center
\includegraphics[width=\textwidth]{./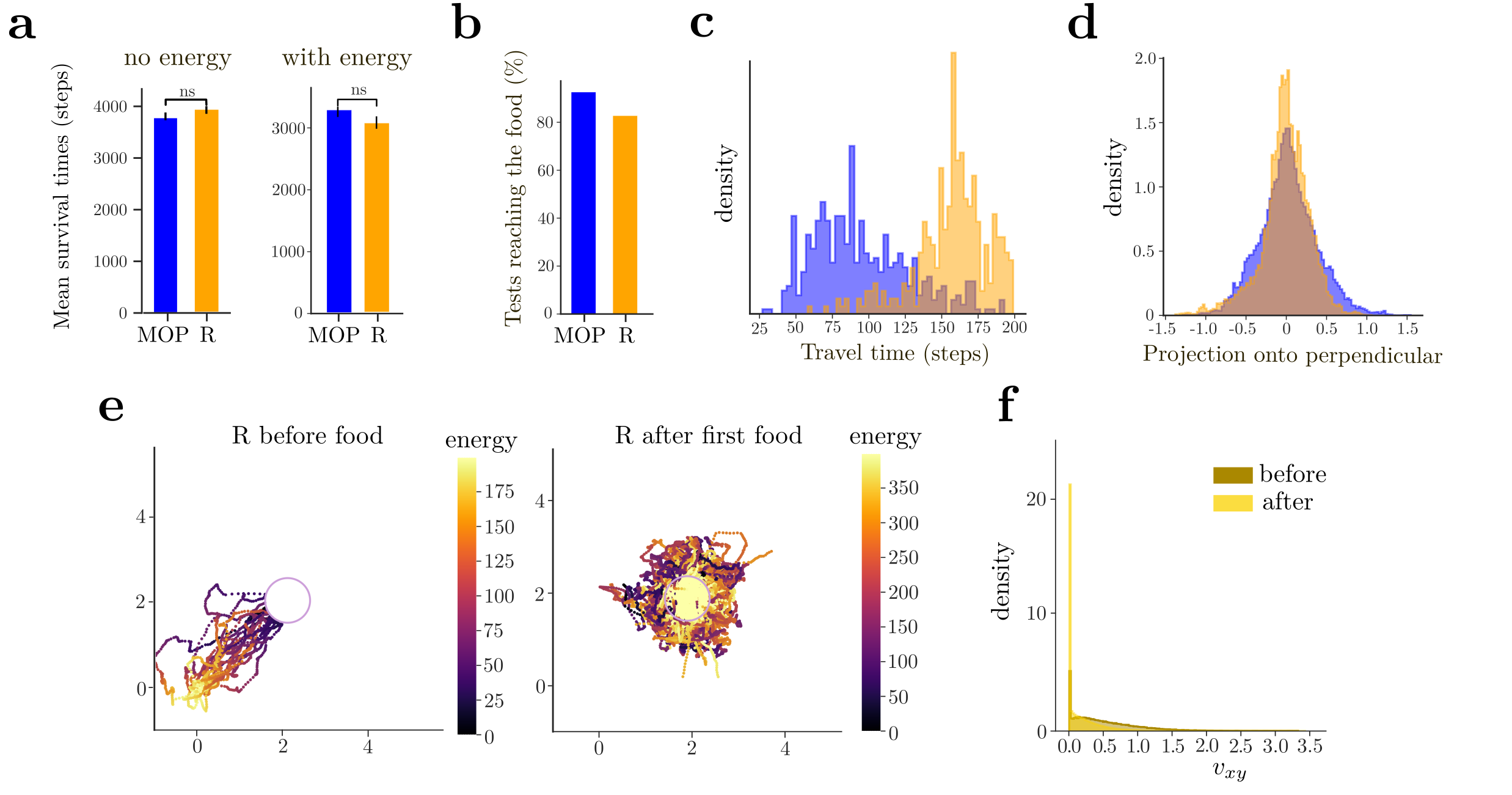} \newline
\\
 \caption{Comparison between the MOP agent and the R agent in the high-dimensional quadruped (ant) environment from Gymnasium. (a) In both experiments, $\epsilon$ for the R agent is chosen as to match the average survival time of the MOP agent. (b) Both MOP and R agents are able to reach the food source in most of the test runs. (c) Probability density function of the travel time, defined as the time the agent spends before encountering the food source for the first time. 
 (d) Probability density function of the projection of all points in a trajectory, for all trajectories, onto the line perpendicular to the shortest path connecting the origin and the food source during travel time (main diagonal). 
 (e) Display of 20 randomly chosen trajectories for the R agent before (left) and after (right) finding the food source for the first time. Colorcode defined by the energy level of the agent. (f) Probability density function of the planar speed before and after finding the food for the first time for the R agent. Both distributions show a peak at very low velocities, indicating prolonged periods of time in which the ant performs very little translational movements.}
 \label{fig:supplemental_ant}
 \end{figure}

Our goal is to show that MOP also works in high-dimensional, continuous action-state spaces. 
We employ the Ant-v4 environment from OpenAI's Gymnasium as our testing ground. We benchmark the performance of our entropy-maximizing agent against agents using an $\epsilon$-greedy strategy with rewards $R=1$ for every step except for absorbing states, where $R=0$ (R agent). All the relevant hyperparameters used to train these agents are provided in Table \ref{table:ant}.


\begin{table}[ht]
    \centering
    \caption{Hyperparameters for Ant environment}
    \begin{tabularx}{0.7\linewidth}{lX}
        \toprule
        Parameter & Value \\
        \midrule
        optimizer & Adam \\
        learning rate & \(3 \times 10^{-4}\) \\
        discount \((\gamma)\) & 0.999 \\
        replay buffer size & \(10^6\) \\
        number of hidden layers (all networks) & 2 \\
        number of hidden units per layer & 256 \\
        number of samples per minibatch & 100 \\
        number of training epoch & 300 \\
        steps per epoch & 10000 \\
        initial random steps & 20000 \\
        maximum episode length & 5000 \\
        nonlinearity & ReLU \\
        target smoothing coefficient \((\tau)\) & 0.005 \\
        number of agents & 5 \\
        test runs & 100 \\
        \bottomrule
    \end{tabularx}
    \label{table:ant}
\end{table}

In the first experiment, we study the behavioral variability of our agents and their average lifetime. The agent begins at the $(x,y)$ coordinate $(0,0)$ and follows its designated policy algorithm. The agent is considered "dead" if either it takes a step that results in the z-coordinate of its torso falling outside the range $[0.3, 1.0]$, or when the episode concludes.

In the second experiment, the agent possesses an energy value, represented as a scalar. The agent dies once it consumes all its energy. Each step taken by the agent consumes one energy point. It commences with an initial energy of 200, and its maximum energy capacity is set at 400. A food source is situated at the $(2,2)$ coordinate within the arena. Should the agent approach this source within a distance less than 0.5 from the center of its torso, it receives an energy boost of 25. The permissible z-coordinate range for the agent's torso remains consistent with the first experiment. In addition to the state vector provided by the OpenAI Gym environment, we incorporate the agent's energy level, its absolute position, and the food source's position. Interestingly, we found that the MOP agent travels to the food source much faster than the R agent (Supplemental Fig. \ref{fig:supplemental_ant}c, travel time distribution for the MOP agent is shifted towards short times), seemingly appearing less risky, given the stochastic nature of both agents' action selection. Even when the MOP agent travels faster, its trajectories are more variable compared to the R agent (Supplemental Fig. \ref{fig:supplemental_ant}d, projection of trajectories on a line perpendicular to the straight line that joins the origin and the food source).
In this second experiment, we find one out of five R agents not being able to reach the food source in a significant percentage of the test runs. For this reason, we excluded that agent from the analyses.

In the third experiment, the $x>0$ portion of the arena produces state transition noise. In the unperturbed case (first experiment), given a state $s$ and action $a$, the agent transitions deterministically to a state $s^p = step(s,a)$ (given by the Gymnasium package). For this experiment, we apply discrete noise to the resulting state $s'$ in the following way: the $i$-th coordinate of the new state $s_i'$ now independently transitions with probability $1/2$ to either of two states that are close to the unperturbed transition $s^p$. Specifically,
\begin{equation}\label{eq:cases_ant_beta}
p(s_i'|s,a) =\begin{cases}
    1/2 & \text{if} \quad x>0 \quad \text{and} \quad s_i' = (1+u) s^p_i  \\
    1/2 & \text{if} \quad x>0 \quad  \text{and} \quad s_i' = (1-u) s^p_i  \\
    1 & \text{if} \quad x\leq 0\quad \text{and} \quad s_i' = s^p_i
\end{cases},
\end{equation}
where 
$u$ is the noise magnitude parameter, which makes the transition noisy with probability $1/2$ (note that if $u=0$, the transition is deterministic, and corresponds to the unperturbed transition). The perturbations thus scale with respect to the unperturbed transition $s^p$, so that each coordinate gets noise proportional to its magnitude. We apply this noise only to the coordinates given by the 27-dimensional observation vector provided by Gymnasium, so we do not apply noise to the $x,y$ coordinates directly (see details at Ref. \cite{towers_gymnasium_2023}). We do not implement an energy constraint in this experiment. The parameter $\alpha$ is set to a constant equal to 1. 
Note that the intrinsic reward for the next-state transition obtained from being in $x>0$ is independent of $u > 0$, as $-\beta\log\left(p(s'|s,a)\right) = \beta\log2$, given that the stochasticity of the transition does not depend on the action. Following Eq. (\ref{eq:cases_ant_beta}), when $u = 0$, this intrinsic reward vanishes.

We found that $\beta \geq 0$ MOP agents are sensitive to the added noise, and they all managed to survive for almost the whole duration of the episodes after training (Fig. \ref{fig:ant_beta}a). First of all, when there is no noise ($u = 0$), the transition is deterministic, and agents do not show any preference to either side of the arena (Fig. \ref{fig:ant_beta}b, grey line, c, first row). When noise magnitude is finite but small ($u = 0.01$), $\beta = 0$ MOP agents do not show a significant preference between halves of the arena (Fig. \ref{fig:ant_beta}b,c). However, $\beta > 0$ MOP agents show a preference for the half of the arena that produces state transition noise (Fig. \ref{fig:ant_beta}b blue line, c second row). If the noise magnitude is larger ($u = 5\%,7\%$), small $\beta$ MOP agents (including $\beta = 0$) avoid the noisy half of the arena, given that noise can more easily cause the agent to fall (Fig. \ref{fig:ant_beta}b,c). Crucially, with increasing $\beta > 0$ we see an increasing preference for the noisy half of the arena (Fig. \ref{fig:ant_beta}b, increasing curves), without much effect on survival rates (Fig. \ref{fig:ant_beta}a). 

\begin{figure}[t!]
\center
\includegraphics[width=0.8\textwidth]{./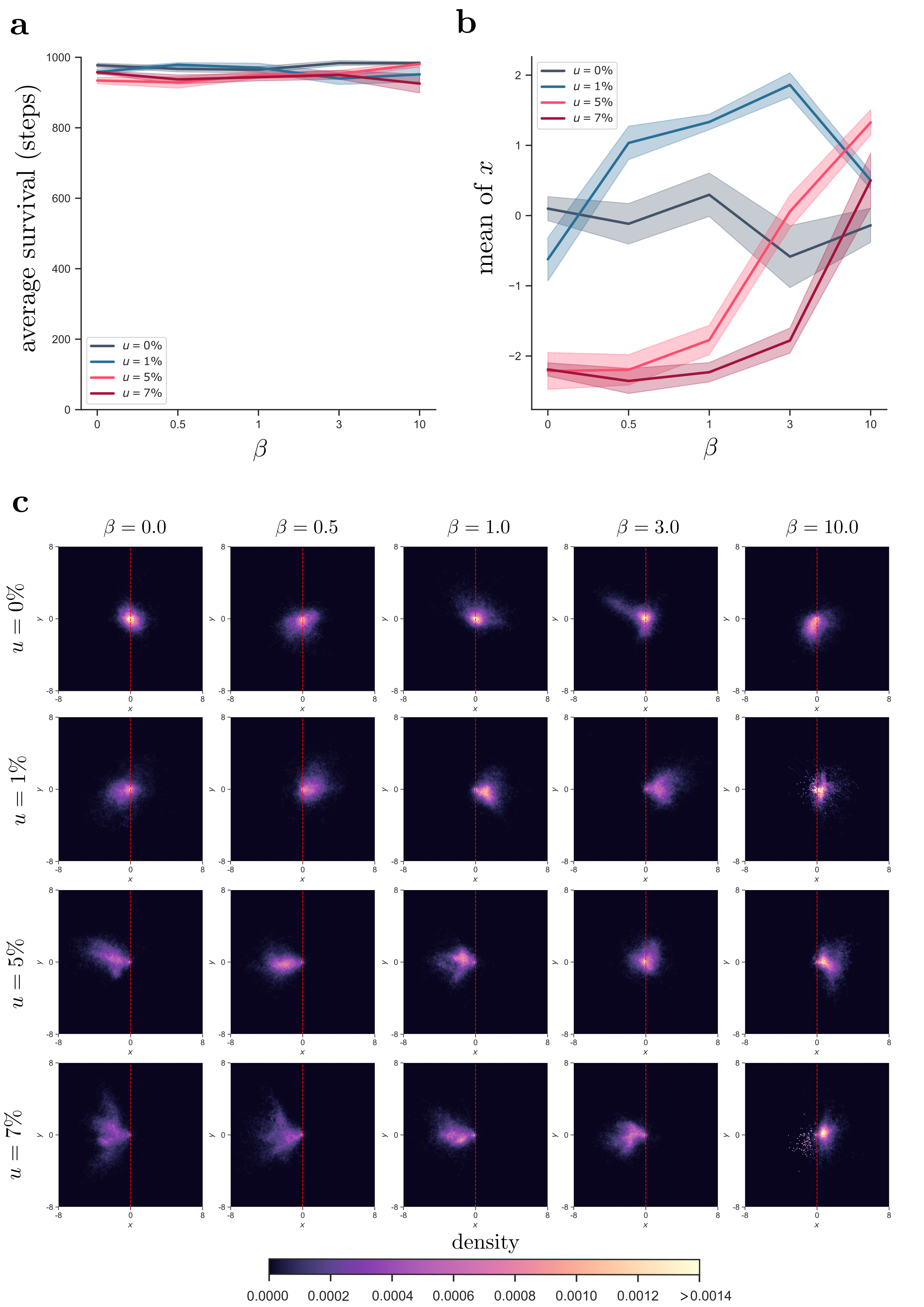} \newline
\\
 \caption{Ant shows flexible preferences for stochastic transitions in the half plane $x > 0$ as a function of the $\beta$ parameter, which controls the preference for state transition entropy, for a constant $\alpha=1$. Averages are across 1000 episodes for each of the 5 different random seeds. (a) Average survival times for the agents show that all agents learned to approximately survive 1000 step episodes. (b) Mean of $x$ position of the ant, as a function of next-state entropy weight $\beta$, for various noise magnitudes $u$.  (c) Position heatmaps of all agents for each combination of parameters. }
 \label{fig:ant_beta}
 \end{figure}

\subsection{Differences with KL regularization}\label{sec:supplemental_KL}

\begin{figure}[ht]
\center
\includegraphics[width=\textwidth]{./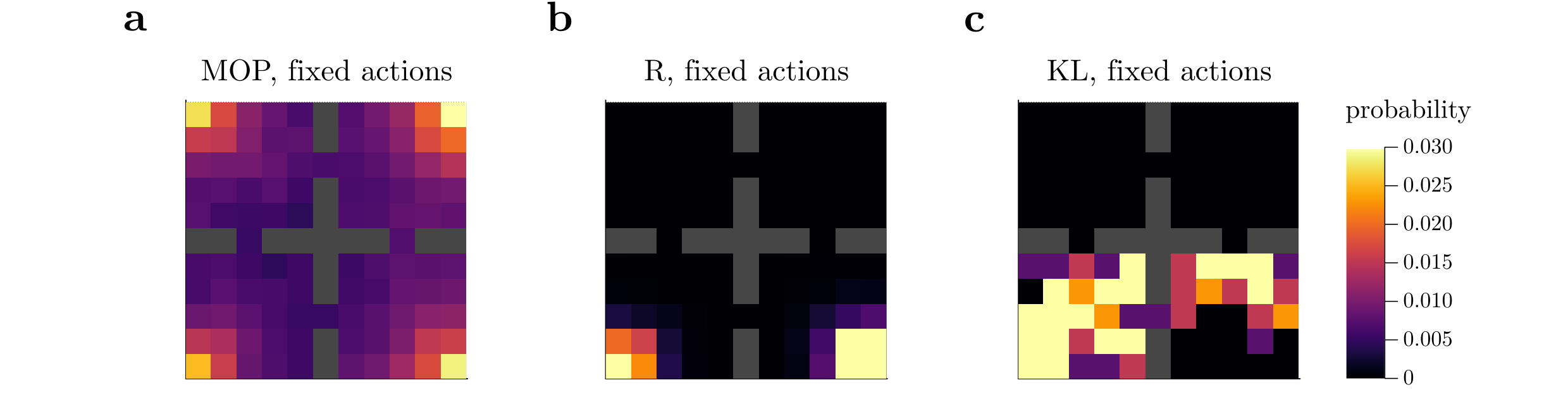} \newline
\\
 \caption{Fixing the number of actions for non-absorbing states in the gridworld environment, instead of having it be variable across states. We fix this number of actions at 9 for non-absorbing states. (a) MOP agent with fixed actions, (b) R agent with fixed actions, (c) KL regularization agent with fixed actions.
 }
 \label{fig:comparisonKL}
 \end{figure}

Given the similarity of our objective, Eq. (\ref{eq_return}) to a KL regularization scheme \cite{todorov_linearly-solvable_2006}, here we contrast predictions of using a KL divergence (relative entropy) compared to an absolute entropy objective. A relative entropy objective would look like a maximization of the cumulative immediate reward given by the negative KL divergence between a behavioral policy $\pi(a|s)$ and a default policy $\pi_0(a|s)$,

\begin{align}
    -D_{\text{KL}}(\pi(a|s)||\pi_0(a|s)) &=\sum_a \pi(a|s) \ln \frac{\pi_0(a|s)}{ \pi(a|s)} \\
    &=H(\pi(a|s)) - \ln(|\mathcal{A}(s)|) \label{eq:degenerateKL},
\end{align}
where the second equation comes from considering a default policy that is a uniform over actions, conveying the idea that we want to be as close to a uniform policy as possible. However, the lack of an extrinsic reward makes this case degenerate, in the sense that all states are equally preferred. We can see this by realizing that the highest possible immediate intrinsic reward in this case is zero, given that KL divergence is always non-negative. Thus, the optimal policy at all states is uniform over all available actions in each state. For instance, for absorbing states, where only one action is available,  Eq. (\ref{eq:degenerateKL}) is zero, making a ``KL agent" be equally attracted to non-absorbing and absorbing states, completely opposite to the motivation of our work, and illustrated in Supplemental Fig. \ref{fig:comparisonKL}c, where the agent dies very quickly. Furthermore, having a variable or a fixed number of actions for non-absorbing states is the same for a KL agent, since the relative entropy regularizes over the number of actions. This is in stark contrast with MOP, which intrinsically prefers states with a high number of actions. Having a fixed number of actions for non-absorbing states affects the behavior of MOP agents, which we can see in our gridworld experiment, comparing Fig. \ref{fig:fourrooms}b and Fig. \ref{fig:comparisonKL}. 

Finally, one could imagine setting up a default policy for a KL agent with a different set of available actions than the behavioral policy. In particular, we can set the action set for the default uniform policy to be fixed everywhere, including absorbing states. This amounts to shifting the immediate reward by a scalar everywhere, resulting in an equivalent objective as MOP
However, it is hard to see how one can justify allowing the default policy to have a different set of actions than the behavioral policy, especially because of the sum over actions in Eq. (\ref{eq:degenerateKL}) implies that we sum over all (im)possible actions and implicitly set the probability $\pi(a|s)$ of the behavioral policy to be zero for actions that are not in its support. In contrast, MOP does not have to deal with this problem, and can easily handle constant or variable number of actions for non-absorbing states.

\subsection{Comparison to Empowerment and Active Inference}

\subsubsection{Empowerment}\label{sec:empowerment_supplemental}

In this subsection we compare the behaviors attained by the MOP and empowered (MPOW) agents. We implemented empowerment for the 4-room gridworld and cartpole experiments. 

\subsubsubsection{4-room gridworld}
For the 4-room gridworld, we implemented empowerment in its original discrete formulation \cite{klyubin_empowerment_2005}. That is, we take the definition of empowerment of a particular state $s_t$ at time $t$ as the channel capacity between the agent's n-step actions $a^n_t=(a_t,a_{t+1},...,a_{t+n-1}) \in \mathcal{A}^n$ at this state, and the resulting states $s_{t+n}$,
\begin{equation}\label{eq:empowerment_discrete}
    \mathcal{C}(s_t) = \max_{p(a_t^n|s_t)} \sum_{\mathcal{A}^n, \mathcal{S}} p(s_{t+n}|s_t,a_t^n)p(a_t^n|s_t)\log\left(\frac{p(s_{t+n}|s_t,a_t^n)}{\sum_{\mathcal{A}^n}p(s_{t+n}|s_t,a_t^n)p(a^n_t|s_t)}\right),
\end{equation}
where $p(a_t^n|s_t)$ is the probability distribution of $n$-step actions that mutual information is maximized over, and $p(s_{t+n}|s_t,a_t^n)$ is the $n$-step world model, computed as $$p(s_{t+n}|s_t,a_t^n) = p(a_t^n|s_t) \prod_{\tau=0}^{n-1} p(s_{t+\tau+1}|s_{t+\tau},a_{t+\tau}) $$.

This maximization procedure is done via the Blahut-Arimoto algorithm \cite{blahut_computation_1972}, with a tolerance of $1\times 10^{-12}$ for $\norm{p_{k+1}(a_t^n|s_t) - p_{k}(a_t^n|s_t)}$, where $k$ is the iteration number of the algorithm. The initial condition for the $n$-step action probabilities is uniform over actions, and for this particular environment, very few iterations were needed for convergence (typically 3 or 4).

We initialize an agent at a particular location (in the center of a room, $(x,y) = (3,3)$), with an internal energy of $E = 30$, so that the initial state is $s = (E, x, y) = (30,3,3)$. The agent looks ahead at all possible immediately successor states $s_{t+1}$, computes their empowerment, and greedily chooses the action that corresponds to the successor state with highest empowerment (environment is deterministic). In our particular formulation, we allowed for a stochastic choice of action in case of empowerment ties between successor states. Note that the behavioral policy (greedy maximization of empowerment) and the probability of the $n$-step actions over which mutual information is maximized are different \cite{klyubin_empowerment_2005}. 

Given the nature of the arena, we implemented $5$-step empowerment, to give the agent enough lookahead to consider going into other rooms, while keeping the computations tractable, given the large amount of 1-step actions (9 for center cells). Usually, empowerment assumes a fixed amount of actions across states, and simply considers inconsequential actions to end in the current state, such as running into a wall resulting in staying in the same place. We implemented this original formulation of empowerment, although it is possible to implement state-dependent action sets, as for our formulation of MOP. This would still be meaningful for empowerment, as having more actions available results in more distinct successor states, producing similar predictions as in the original formulation of empowerment. 

\subsubsubsection{Cartpole}
For the case of the cartpole experiment, we implemented continuous-state empowerment, as developed in \citep{jung_empowerment_2011},

\begin{equation}\label{eq:empowerment_continuous}
    \mathcal{C}(s_t) = \max_{p(a_t^n|s_t)} \sum_{\mathcal{A}^n}p(a_t^n|s_t) \int_{\mathcal{S}} p(s_{t+n}|s_t,a_t^n)\log\left(\frac{p(s_{t+n}|s_t,a_t^n)}{\sum_{\mathcal{A}^n}p(s_{t+n}|s_t,a_t^n)p(a^n_t|s_t)}\right)\dd s_{t+n},
\end{equation}
where $p(s_{t+n}|s_t,a_t^n)$ is now a probability density over successor states $s_{t+n}$.

In order to have enough lookahead without needing high $n$, we used $3$-step empowerment with each action in the $3$-step action held constant for $k = 10$ time steps, in order for the computation of empowerment to be meaningfully different between states. Following \cite{jung_empowerment_2011}, we constructed a Gaussian process from where successor states $s_{t+n}$ can be drawn for each of the actions, only in the computation of empowerment (real dynamics are still deterministic). The standard deviation of the noise that blurs successor states was $\sigma = 0.01\mathbf{I}_{4\times4}$, as in \cite{jung_empowerment_2011}, independent of the action. The number of Monte Carlo samples needed to be drawn to approximate the high dimensional integral in Eq. (\ref{eq:empowerment_continuous}) was $N_{MC} = 300$. The computation of empowerment is done similarly as in the gridworld, through a Blahut-Arimoto algorithm described in \cite{jung_empowerment_2011}. Similarly, the agent looks ahead at successor states, computes their empowerment and greedily chooses the action that corresponds to the state with the highest empowerment.

\subsubsection{Active Inference}\label{sec:active_inference}

Second, we compared with an active inference approach \cite{da_costa_reward_2023}.
Note that our experiments assume full observability of states, although the partial observability condition has often been studied under active inference \cite{tschantz_reinforcement_2020}.
The Expected Free Energy (EFE) is defined as the quantity
\be
  G_{\pi,t}(s_t) = \sum_{\bar{s}_{t+1},\bar{a}_t} p_{\pi}(\bar{s}_{t+1},\bar{a}_{t}|s_t)
        \log \frac{p(\bar{s}_{t+1}|\bar{a}_{t},s_t)}{q(\bar{s}_{t+1})},
        \label{eq:FEF-G}
\ee
\noindent
which is to be minimized as a function of the policy $\pi$, which is allowed to change as a function of the state. Here $\bar{s}_{t+1} = (s_{t+1},s_{t+2},...,s_{T})$ and $\bar{a}_t = (a_t,a_{t+1},...,a_{T-1})$, that is, the sequence of future states and actions respectively from time $t$ up to some finite time $T$ given that the initial state at time $t$ is $s_t$.
Thus, $p_{\pi}(\bar{s}_{t+1},\bar{a}_{t}|s_t)$ and $p(\bar{s}_{t+1}|\bar{a}_t,s_t)$ refer to the join probability of future states and actions, and their conditional, respectively, given the initial state. 
The quantity $q(\bar{s}_{t+1})$ factorizes as $q(\bar{s}_{t+1})=\prod_{\tau=t}^{T-1}q(s_{\tau+1})$, where $q(s)$ is a time-independent probability describing the "desired" states of the agent, capturing the idea that desired states are independent of time. 
Note that $G_{\pi,t}(s_t)$ is the expectation over actions given a policy $\pi$ of the KL divergence between $p(\bar{s}_{t+1}|\bar{a}_t,s_t)$ and $q(\bar{s}_{t+1})$, that is, $G_{\pi,t}(s_t)=\mathbb{E}_{\bar{a}_t \sim \pi} \textit{KL}(p(\bar{s}_{t+1}|\bar{a}_t,s_t)||q(\bar{s}_{t+1}))$.
Note that because the time horizon is finite, here we need to consider time-dependent policies, so $\pi(a_t|s_t)$ is understood as the probability of selecting action $a_t$ at time $t$ given that the state at time $t$ is $s_t$. Time-independent policies will be suboptimal in general in finite horizon MDPs.

\begin{figure}[t]
\center
\includegraphics[width=\textwidth]{./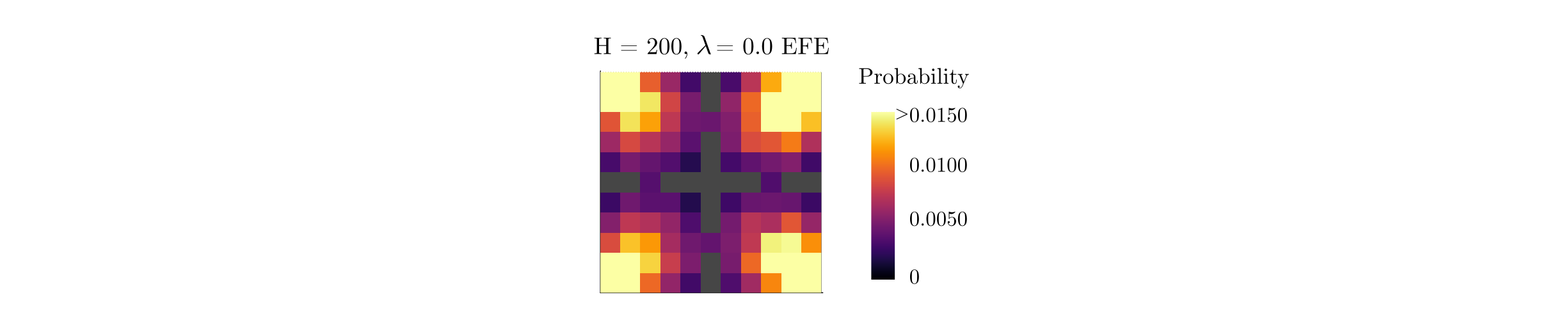} \newline
\\
 \caption{In our grid-world environment, the expected free energy (EFE) agent only visits a restricted portion of the arena, as long as the target distribution is not perfectly uniform (see Supplemental Sec. \ref{sec:supplemental_activeinference_details}). In the limit of infinite temperature ($\lambda  = 0$), the EFE degenerates to a survival maximization and the stochasticity is due to the Free Energy degeneracy across actions.}
 \label{fig:degenerateEFE}
 \end{figure}

Minimizing the objective in Eq. (\ref{eq:FEF-G}) is similar to MOP in that state transition entropy is being maximized, but it differs in that there is no action entropy and there is a regularizing distribution $q(s)$ towards which states should converge on the long run. 
The latter distinction highlights a difference in focus of the EFE and MOP approaches, but they can be made similar by just taking $q(s)$ to be uniform in state space.
However, the former difference is essential: the optimal policy of the EFE will be deterministic (see Sec. \ref{sec:active_inference}), while the optimal policy of MOP is stochastic. 
Therefore, one expects to find much larger behavioral variability under MOP than under EFE with uniform preference over all states.


By virtue of the Markov property, we have $p_{\pi}(\bar{s}_{t+1},\bar{a}_t|s_t) = \prod_{\tau=t}^{T-1} \pi(a_{\tau}|s_{\tau}) p(s_{\tau+1}|s_{\tau},a_{\tau})$ and $p(\bar{s}_t|\bar{a}_t,s_t) = \prod_{\tau=t}^{T-1} p(s_{\tau+1}|s_{\tau},a_{\tau})$.
Therefore, the objective in Eq. (\ref{eq:FEF-G}) can be recursively written as
\be
  G_{\pi,t}(s_t) = \sum_{s_{t+1},a_t} 
        \pi(a_{t}|s_{t}) p(s_{t+1}|s_{t},a_{t})
        \left[
        \log \frac{p(s_{t+1}|s_t,a_{t})}{q(s_{t+1})}
        +
        G_{\pi,{t+1}}(s_{t+1})
        \right]
        \label{eq:FEF-G-t}
\ee
\noindent
for $t<T-1$, while the terminal value is
\be
  G_{\pi,T-1}(s_{T-1}) = \sum_{s_{T},a_{T-1}} 
        \pi(a_{T-1}|s_{T-1}) p(s_{T}|s_{T-1},a_{T-1})
        \log \frac{p(s_{T}|s_{T-1},a_{T-1})}{q(s_{T})}
        \label{eq:FEF-G-T-1},
\ee
\noindent
as at time $T$ the episode terminates. 

Note that the above formalization slightly generalizes EFE \cite{da_costa_reward_2023} by allowing the possibility that the optimal policy is stochastic. Next we show that the optimal policy is deterministic. 

To find the optimal policy, we proceed backwards in time \cite{sutton_introduction_1998}. At time $T-1$ the optimal policy is deterministic because Eq. (\ref{eq:FEF-G-T-1}) is linear in the policy. The only exception is that there could be ties between several actions having the same value of the objective, in which case one can be always chosen arbitrarily, or they can be chosen randomly. Therefore, the optimal action is
\be
    a^{*}_{T-1}(s_{T-1}) = \argmin_{a}  \sum_{s_{T}} 
        p(s_{T}|s_{T-1},a)
        \log \frac{p(s_{T}|s_{T-1},a)}{q(s_{T})}
\ee
\noindent
and define the optimal return at time $T-1$ as
\be
  G^{*}_{T-1}(s_{T-1}) = \sum_{s_{T}}
        p(s_{T}|s_{T-1},a^{*}_{T-1}(s_{T-1}))
        \log \frac{p(s_{T}|s_{T-1},a^{*}_{T-1}(s_{T-1}))}{q(s_{T})}
        \label{eq:FEF-G-T-1-opt},
\ee
\noindent
Proceeding backwards, with $t=T-2, T-3, ...$, we find that again for all times the optimal policy is deterministic, and that the optimal action is 
\be
a^{*}_{t}(s_{t}) = \argmin_{a}  \sum_{s_{t+1}} 
        p(s_{t+1}|s_{t},a)
        \left[
        \log \frac{p(s_{t+1}|s_t,a)}{q(s_{t+1})}
        +
        G^{*}_{{t+1}}(s_{t+1})
        \right]
        ,
        \label{eq:FEF-G-t}
\ee
\noindent
where the optimal return is recursively computed as
\be
  G^{*}_{t}(s_t) = \sum_{s_{t+1}} 
        p(s_{t+1}|s_{t},a^{*}_{t}(s_{t}))
        \left[
        \log \frac{p(s_{t+1}|s_t,a^{*}_{t}(s_{t}))}{q(s_{t+1})}
        +
        G^{*}_{{t+1}}(s_{t+1})
        \right]
        .
        \label{eq:FEF-G-t}
\ee

\begin{figure}[!t]
    \centering
    \begin{minipage}{.35\textwidth}
        \centering
        \includegraphics[width=\linewidth]{./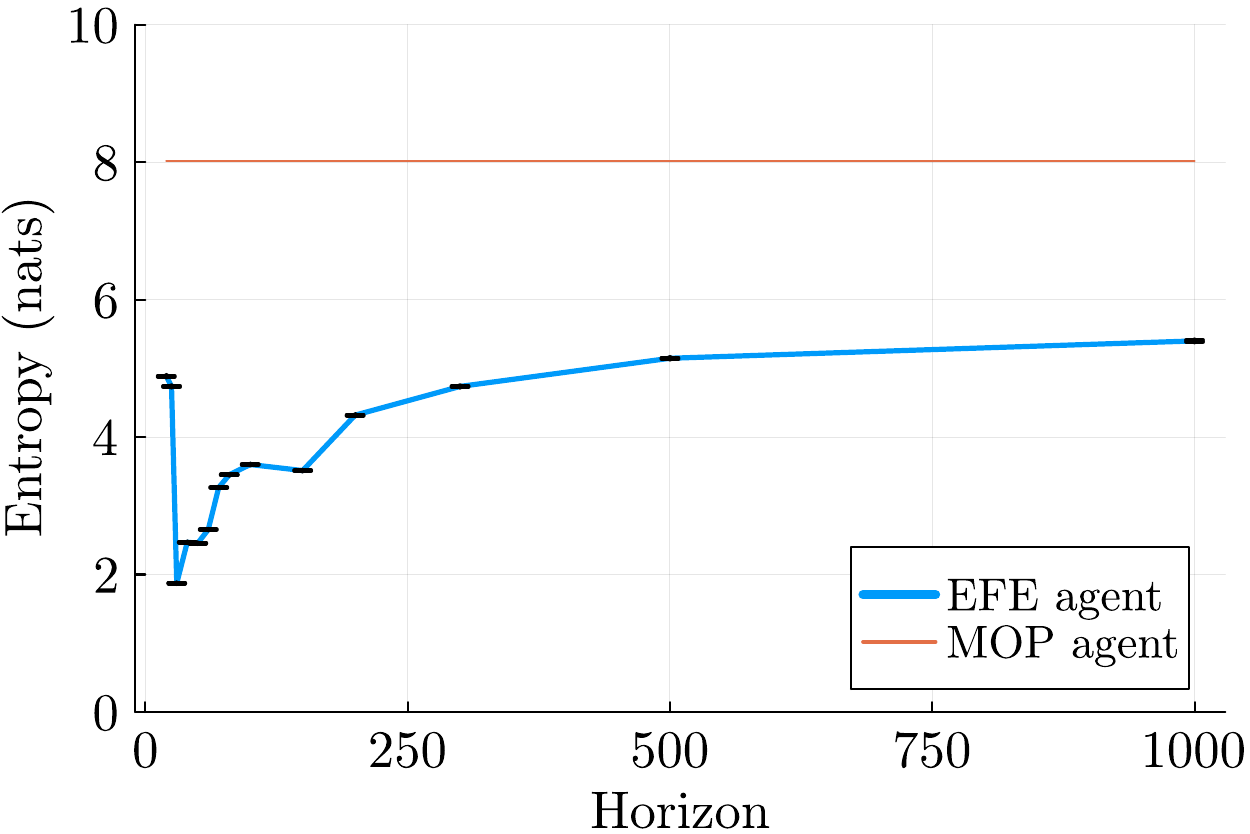}
    \end{minipage}%
    \begin{minipage}{0.65\textwidth}
        \centering
        \includegraphics[width=\linewidth]{./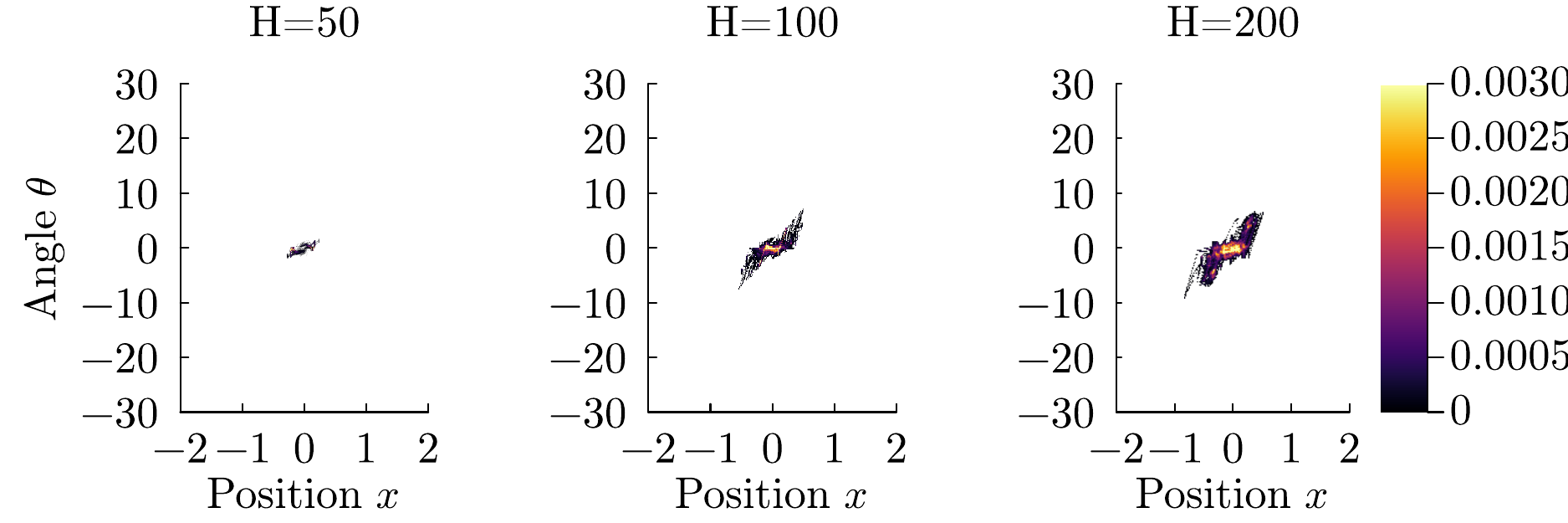}
    \end{minipage}
    \caption{(Left) Entropy of distribution of visited state space as a function of horizon shows EFE agent is never as good as the MOP agent in generating variability. (Right) Increasing the horizon lookahead makes EFE agent similar to the R agent described in the main manuscript (see Fig. \ref{fig:cartpole})}
    \label{fig:entropyEFE}
\end{figure}


\subsubsubsection{Discounted infinite-horizon sophisticated inference is identical to reward maximization under deterministic dynamics} 

Here, we show that under an infinite horizon, a discounted expected free energy that considers state-dependent policies in the future is equivalent to reward maximization under deterministic dynamics. We start with the same assumption as before that minimizing EFE optimally needs to consider future states where the agent is minimizing EFE. In a discounted, infinite horizon case, this becomes

\be
  G_{\pi}(s) = \sum_{s',a} 
        \pi(a|s) p(s'|s,a)
        \left[
        \log \frac{p(s'|s,a)}{q(s')}
        +
        \gamma G_{\pi}(s')
        \right]
        ,
        \label{eq:FEF-G-t_discounted}
\ee
where $\gamma < 1$.  Under deterministic dynamics $p(s' | s, a) = 1$ for only one state $s'$, i.e. $s' = s'(s,a)$. So we can rewrite the EFE as
\be
  G_{\pi}(s) = \sum_{a} 
        \pi(a|s)
        \left[
        -\log\left(q(s'(s,a))\right)
        +
        \gamma G_{\pi}(s'(s,a))
        \right]
        .
\ee
Asking to minimize $G$ is equivalent to maximizing $-G$, which means that the optimal Bellman equation for sophisticated active inference in this case turns to
\be
  G^*(s) = \max_a
        \left[
        \log\left(q(s'(s,a))\right)
        +
        \gamma G^*(s'(s,a))
        \right]
        .
\ee
Simply rewriting $\log (q(s'(s,a))) = r(s'(s,a))$ gives us the typical Belllman equation for MDPs.

In particular, when the preferred distribution is uniform on a finite portion of state space, under the presence of absorbing states outside this portion, this scheme is identical to survival maximization. This is because we can define $q(s'(s,a)) = 1/ V$, where $V$ is the volume of the portion of state space that is not absorbing, for $s'(s,a)$ that stays in this portion. For states outside this region, we can establish $q(s'(\text{absorbing})) \ll q(s'(\text{alive}))$, such that $\log (q)$ is bounded. Therefore, for long horizons, we expect the EFE agent to behave identically to our previously defined R agent that maximizes survival. We confirm this expectation in Supplemental Fig. \ref{fig:entropyEFE}.

\subsubsubsection{Details of simulations}\label{sec:supplemental_activeinference_details}

One can define a target distribution $q(s)$ through a Boltzmann distribution, instead of a hard maximization of rewards, as similar to what is done in soft RL \cite{haarnoja_reinforcement_2017,haarnoja_soft_2018}. The target distribution $q(s)$ can be defined as
\begin{equation}
    q_\lambda(s) = \frac{1}{Z_\lambda}\exp(\lambda R(s)),
\end{equation}
where $\lambda$ is an inverse temperature, which expresses how motivated the agent is to maximize reward \cite{da_costa_reward_2023}.

\paragraph{Grid world}
We take $R = \delta$ for being in the food and $R = 0$ otherwise. For a large temperature, $\lambda$ is small, and thus $q_\lambda(s)$ is very close to an uniform distribution --it has a little bump on the reward location.
Even a tiny bump breaks the symmetry of the EFE agent in deterministic environments such that it absolutely prefers the food source location, and thus behavior collapses to the occupancy of that single state (see Fig. \ref{fig:MPOW_EFE}). 

\paragraph{Cartpole}
We define the rewards similar to the R agent, $R = 1$ for non-absorbing states and $R = 0$ for absorbing states. This amounts to a uniform target distribution $q(s)$ over non-absorbing states.

 \subsection{Relationship to Maximum Entropy Reinforcement Learning and goal directedness}\label{sec:maxentRLgoal}
 
 The objective of maximizing action-state path entropy in Eq. (\ref{eq_expected_return}) for the special case $\beta=0$ can be obtained from the maximum entropy reinforcement learning (MaxEnt RL) formulation \cite{todorov_efficient_2009,ziebart_modeling_2010,haarnoja_soft_2018}
 \begin{equation}\label{eq:maxentRL}
      V_\pi (s) = \mathbb{E}_\pi \left[\sum_{t=0}^\infty \gamma^t \left(r(s_t,a_t) + \alpha \mathcal{H}(\pi(\cdot|s_t))\right)\Big| s_0 = s \right],
 \end{equation}
 by setting the reward $r(s,a)=0$ for all states and actions, and therefore there is no difference between the two approaches in this particular case. 
 However, this reduction obscures the fact that we can generate goal-directed behaviors in H-agents \emph{without} the need of specifying rewards --indeed, this is one of the main accomplishment of our work.
 To see this, we first quantify how a MaxEnt RL agent gets reward in the four-room grid world defined in Supplemental Sec. \ref{sec:4room}, as a function of the temperature parameter $\alpha$. In this case, a sensible goal is ``eating food'' (that is, defining $r(s,a)=1$ at the food locations, and zero everywhere else). Trivially, when $\alpha \ll 1$ in Eq. (\ref{eq:maxentRL}), the goal is simply to maximize the future expected reward, equivalent to the $\epsilon$-greedy R agent defined in Supplemental Sec. \ref{sec:r_agent}, for $\epsilon = 0$ (Figure \ref{fig:maxentRL}a, leftmost points). In contrast, for $\alpha \gg 1$, we recover the MOP agent in practice (due to the environment being deterministic). In this case, the agent mostly focuses on maximizing future expected entropy, and getting small eating rate (Figure \ref{fig:maxentRL}a, rightmost points). Therefore, the temperature $\alpha$ quantifies how ``goal directed'' the agent should be, where the goal here is understood as getting food, and the entropy term is understood as a regularizer that promotes exploration of the arena.

\begin{figure}[t]
\center
\includegraphics[width=\textwidth]{./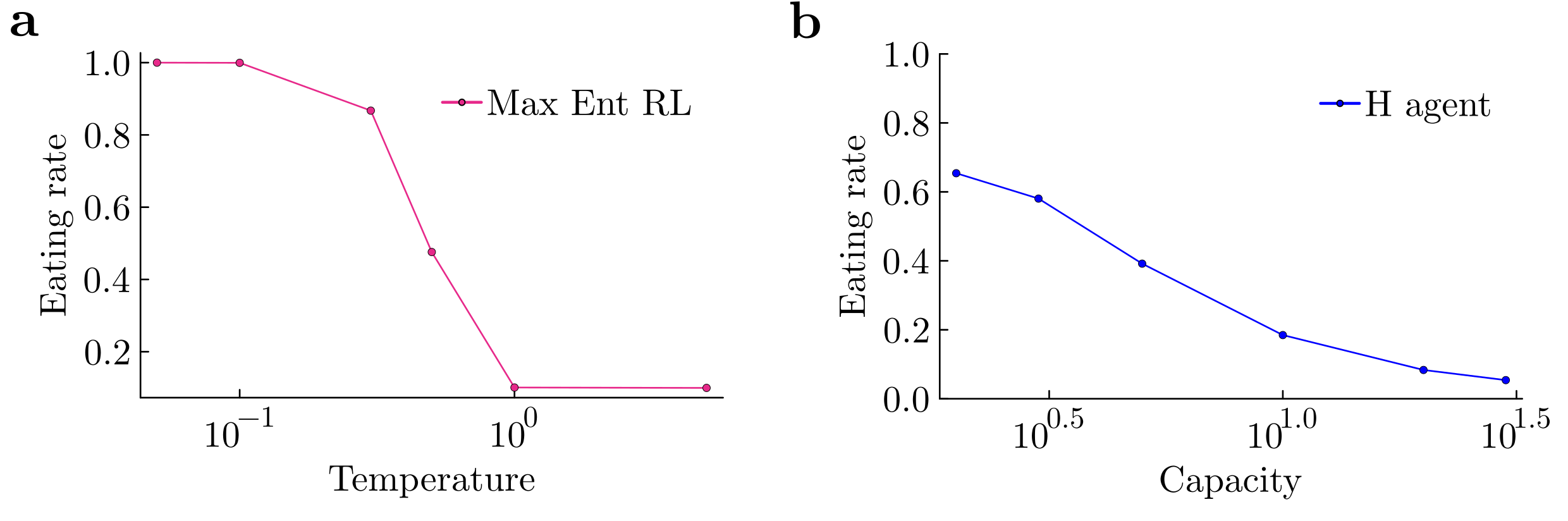} \newline
\\
 \caption{Reward is not necessary for "goal-directed" behavior. (a) Eating rate as a function of the temperature parameter $\alpha$ in Equation (\ref{eq:maxentRL}) for a MaxEnt RL agent in the four-room grid world. (b) Eating rate as a function of the capacity for a MOP agent in the four-room grid world.}
 \label{fig:maxentRL}
 \end{figure}

To aid in showing our central result that an extrinsic reward is not necessary for ``goal directed behavior'', we take the MOP agent and vary its energy capacity (see Supplemental Sec. \ref{sec:4room}). For large capacities, the MOP agent can largely ignore the food most of the time, obtaining small eating rate (Figure \ref{fig:maxentRL}b, right-most points). This is because food is conceived as the means to accomplish the goal of maximizing future path occupancy. In contrast, when capacity is small, the MOP agent needs to get the food much more frequently to avoid the absorbing state, thus getting much higher eating rates (Figure \ref{fig:maxentRL}b, leftmost points). The remarkably strong qualitative similarities between the two panels in the figure show that by reinterpreting the concept of reward, one can forego the need of specifying a reward function, and focus on more universal principles of behavior.

\subsection{Non-additivity of mutual information and channel capacity}
\label{sec:non-additivity-MI}

Here we show that mutual information over Markov chains does not obey the additive property. It suffices to prove our statement for paths of length two. Thus, we ask whether the mutual information between actions $(a_0,a_1)$ and states $(s_1,s_2)$ given initial state $s_0$

\be
\nonumber
\text{MI}_{\text{global}} = 
\sum_{a_0,a_1,s_1,s_2} p(a_0,s_1,a_1,s_2|s_0) 
\ln \frac{p(a_0,s_1,a_1,s_2|s_0)}{p(a_0,a_1|s_0)p(s_1,s_2|s_0)}
\ee

\noindent
equals the sum of the per-step mutual information

\be
\nonumber
\text{MI}_{\text{local}}  =      \sum_{a_0,s_1} 
p(a_0,s_1|s_0)  
\ln \frac{p(a_0,s_1|s_0) 
}{p(a_0|s_0)p(s_1|s_0)}
+
\sum_{a_0,a_1,s_1,s_2} p(a_0,s_1,a_1,s_2|s_0)
\ln \frac{p(a_1,s_2|s_1)}{p(a_1|s_1)p(s_2|s_1)}
\ee

\noindent
where $p(a_0,s_1,a_1,s_2|s_0)=\pi(a_0|s_0)p(s_1|s_0,a_0) 
\pi (a_1|s_1) p(s_2|s_1,a_1)$ and $p(a_0,s_1|s_0)=\pi(a_0|s_0)p(s_1|s_0,a_0)$.
Using Bayes' rule and the Markov property, the above quantities can be rewritten as

\bea
\nonumber
\text{MI}_{\text{global}} &= &
\sum_{a_0,a_1,s_1,s_2} p(a_0,s_1,a_1,s_2|s_0) 
\ln \frac{p(a_0,a_1|s_0,s_1,s_2)}{p(a_0,a_1|s_0)}
\\
\nonumber
&= &
\sum_{a_0,a_1,s_1,s_2} p(a_0,s_1,a_1,s_2|s_0) 
\ln \frac{p(a_0|s_0,s_1)p(a_1|s_1,s_2)}{p(a_0,a_1|s_0)}
\\
\nonumber
&= &
\sum_{a_0,a_1,s_1,s_2} p(a_0,s_1,a_1,s_2|s_0) 
\ln \frac{p(a_0|s_0,s_1)p(a_1|s_1,s_2)}{\pi(a_0|s_0) 
	p(a_1|s_0,a_0)}
\\
\nonumber
&= &
\sum_{a_0,a_1,s_1,s_2} p(a_0,s_1,a_1,s_2|s_0) 
\ln \frac{p(a_0|s_0,s_1)p(a_1|s_1,s_2)}{\pi(a_0|s_0) 
	\sum_s \pi(a_1|s) p(s|s_0,a_0)}
\\
\nonumber
&  = &  \sum_{a_0,s_1} 
p(a_0,s_1|s_0)  
\ln \frac{p(a_0|s_0,s_1)}{\pi(a_0|s_0)}
+
\sum_{a_0,a_1,s_1,s_2} p(a_0,s_1,a_1,s_2|s_0)
\ln \frac{p(a_1|s_1,s_2)}{ 
	\sum_s \pi(a_1|s) p(s|s_0,a_0)}
\eea

\noindent
and

\be
\nonumber
\text{MI}_{\text{local}}  =      
\sum_{a_0,s_1} 
p(a_0,s_1|s_0)  
\ln \frac{p(a_0|s_0,s_1) 
}{\pi(a_0|s_0)}
+
\sum_{a_0,a_1,s_1,s_2} p(a_0,s_1,a_1,s_2|s_0)
\ln \frac{p(a_1|s_1,s_2)}{\pi(a_1|s_1)}
\ee

The quantities $\text{MI}_{\text{global}}$ and $\text{MI}_{\text{local}}$ are remarkable similar except for the denominator in the $\ln$ of the last term in each expression. Therefore, equality between $\text{MI}_{\text{global}}$ and $\text{MI}_{\text{local}}$ holds iff

\be
\nonumber
\sum_{a_0,a_1,s_1,s_2} p(a_0,s_1,a_1,s_2|s_0)
\ln \sum_s \pi(a_1|s) p(s|s_0,a_0)
  =      
\sum_{a_0,a_1,s_1,s_2} p(a_0,s_1,a_1,s_2|s_0)
\ln \pi(a_1|s_1)
,
\ee

\noindent
which is not true for all choices of policy and transitions probabilities. To see this, take a Markov chain where the action $a_0=0$ from $s_0=0$ is deterministic, but results in two possible successor states $s_1=1$ or $s_1=2$ with equal probability $1/2$. From $s_1=1$ the policy takes actions $a_1=1$ and $a_1=2$ with probability $1/2$. From $s_1=2$ the policy is deterministic, that is, $a_1=3$ with probability $1$. A simple calculation shows that the left side equals $-\frac{3}{2} \ln 2$, while the right side equals a different quantity, $-\frac{1}{2} \ln 2 $. 

\subsection{Video captions}
\paragraph{Video 1} Animation of a portion of an episode comparing the behaviors of the MOP agent and the $\epsilon$-greedy R agent for the four-room grid world environment (see main text, Fig. \ref{fig:fourrooms}, for more details).

\paragraph{Video 2} Animation of a portion of an episode of the MOP agent (mouse) behaving in the predator-prey scenario detailed in Fig. \ref{fig:cat_mouse}.

\paragraph{Video 3} Animation of a portion of an episode of the R agent (mouse) behaving in the predator-prey scenario detailed in Fig. \ref{fig:cat_mouse}.

\paragraph{Video 4} Animation of a portion of an episode comparing the behaviors of the MOP agent and the R agent for the cartpole experiment detailed in Fig. \ref{fig:cartpole}.

\paragraph{Video 5} Animation of the state space trajectories (in an angle-position projection) traveled by the MOP and the R agents from Video 4, sped up four times the original frame rate.

\paragraph{Video 6} Animation of a portion of an episode comparing the behaviors of the MOP agent and the lifetime-matching $\epsilon-$greedy R agent for the cartpole experiment detailed in Fig. \ref{fig:cartpoleeps}.

\paragraph{Video 7} Animation of a portion of an episode comparing the behaviors of the MOP agent and the MPOW and EFE agents for the four-room grid world environment (see main text, Fig. \ref{fig:MPOW_EFE}, for more details).

\paragraph{Video 8} Animation of a portion of an episode comparing the behaviors of the MOP agent and the MPOW and EFE agents for the cartpole experiment (corresponding to Fig. \ref{fig:MPOW_EFE}).

\paragraph{Video 9} Animation of a portion of an episode comparing the behaviors of the MOP agent and the R agent for the quadruped experiment without energetic constraints (corresponding to upper row of Fig. \ref{fig:ant}).

\paragraph{Video 10} Animation of a portion of an episode comparing the behaviors of the MOP agent and the R agent for the quadruped experiment with energetic constraints (corresponding to lower row of Fig. \ref{fig:ant}).

\end{document}